\documentclass{article}
\usepackage[T1]{fontenc}
\usepackage[utf8]{inputenc}
\usepackage{lmodern}
\usepackage{fullpage}
\usepackage{url}            
\usepackage{booktabs}       
\usepackage{nicefrac}       
\usepackage{microtype}      
\usepackage{multirow}

\setlength{\parindent}{0pt}
\setlength{\parskip}{0.2\baselineskip}

\usepackage[T1]{fontenc}
\usepackage{amsmath,amssymb,amsfonts,amsthm,textcomp,graphicx,nicefrac,mathtools,mathrsfs, dsfont} 
\usepackage{tikz}

\usepackage{enumitem}
\usepackage{bbm}
\usepackage[utf8]{inputenc}
\usepackage{enumitem}

\usepackage{algorithm}
\usepackage[noend]{algpseudocode}

\makeatletter
\def\BState{\State\hskip-\ALG@thistlm}
\makeatother




\def\<{{\langle}}
\def\>{{\rangle}}

\renewcommand{\exp}[1]{\operatorname{exp}\left(#1\right)} 
\providecommand{\argmin}{\mathop\mathrm{arg\, min}}



\newcommand{\indi}{\mathds{1}}



\def\P{\mathbb{P}} 





\usepackage{graphicx}



\newtheoremstyle{dotless}{}{}{\itshape}{}{\bfseries}{}{ }{}
\theoremstyle{dotless}

\newtheorem*{defi}{Definition}
\theoremstyle{plain}

\newtheorem{myth}{Theorem}

\newtheorem{mylem}[myth]{Lemma}
\newtheorem{mycor}[myth]{Corollary}

\newtheoremstyle{named}{}{}{\itshape}{}{\bfseries}{.}{.5em}{#1 #3}
\theoremstyle{named}
\newtheorem*{namthm*}{Theorem}


\usepackage{chngcntr}
\makeatletter 

\makeatother
\usepackage[colorlinks=true, citecolor = blue]{hyperref}
\usepackage{cleveref}
\crefname{myth}{Theorem}{Theorems} 

\newcounter{parentnumber}

\usepackage[
backend=biber,
style=alphabetic,
maxalphanames=4,
useprefix=true,
sorting=anyt,
maxbibnames=99
]{biblatex}
\addbibresource{Online_LWA.bib}

\usepackage{enumitem}
\usepackage{graphicx}

\usepackage{todonotes}


\newcommand{\bp}{\overline{p}}

\usepackage{wrapfig}
\usepackage{csquotes}

\newcommand{\dk}{\boldsymbol{\bot}}
\newcommand{\hist}{\mathscr{H}}
\renewcommand{\phi}{\varphi}
\newcommand{\vs}{\mathcal{V}}
\newcommand{\pred}{\widehat{Y}}
\newcommand{\err}[1]{\widehat{Y}_{#1} \not\in\{\dk, Y_{#1}\} }
\renewcommand{\epsilon}{\varepsilon}

\title{Online Selective Classification with Limited Feedback}
\author{Aditya Gangrade, Anil Kag, Ashok Cutkosky, Venkatesh Saligrama \\ Boston University\\ {\small \texttt{gangrade@bu.edu}, \texttt{anilkag@bu.edu}, \texttt{ashok@cutkosky.com}, \texttt{srv@bu.edu} }}
\date{\vspace{-\baselineskip}}

\begin{document}

\maketitle

\begin{abstract}
    Motivated by applications to resource-limited and safety-critical domains, we study selective classification in the online learning model, wherein a predictor may abstain from classifying an instance. For example, this may model an adaptive decision to invoke more resources on this instance. Two salient aspects of the setting we consider are that the data may be non-realisable, due to which abstention may be a valid long-term action, and that feedback is only received when the learner abstains, which models the fact that reliable labels are only available when the resource intensive processing is invoked.
    
    Within this framework, we explore strategies that make few mistakes, while not abstaining too many times more than the best-in-hindsight error-free classifier from a given class. That is, the one that makes no mistakes, while abstaining the fewest number of times. We construct simple versioning-based schemes for any $\mu \in (0,1],$ that make most $T^\mu$ mistakes while incurring \smash{$\tilde{O}(T^{1-\mu})$} excess abstention against adaptive adversaries. We further show that this dependence on $T$ is tight, and provide illustrative experiments on realistic datasets. 
\end{abstract}

\section{Introduction}

Consider a low-power or battery-limited edge device, such as a sensor or a smart-speaker that receives a stream of classification requests. Due to the resource limitations, such a device cannot implement modern models that are needed for accurate decisions. Instead the device has access (e.g.~via an internet connection) to an accurate but resource-intensive model implemented on a cloud server, and may send queries to the cloud server in order to retain accuracy. Of course, this incurs costs such as latency and battery drain due to communication. The ideal operation of such a device should thus be to learn a rule that classifies `easy' instances locally, while sending harder ones to the cloud, thus maintaining accuracy whilst minimising the net resource consumption \cite{JMLR:v15:xu14a,nan2017adaptive}.

Selective classification \cite{Chow57, chow1970optimum} is a classical paradigm of relevance to such settings. The setup allows a predictor to abstain from classifying some instances (without incurring a mistake). This abstention models adaptive decisions to invoke more resource-intensive methods on subtle cases, like in the above example. The solution concept is relevant widely - for instance, it is relevant to adaptively recommending further (and costly) tests rather than offering a diagnosis in a medical scenario, or to recommending a human review instead of an alarm-or-not decision in security contexts. Two aspects of such settings are of particular interest to us. Firstly, the cheaper methods are typically not sufficient to realise the true labels, due to which abstention may be a long-term necessity. Secondly, a-priori reliable labels can only be obtained by invoking the resource intensive option, and thus feedback on whether a non-abstaining decision was correct is unavailable.

We propose online selective classification, with an emphasis on ensuring very few mistakes, to account for the need for very accurate decisions. Concretely, an adversary sequentially produces contexts and labels $(X_t, Y_t),$ and the learner uses the $X_t$s to produce a decision $\pred_t$ that may either be one of $K$ classes, or an abstention, which we represent as $\dk$. Feedback in the form of $Y_t$ is provided if and only if $\pred_t = \dk,$ and the learner incurs a mistake if $\pred_t$ was non-abstaining and did not equal $Y_t$. 

With the emphasis on controlling the total number of mistakes, we study regrets achievable when compared to the behaviour of the best-in-hindsight error-free selective classifier from a given class - that is, one that makes no mistakes, while abstaining the fewest number of times. Notice that our situation is non-realisable, and therefore this competitor may abstain in the long-run. The two metrics of importance here are the number of mistakes the learner makes, and its excess abstention over this competitor. An effective learner must control both abstention and mistakes, and it is not enough to make one small, e.g.~a learner that makes a lot of mistakes but incurs a very negative excess abstention is no good. This \emph{simultaneous} control of two regrets raises particular challenges.

We construct a simple scheme that, when competing against finite classes, simultaneously guarantees $O(T^\mu)$ mistakes and $O(T^{1-\mu})$ excess abstentions against adaptive adversaries (for any $\mu\in[0,1]$), and show that these rates are Pareto-tight \cite{osborne1994course}. We further show that against stochastic adversaries, the same rates can be attained with improved dependence of the regret bounds on the size of the class, and we also describe schemes that enjoy similar improvements against adaptive adversaries, but at the cost of the $T$-dependence of the regret bounds. The main schemes randomly abstain at a given rate in order to gain information, and otherwise play \smash{$\pred_t$} consistent with the `version space' of classifiers that have not been observed to make mistakes. For the adversarial case, the analysis of the scheme relies on a new `adversarial uniform law of large numbers'(ALLN) to argue that such methods cannot incur too many mistakes. This ALLN uses a self-normalised martingale concentration bound, and further yields an adaptive continuous approximation guarantee for the Bernoulli-sampling sketch in the sense of Ben-Eliezer \& Yogev \cite{ben2020adversarial, alon2021adversarial}. The theoretical exploration is complemented by illustrative experiments that implement our scheme on two benchmark datasets.

\subsection{Related Work}

Selective classification has been well studied in the batch setting, and many theoretical and methodological results have appeared \cite[e.g.][]{herbei-wegkamp, bartlett2008classification, el2010foundations, wiener2011agnostic,kalai2012reliable, cortes2016learning, Lei_class_w_confidence, geifman2019selectivenet, gangrade2021selective}. These batch results do not have strong implications for the online setting.

Cortes et al.~have studied selective classification in the online setting \cite{cortes2018online}, but with two differences from our setting. Firstly, rather than individually controlling mistakes and abstentions, the regret is defined according to the Chow loss, which adds up the number of mistakes and $c$ times the number of abstentions, where $c$ is a fixed cost parameter. Secondly (and more importantly) it is assumed that feedback is provided only when the learner \emph{does not} abstain, rather than only when it does. This difference arises from the underlying situations being modelled - Cortes et al.~view the abstention as a decision given to a user in which case no feedback is forthcoming, while we view it as a decision to invoke further processing. Both of the scenarios are reasonable, and so both of these explorations are valid, however it is unclear what implications one set of results have for the other. 

{A similar decision and feedback model as ours was proposed by Li et al. in the `knows what it knows' (KWIK) framework \cite{li2011knows}. The KWIK model, however, fundamentally views abstentions as a short term action, typically arguing that only a finite number of these are made. This is viable since Li et al. study this model in an essentially realisable setting, wherein the optimal labels are known to be essentially realised by a given class - notice that in such a case, a single abstention at an instance $x$ determines what value should be played there in the long run. Our interest however lies in the situation where this data cannot be represented in such a way, and such strategies are not viable since the labels may be noisy. Our work thus generalises the KWIK setting to non-realisable data, and to situations wherein abstention is a valid long-term action, as motivated in the introduction, by studying behaviour against competitors that may abstain.\footnote{The KWIK model also bears other significant differences. It posits an input parameter $\epsilon,$ and requires that the learner either abstains, or produces an $\epsilon$-accurate response. A notion of competitor is not invoked, and rather than studying regret, the number of abstentions needed to achieve this $\epsilon$-accuracy is studied.}}

While Szita and Szepesv\'{a}ri have extended the KWIK formulation to the agnostic case in a regression setting \cite{szita2011agnostic}, this work also focuses of limiting the number of abstentions to be finite rather than long-run abstentions. Concretely it is assumed that $Y_t = g(X_t) + \textrm{noise},$ for some function $g$, and the learner knows a class $\mathcal{H}$, and a bound $\Delta$ such some $h \in \mathcal{H}$ is $\Delta$-close to $g$ (in an appropriate norm). Using the knowledge of $\Delta,$ they describe schemes that have limited abstention, but at the cost of mistakes, by producing responses $\hat{Y}_t$ that are up to $(2+o(1))\Delta$ separated from $Y_t$. In contrast, in our formulation, contexts $X_t$ for which no function in $\mathcal{H}$ can represent the ground truth $g$ well would always be abstained upon. In addition to this work, trade-offs between mistakes and abstentions in a relaxed version of the KWIK framework have been considered \cite{zhang2016extended, sayedi2010trading, demaine2013learning}, and in particular the agnostic case has been explored by Zhang and Chaudhuri \cite{zhang2016extended}, but unlike our situation this relaxed KWIK model requires full-feedback to be available whether or not the learner abstains. Neu and Zhivotovskiy \cite{neu2020fast} also work in this relaxed model, and show that when comparing the standard loss of a \emph{non-abstaining} classifier against the Chow loss of an abstaining learner, regrets independent of time can be obtained.

Due to the limited feedback, our setting is related to partial-monitoring \cite[Ch. 37]{lattimore2020bandit}. Viewing actions as choices over functions, our setting has feedback graphs \cite{Mannor_feedback} that connect abstaining actions to every other action and themselves. The novelty with respect to partial-monitoring arises from the fact that we individually control two notions of losses, rather than a single one. It's unclear how to apply the generic partial-monitoring setup to this situation - indeed, na\"{i}vely, our game is only weakly observable in the sense of Alon et al.\cite{alon2015online}, and one would expect $\Omega(T^{2/3})$ regrets, while we can control both mistakes and excess abstention to $\tilde{O}(\sqrt{T})$. A limited feedback setting where two `losses' \emph{are} individually controlled is label-efficient prediction \cite{cesa2005minimizing}, where a learner must query in order to get feedback. However, in our setting, abstentions are both a way to gather feedback, and also necessary to prevent mistakes. That is, our competitor may abstain regularly, but makes few mistakes, while in this prior work the competitor does not abstain, but may make many mistakes. The resulting scenario is both qualitatively and quantitatively distinct, e.g.~in label-efficient prediction, the smallest symmetric rate of number of queries and excess mistakes is again $\Theta(T^{2/3})$.

\section{Setting, and Problem Formulation}\label{sec:setting}

\textbf{Setup} Let $\mathcal{X}$ be a feature space, $\mathcal{Y}$ a finite set of labels, and $\mathcal{F}$ a finite class of selective classifiers, which are $\mathcal{Y} \cup \{\dk\}$ valued. For simplicity, we assume that $\mathcal{F}$ contains the all abstaining classifier (i.e. the function $f_{\dk}$ such that $\forall x, f_{\dk}(x) = \dk$). We will denote $|\mathcal{F}| = N$. The setting may be described as a game between a learner and an adversary (or more prosaically, a data generating mechanism) proceeding in $T$ rounds. Also for simplicity, we will assume that $T$ is known to both the learner and the adversary in advance. The objects in this game are the context process, $X_t \in \mathcal{X}$, the label process $Y_t \in \mathcal{Y}$, the action process \smash{$\pred_t \in \mathcal{Y} \cup \{\dk\}$} and the feedback process $Z_t \in \mathcal{Y} \cup \{*\},$ where $* \not\in\mathcal{Y}$ is a trivial symbol. The information sets of the adversary and learner up to the $t$th round are respectively \smash{$\hist^{\mathfrak{A}}_{t-1} := \{(X_s, Y_s, \widehat{Y}_s): s<t\},$ and $\hist^{\mathfrak{L}}_{t-1} := \{(X_s, \widehat{Y}_s, Z_s): s < t\}.$}\\

\textbf{The Game} For each round $t \in [1:T],$ the adversary produces a context and a label $(X_t, Y_t)$ on the basis its history \smash{$\hist^{\mathfrak{A}}_{t-1}$}. The learner observes only the context, $X_t$, and on the basis of this and its history \smash{$\hist^{\mathfrak{L}}_{t-1}$, produces an action $\pred_t$. We will say that this action is an abstention if $\pred_t = \dk,$} and that it is a prediction otherwise. If the action was an abstention, set $Z_t = Y_t$, and otherwise to $*$. The learner then observes $Z_t,$ and the round concludes. Notice that since $Z_t$ is a deterministic function of $Y_t$ and \smash{$\widehat{Y}_t$, and since the adversary observes both, $\hist^{\mathfrak{L}}_{t-1}$ can be determinstically generated from $\hist^{\mathfrak{A}}_{t-1}$}. Due to the same reason, \smash{$\widehat{Y}_t$ and $Y_t$ are conditionally independent given $(X_t, \hist^{\mathfrak{A}}_{t-1})$.}\\

\textbf{Adversaries} are characterised by a sequence of conditional laws on $(X_t,Y_t)$ given $\hist^{\mathfrak{A}}_{t-1}$ (and $T,\mathcal{F}$). In the following we will explicitly consider two classes of such laws:\begin{itemize}[nosep]
    \item Stochastic Adversary: $(X_t, Y_t)$ are drawn according to a fixed law, $P,$ unknown to the learner, independently of $\hist^{\mathfrak{A}}_{t-1}$.
    \item Adaptive Adversary: $(X_t,Y_t)$ are arbitrary random variables with $\hist^{\mathfrak{A}}_{t-1}$-measurable laws.
\end{itemize}
We will denote a generic class of adversaries as $\mathscr{C}$.\\

\textbf{Performance Metrics} The two principal quantities of interest are the number of mistakes made by the learner, and the number of times it has abstained. We will denote these as \begin{align*}
    M_T := \sum_{t \le T} \indi\{ \widehat{Y}_t \not\in \{ \dk, Y_t\} \},\quad\textit{ and } \quad A_T := \sum_{t \le T} \indi\{ \widehat{Y}_t = \dk\}.
\end{align*}
As previously discussed, the performance of a learner is measured in terms of regret with respect to the best-in-hindsight abstaining classifier from $\mathcal{F}$ that makes no mistakes, that is \[ f^* \in \argmin_{f \in \mathcal{F}} \sum_{t \le  T} \indi\{f(X_t) = \dk\} \quad\textrm{s.t. }\quad  \sum_{t\le T} \indi\{f(X_t) \not\in \{\dk, Y_t\} \} = 0 . \] Note that such an $f^*$ is always realised, since the class is finite, and since it contains the all abstaining classifier. Let $A_T^* := \sum_{t \le T} \indi\{f^*(X_t) = \dk\}$ denote the value of the minimum above. The principal metrics of interest to us are the \emph{abstention regret} $A_T - A_T^*,$ and the \emph{total mistakes} $M_T$.\\

\textbf{Solution Concept} The two performance metrics naturally involve a tradeoff - for instance, making some mistakes may allow a learner to drastically reduce its abstention regret to the point that it is negative. We pursue the trade-off between the worst possible behaviour of either regret.\begin{defi} \emph{(Regret Achievability)} For functions $\phi, \psi: \mathbb{N}^2 \to \mathbb{R},$ we say that expected regret bounds of $(\phi, \psi)$ are achievable against a class of adversaries $\mathscr{C}$ if there exists a learner such that for every adversary in $\mathscr{C}$, $\mathbb{E}[A_T- A_T^*] \le \phi(T,N)$ and $\mathbb{E}[M_T] \le \psi(T,N)$. \end{defi}
As is common, we are interested in the growth rates of achievable bounds with $T$. We thus define \begin{defi} \emph{(Achievable rates)} we say that asymptotic expected-regret rates of $(\alpha, \mu) \in [0,1]^2$ are achievable against a class of adversaries $\mathscr{C}$ if an expected regret bound of $(\phi,\psi)$ can be achieved against it for functions $\phi,\psi$, said to be witnesses for the rate, such that \[ \limsup_{T \to \infty} \frac{\log \phi(T,N)}{\log T} \le \alpha \quad \textit{and} \quad \limsup_{T \to \infty} \frac{\log \psi(T,N)}{\log T} \le \mu. \] \end{defi}
Notice that if $(\alpha, \mu)$ is an achievable rate, so is $( \alpha',\mu')$ for $\alpha'\ge \alpha, \mu' \ge \mu$. As a result, the lower boundary of the set of achievable rates is well defined, and we will refer to this as the \emph{Pareto frontier of achievable rates}. This is equivalently characterised by the function $\underline{\alpha}(\mu) := \inf \{ \alpha: (\alpha, \mu) \textrm{ is an achievable rate}\}.$ This is well defined since $\forall \mu, (1,\mu)$ is achievable by always abstaining.

\section{The Adversarial Case}\label{sec:adv}

We begin with the adversarial case. The scheme, called the `versioned uniform explorer' (\textsc{vue}) is described below, and we discuss both the motivation of the scheme, and its analysis.

The main idea underlying \textsc{vue} is that any function $f$ that is observed to make a mistake on an instance $X_t$ (due to the learner abstaining on this instance) can be removed from future consideration, since we are only trying to match the behaviour of the competitor $f^*$, and clearly $f \neq f^*$ as it has made a mistake. This motivates setting up a `version space,' \[ \mathcal{V}_t:= \left\{ f:  \sum_{s < t} \indi\{Z_s\neq *, f(X_s) \not\in \{\dk, Y_s\} \} = 0 \right\},\] the set of functions that are consistent with the observations made up to time $t$. Notice that $f^* \in \vs_t$ for all $t$. Given $\vs_t$, we can restrict to playing an action in the set \smash{$\widehat{\mathcal{Y}}_t := \{ f(X_t) : f \in \mathcal{V}_t\}$} - $f^*(X_t)$ lies in this set, and thus any action outside of it can be eliminated. Of course, if \smash{$\widehat{\mathcal{Y}}_t$} is a singleton, then it contains $f^*(X_t)$, and we can just play it. 

\begin{wrapfigure}[16]{r}{0.4\textwidth}
\vspace{-\baselineskip} \begin{minipage}{0.4\textwidth}
\begin{algorithm}[H]
        \caption{\textsc{vue}}\label{alg:versioning}
        \begin{algorithmic}[1]
            \State \textbf{Inputs}: $\mathcal{F},$ Exploration rate $p$.
            \State \textbf{Initialise}: $\mathcal{V}_1 \gets \mathcal{F}$.
            \For{$t \in [1:T]$}
                \State $\widehat{\mathcal{Y}}_t \gets \left\{ f(X_t): f \in \mathcal{V}_t\right\}.$
                \If{$|\widehat{\mathcal{Y}}_t| = 1$}
                \State $\widehat{Y}_t \gets f(X_t)$ for any $f \in \mathcal{V}_t$.
                \State $\mathcal{V}_{t+1} \gets \mathcal{V}_t$.
                \Else 
                    \State Sample $C_t \sim \mathrm{Bern}(p)$.
                    \If{$C_t = 1$}
                        \State Set $\widehat{Y}_t = \dk,$ observe $Y_t$.
                        \State $\mathcal{U}_{t} \gets \{f: f(X_t) \in \{\dk,Y_t\}\}$
                        \State $\mathcal{V}_{t+1} = \mathcal{V}_t \cap \mathcal{U}_t$.
                    \Else 
                    \State Pick $\widehat{Y}_t \in \widehat{\mathcal{Y}}_t \setminus \{\dk\}$.
                    \State $\mathcal{V}_{t+1} \gets \mathcal{V}_t$.
                    \EndIf
                \EndIf
            \EndFor
        \end{algorithmic}
    \end{algorithm}
    \end{minipage}
\end{wrapfigure}
Next, since we are incentivised to minimise the total number of abstentions, it behooves us to play non-abstaining actions whenever possible. However, this puts us in a bind, since feedback is produced only when we play an abstaining action. Taking inspiration from \cite{cesa2005minimizing}, we abstain at a rate $p$ by tossing a biased `exploratory coin', $C_t$, abstaining when $C_t=  1$, and otherwise playing any non-abstaining action in $\widehat{\mathcal{Y}}_t$. Clearly, such a strategy can incur at most $pT$ excess abstention regret in expectation. Mistakes made by this strategy are controlled via the following `adversarial law of large numbers' (ALLN).
\begin{mylem}\label{lem:alln} Let $\{\mathscr{F}_t\}_{t= 1}^\infty$ be any filtration, and $\{U_t\}_{t= 1}^\infty, \{B_t\}_{t= 1}^\infty$ be $\{\mathscr{F}_t\}$-adapted binary processes, such that $B_t\sim \mathrm{Bern}(p),$ $p< \nicefrac{1}{2}$ is jointly independent of $\mathscr{F}_{t-1}, U_t$ for each $t$. Let $W_t = \sum_{s \le t} U_s,$ and $\widetilde{W}_t = \sum_{s \le t} U_s B_s$. For any $\delta \in (0, \nicefrac{1}{\sqrt{e}}),$ \[ \mathbb{P}\left( \exists t : \widetilde{W}_t \le 1 , W_t > \frac{8\log(1/\delta)}{p} \right) \le \delta.\] 
\end{mylem}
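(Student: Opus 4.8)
The plan is to control every event $\{\widetilde W_t\le 1,\ W_t\ \text{large}\}$ at once by exhibiting a single nonnegative $\{\mathscr{F}_t\}$-martingale and applying Ville's maximal inequality; this is the self-normalised device mentioned above. Fix $\lambda>0$ (I will eventually set $\lambda=1$) and define
\[ L_t \;:=\; \frac{e^{-\lambda \widetilde W_t}}{\bigl(1-p(1-e^{-\lambda})\bigr)^{W_t}} \;=\; \prod_{s\le t}\frac{e^{-\lambda U_s B_s}}{\bigl(1-p(1-e^{-\lambda})\bigr)^{U_s}},\qquad L_0:=1 .\]
The first step is to verify that $(L_t)$ is a martingale with respect to $\{\mathscr{F}_t\}$. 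Since $L_{t-1}$ is $\mathscr{F}_{t-1}$-measurable it suffices to show the $t$-th factor has conditional mean $1$ given $\mathscr{F}_{t-1}$. I would condition additionally on $U_t$ and use that $B_t\sim\mathrm{Bern}(p)$ is independent of $(\mathscr{F}_{t-1},U_t)$: a one-line check of the two cases $U_t\in\{0,1\}$ gives $\mathbb{E}\bigl[e^{-\lambda U_tB_t}\mid \mathscr{F}_{t-1},U_t\bigr]=\bigl(1-p(1-e^{-\lambda})\bigr)^{U_t}$, so the factor integrates to $1$, and the tower property closes it. Note $L_t\ge 0$, $L_0=1$, and $L_t$ is bounded (by $(1-p(1-e^{-\lambda}))^{-t}$), hence integrable.

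Second, Ville's inequality gives $\mathbb{P}(\sup_t L_t\ge 1/\delta)\le\delta$, so it remains to check that the bad event $\{\exists t: \widetilde W_t\le 1,\ W_t>8\log(1/\delta)/p\}$ is contained in $\{\sup_t L_t\ge 1/\delta\}$. Taking logs in the definition of $L_t$ and writing $a:=-\log\bigl(1-p(1-e^{-\lambda})\bigr)>0$ gives $\log L_t=aW_t-\lambda\widetilde W_t$; if $\widetilde W_t\le 1$ then $\log L_t\ge aW_t-\lambda$, so $W_t>(\lambda+\log(1/\delta))/a$ forces $L_t\ge 1/\delta$. Thus it is enough to choose $\lambda$ with $(\lambda+\log(1/\delta))/a\le 8\log(1/\delta)/p$. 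Taking $\lambda=1$: the bound $-\log(1-x)\ge x$ gives $a\ge p(1-e^{-1})\ge p/2$, and $\delta<1/\sqrt e$ gives $\log(1/\delta)>1/2$, hence $1+\log(1/\delta)\le 3\log(1/\delta)$; combining these, $(\lambda+\log(1/\delta))/a\le 6\log(1/\delta)/p$, which is comfortably below the claimed threshold.

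The one genuine subtlety — and the reason the normalisation is needed — is the measurability mismatch: $U_t$ is $\mathscr{F}_t$- but not $\mathscr{F}_{t-1}$-measurable, so the compensator $(1-p(1-e^{-\lambda}))^{W_t}$ in $L_t$ is random and tracks the \emph{realised} number of informative rounds rather than its predictable mean; I expect getting this conditioning right to be the only delicate point, and it is handled by the extra conditioning on $(\mathscr{F}_{t-1},U_t)$ plus independence of the fresh coin $B_t$. Everything else (Ville's inequality, $-\log(1-x)\ge x$) is routine; the constant $8$ is loose, and the hypothesis $p<1/2$ enters only through constants. As an aside, one could instead change clock to the successive rounds with $U_s=1$, on which the $B_s$ become i.i.d.\ $\mathrm{Bern}(p)$ by an optional-stopping argument, and conclude via a Chernoff bound at a stopping time; but the exponential-martingale route above avoids the attendant measure-theoretic bookkeeping.
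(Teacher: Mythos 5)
Your proof is correct: the martingale verification (conditioning on $(\mathscr{F}_{t-1},U_t)$ and using that the fresh coin $B_t$ is independent of that $\sigma$-algebra, with the two-case check exploiting $U_t\in\{0,1\}$), the Ville step, and the arithmetic with $\lambda=1$, $-\log(1-x)\ge x$, $1-e^{-1}\ge\nicefrac12$ and $\log(1/\delta)>\nicefrac12$ all go through, and indeed yield the threshold $6\log(1/\delta)/p$, slightly better than the stated $8\log(1/\delta)/p$. The underlying idea is the same as in the paper — a self-normalised exponential martingale whose compensator is proportional to the \emph{realised} count $W_t$ rather than a predictable variance — since your $L_t$ satisfies $\log L_t = aW_t-\lambda\widetilde{W}_t$, which is exactly the family $\xi_t^{\eta}=\exp\{\eta(W_t-\widetilde{W}_t/p)-\kappa(\eta)\tfrac{\overline{p}}{p}W_t\}$ of the paper's Lemma \ref{lem:self_norm} up to reparameterisation. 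Where you diverge is in how the maximal inequality is deployed: you fix a single $\lambda$ and apply Ville's inequality directly, giving a short, self-contained argument with an explicit constant, whereas the paper feeds its martingale into the sub-binary/sub-gamma framework and the line-crossing inequality of Howard et al., then tunes the parameters $x,m$. Your route buys elementarity and a marginally sharper constant; the paper's buys reusability — the same sub-gamma setup is stitched in \S\ref{appx:alln} into the two-sided Bernstein-type law of iterated logarithms controlling $|W_t-\widetilde{W}_t/p|$ uniformly in time, which a one-sided, fixed-$\lambda$ Ville argument does not give directly. One cosmetic remark: the hypothesis $p<\nicefrac12$ is not actually needed in your argument (as you note, it only affects constants), and your claim that the compensator being random is the "delicate point" is exactly the content of the paper's Lemma \ref{lem:self_norm}, so you have identified the right crux.
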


The above is argued in \S\ref{appx:alln} using a self-normalised martingale tail inequality \cite{howard2020time_uniform_chernoff}. We note that this self-normalisation is critical, and without this techniques such as Freedman's inequality yield an extraneous $\sqrt{T}$ factor in the bounds that is untenable for our purposes. The same argument, along with the shaping technique of Howard et al.~\cite{howard2018confidence_sequences} yields a Bernstein-type law of iterated logarithms that controls $|W_t - \nicefrac{\widetilde{W}_t}{p}|$ at a level $\tilde{O}(1/p + \sqrt{ W_t/p \log\log t}),$ which should be useful more broadly. This full version (presented in \S\ref{appx:alln}) further shows that the `Bernoulli-sampler' \cite{ben2020adversarial, alon2021adversarial} offers a continuous approximation in the sense of Ben-Eliezer \& Yogev \cite{ben2020adversarial}, but with the error for sets of low incidence flattened as expected due to Bernstein's inequality.

{For our purposes, the point of Lemma \ref{lem:alln} is to allow us to argue that no matter what the adversary does, if we uniformly abstain at a rate $p$, then we will `catch' any mistake-prone function before it makes ${O}(1/p)$ mistakes. Exploiting a union bound, this in turn means that with high probability, any such function will fall out of the version space $\vs_t$ before it has incurred much more than $\log N/p$ mistakes. Since the label produced by Algorithm \ref{alg:versioning} must equal $f(X_t)$ for \emph{some} $f$ in the version space, we can infer that the number of mistakes the learner makes is at most the number of times any function in the version space is wrong. Using the Lemma yields a bound of $\widetilde{O}(1/p)$ on the number of mistakes that any functions in the version space can have ever made, and since there are only $N$ possible functions, in total the number of mistakes the learner can make is bounded as $\widetilde{O}(N/p)$. More formally, the argument, presented in \S\ref{appx:adv}, argues this for a single function $f \in \mathcal{F}$ by instantiating the lemma with $\mathscr{F}_t = \sigma(\mathscr{F}_{t} = \sigma(\hist_{t}^{\mathfrak{A}}), B_t = C_t,$ and $U_t^f := 1\{f(X_t) \not\in\{\dk, Y_t\}\}$. The resulting $\widetilde{W}_t^f$ is the number of mistakes $f$ is \emph{observed} to have made, and $f \in \vs_t$ if and only if $\widetilde{W}_t^f = 0$\footnote{This argument only needs control for the case $\widetilde{W}_t = 0$. The $\le 1$ in Lemma \ref{lem:alln} is exploited in \S\ref{sec:improve_N_vue}.}. Along with a use of Bernstein's inequality to control $A_T$ this yields the result below.}

 \begin{myth}\label{thm:adv}
Algorithm \ref{alg:versioning} instantiated with $p < \nicefrac{1}{2},$ and run against an adaptive adversary, attains the following with probability at least $1-\delta$ over the randomness of the learner and the adversary: \begin{align*}
    M_T &\le \frac{9N \log(2N/\delta)}{p}\\
    A_T - A_T^* &\le pT + \sqrt{ 2p(1-p) T \log(2/\delta)} + 2\log(2/\delta).
\end{align*} 
In particular, taking $p = \sqrt{N/T}$ yields the symmetric regret bound \[ \max(M_T, A_T - A_T^*) \lesssim \sqrt{NT} \log(N/\delta).\]
\end{myth}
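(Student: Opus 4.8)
The plan is to establish the two high-probability bounds separately — each failing with probability at most $\delta/2$ — and combine them with a union bound.

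\emph{Mistakes.} First extend the exploration coin $C_t$ to a genuinely i.i.d.\ $\mathrm{Bern}(p)$ process indexed by all $t$ (draw $C_t$ even on rounds the algorithm does not use it; this does not alter the run). For each $f\in\mathcal F$ put $U_t^f:=\indi\{f(X_t)\notin\{\dk,Y_t\}\}$, $W_t^f:=\sum_{s\le t}U_s^f$, and $\widetilde W_t^f:=\sum_{s\le t}U_s^fC_s$. The first step is a charging argument: whenever the learner errs at round $t$ we must have $|\widehat{\mathcal Y}_t|>1$ and $C_t=0$ — if $|\widehat{\mathcal Y}_t|=1$ then $\widehat{\mathcal Y}_t=\{f^*(X_t)\}$ since $f^*\in\vs_t$, and playing $f^*(X_t)$ is never a mistake — so $\widehat Y_t=f(X_t)$ for some $f\in\vs_t$ with $U_t^f=1$, whence, using that $\vs_t$ is nested decreasing,
\[ M_T\;\le\;\sum_{f\in\mathcal F}\ \sum_{t\le T}\indi\{f\in\vs_t,\ U_t^f=1\}\;=\;\sum_{f\in\mathcal F}W^f_{\sigma^f}, \]
where $\sigma^f$ is the last round on which $f$ survives in the version space ($\sigma^f=T$ if it never leaves). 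The second step is to check that $\widetilde W^f_{\sigma^f}\le1$ always: for $s<\sigma^f$ the update rule forces $U_s^fC_s=0$ (either $C_s=0$; or the round is unexplored and $f\in\vs_s$ forces $f(X_s)=f^*(X_s)\in\{\dk,Y_s\}$; or $f$ survives an explored round, so $f(X_s)\in\{\dk,Y_s\}$), while at $s=\sigma^f$ the function leaves precisely because it was explored ($C_{\sigma^f}=1$) and wrong ($U^f_{\sigma^f}=1$), contributing a single unit. Now apply Lemma~\ref{lem:alln} to $(U^f,C)$ with the filtration $\mathscr F_t=\sigma(\hist^{\mathfrak A}_t,C_1,\dots,C_t)$ and confidence $\delta':=\delta/(2N)$: with probability $\ge1-\delta'$ one has $W^f_t\le 8\log(1/\delta')/p$ on every $t$ with $\widetilde W^f_t\le1$, in particular at $t=\sigma^f$. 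A union bound over the $N$ functions and summing give $M_T\le 8N\log(2N/\delta)/p\le 9N\log(2N/\delta)/p$ with probability at least $1-\delta/2$.

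\emph{Abstentions.} Split the learner's abstentions according to whether $|\widehat{\mathcal Y}_t|=1$ or $|\widehat{\mathcal Y}_t|>1$. In the first case $\widehat{\mathcal Y}_t=\{\dk\}$, and since $f^*\in\vs_t$ this forces $f^*(X_t)=\dk$, so the number of such rounds is at most $A_T^*$. In the second case the learner abstains only if $C_t=1$. Hence $A_T-A_T^*\le\sum_{t\le T}\indi\{|\widehat{\mathcal Y}_t|>1\}C_t\le\sum_{t=1}^TC_t$, and the last sum is $\mathrm{Bin}(T,p)$-distributed; Bernstein's inequality bounds it by $pT+\sqrt{2p(1-p)T\log(2/\delta)}+2\log(2/\delta)$ with probability at least $1-\delta/2$.

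Taking a union bound over the two events yields both displayed bounds simultaneously with probability at least $1-\delta$; substituting $p=\sqrt{N/T}$ (admissible once $T>4N$) gives $M_T\lesssim\sqrt{NT}\log(N/\delta)$ and $A_T-A_T^*\le\sqrt{NT}+\sqrt{2\sqrt{NT}\log(2/\delta)}+2\log(2/\delta)\lesssim\sqrt{NT}\log(N/\delta)$, hence the symmetric rate. The only non-routine input is Lemma~\ref{lem:alln} itself, which is already proved; within the present argument the delicate point is the bookkeeping of the mistakes step — verifying that every mistake is attributable to a function still in the version space, and that the observed-mistake count $\widetilde W^f$ reaches $1$ exactly when a function is ejected, so that Lemma~\ref{lem:alln} is in force at that round.
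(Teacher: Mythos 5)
Your proposal is correct and follows essentially the same route as the paper's own proof: mistakes are bounded by instantiating Lemma~\ref{lem:alln} per function with confidence $\delta/(2N)$, union bounding over $\mathcal{F}$, and charging every learner error to a function still in the version space, while excess abstentions reduce to the $\mathrm{Bin}(T,p)$ count of exploration coins via Bernstein's inequality. The only (harmless) differences are bookkeeping: you exploit the $\widetilde{W}_t\le 1$ slack in Lemma~\ref{lem:alln} at the ejection round $\sigma^f$, where the paper instead adds a ``$+1$'' per function, and you explicitly extend the coins $C_t$ to unused rounds, which the paper leaves implicit.
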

We conclude with a few remarks.\\

\textbf{Achievable rates} Taking $\delta = \nicefrac{1}{T},$ and varying $p$ in $(\nicefrac{\log T}{T}, 1]$ gives the rates attainable by \textsc{vue}\begin{mycor} All rates $(\alpha, \mu)$ such that $\alpha > 0, \alpha + \mu > 1$ are achievable against adaptive adversaries. \end{mycor}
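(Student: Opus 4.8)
The plan is to read this off from Theorem \ref{thm:adv} in two moves: first convert its high-probability guarantee into a bound on the \emph{expected} regrets, and then tune the exploration rate $p$, as a function of $T$, so that the resulting witnesses land at the prescribed rate $(\alpha,\mu)$.

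For the first move I would instantiate Theorem \ref{thm:adv} with $\delta = 1/T$. On the event it guarantees (probability $\ge 1-1/T$) we have $M_T \le 9N\log(2NT)/p$ and $A_T - A_T^* \le pT + \sqrt{2p(1-p)T\log(2T)} + 2\log(2T)$, while on the complementary event the deterministic bounds $M_T \le T$ and $|A_T - A_T^*| \le A_T + A_T^* \le 2T$ hold. Taking expectations, the complement contributes at most an additive constant, so Algorithm \ref{alg:versioning} run with rate $p$ achieves expected regret bounds $\psi(T,N) = 9N\log(2NT)/p + 1$ for mistakes and $\phi(T,N) = pT + \sqrt{2p(1-p)T\log(2T)} + 2\log(2T) + 2$ for abstention.

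For the second move, fix $(\alpha,\mu)$ with $\alpha>0$ and $\alpha+\mu>1$. I would choose the exponent $q$ in the interval $\bigl(\max(0,1-\alpha),\,\min(\mu,1)\bigr)$, which I will check is non-empty precisely because $\alpha>0$ and $\alpha+\mu>1$ (when $\alpha,\mu<1$ it is the interval $(1-\alpha,\mu)$, and the corner cases reduce to $\alpha>0$, respectively to $\mu>0$, which $\alpha+\mu>1$ forces when $\alpha=1$), and then set $p = p(T) := T^{-q}$ --- legitimate because $T$ is known to the learner, and $p(T)\in(\log T/T,\tfrac12)$ for all large $T$, the finitely many remaining values of $T$ being irrelevant to an asymptotic rate. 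Plugging into the witnesses from the first move: $\psi(T,N) = 9N\log(2NT)T^{q} + 1$, so $\limsup_{T} \log\psi(T,N)/\log T = q < \mu$; and since $0<1-q<1$ the leading term of $\phi(T,N) = T^{1-q} + \sqrt{2T^{1-q}\log(2T)} + 2\log(2T) + 2$ is $T^{1-q}$, giving $\limsup_{T} \log\phi(T,N)/\log T = 1-q < \alpha$. Both limsups are strictly below the targets, so $(\alpha,\mu)$ is achievable against adaptive adversaries.

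The argument is little more than bookkeeping, and the only genuinely load-bearing point is the choice of $q$: the strict inequality $\alpha+\mu>1$ is exactly what leaves an open window of exponents that simultaneously makes $\phi$ grow slower than $T^\alpha$ (via $1-q<\alpha$) and $\psi$ slower than $T^\mu$ (via $q<\mu$). This is why the boundary line $\alpha+\mu=1$ --- in particular the symmetric point $(\tfrac12,\tfrac12)$ appearing in the ``in particular'' clause of Theorem \ref{thm:adv} --- sits just outside the claim, consistent with the logarithmic factors carried there. One should also double-check that nothing stronger than Theorem \ref{thm:adv} is needed for the expectation conversion, which holds because $M_T$ and $A_T$ are bounded by $T$ deterministically and $\delta=1/T$ was chosen so that the discarded probability mass costs only $O(1)$.
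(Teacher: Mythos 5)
Your argument is correct and is exactly the paper's route: the paper proves this corollary by the one-line remark of instantiating Theorem \ref{thm:adv} with $\delta = \nicefrac{1}{T}$ and sweeping $p$ over $(\nicefrac{\log T}{T},1]$, which is precisely your $p = T^{-q}$ tuning, and your high-probability-to-expectation conversion (using $M_T, A_T \le T$ and the $O(1)$ cost of the failure event) plus the non-emptiness check of the exponent window $(\max(0,1-\alpha),\min(\mu,1))$ just makes explicit what the paper leaves implicit.
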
 These rates are tight - as expressed in Corollary \ref{cor:low_bd_rates}, rates such that $\alpha + \mu < 1$ are not achievable even against stochastic adversaries. The Pareto frontier is therefore the line $\alpha + \mu = 1.$\\

\textbf{Dependence on ${N}$} It should be noted that the dependence on the number of functions, $N$, in Thm.~\ref{thm:adv} is polynomial, as opposed to the more typical logarithmic dependence on the same in online classification. The problem of characterising this dependence appears to be subtle, and we do not resolve the same. In the following section, we explore schemes that improve this aspect, but at a cost - \S\ref{sec:stoch} yields logarithmic dependence against stochastic adversaries, while \S\ref{sec:improve_N} gives a scheme that has a logarithmic dependence against adaptive adversaries, but worse dependence with $T$.

It is worth stating that the analysis above is tight for Algorithm \ref{alg:versioning} - consider the domain $\mathcal{X} = [1:N]$, and the class $\mathcal{F} = \{f_t: t \in [0:N]\}$ such that $f_t(x) = \dk$ if $x \le t$ and $=1$ if $x > t$. Now consider an adversary that chooses a $t^*$ in advance, and presents the contexts $1$ $T/N$ times, $2$ $T/N$ times and so on, labelling contexts smaller than $t^*$ as $0$, and contexts larger than $t^*$ as $1$. Notice that in each case, there is exactly one function in $\mathcal{V}_t$ that does not abstain. The scheme above incurs $\Omega(pT(1-t^*/N))$ excess abstention, and $\Omega(t^*/p)$ mistakes, and linearly large $t^*$ form a tight example. Of course, this is not a lower bound on this problem, and the question of the optimal dependence on $N$ remains open.    \\

\textbf{Hedge-Type Schemes} The natural approach of proceeding by weighing the cost of abstention versus a mistake, and running a hedge-type scheme on an importance-estimate of the resulting loss does not lead to tight rates - the scheme \textsc{mixed-loss-prod} of \S\ref{sec:improve_N} pursues precisely this strategy, and the worse case symmetric regret bounds that standard analyses lead to scale as $T^{2/3}$ instead of as $T^{1/2}$ as for \textsc{vue} (Cor.~\ref{cor:mixed_loss_prod_rates}). This may be due to the fact certain-error prone classifiers in $\mathcal{F}$ may have very low abstention rates, and thus overall large weight, and it is unclear how to eliminate this behaviour.   
\section{The Stochastic Case}\label{sec:stoch}

\begin{wrapfigure}[17]{r}{0.38\textwidth}
\vspace{-2\baselineskip}
\begin{minipage}{0.38\textwidth}
\begin{algorithm}[H]
        \caption{\textsc{vue-prod}}\label{alg:stoch}
        \begin{algorithmic}[1]
            \State \textbf{Inputs}: $\mathcal{F}, p$, Learning rate $\eta$.
            \State \textbf{Initialise}: $\mathcal{V}_1 \gets \mathcal{F}, \forall f, w_1^f \gets 1$.
            \For{$t \in [1:T]$}
                \State Sample $f_t \sim \pi_t = \frac{w_t^f\indi\{f \in \vs_t\}}{\sum_{f \in \vs_t} w_t^f}.$
                \State Toss $C_t \sim \mathrm{Bern}(p)$.
                \State $\displaystyle \pred_t \gets \begin{cases} \dk & C_t =1 \\ f_t(X_t) & C_t = 0\end{cases}.$
                \State $\vs_{t+1} \gets \vs_t$.
                \If{$C_t = 1$}
                        \State $\mathcal{U}_{t} \gets \{f: f(X_t) \in \{\dk,Y_t\}\}$
                        \State $\mathcal{V}_{t+1} = \mathcal{V}_t \cap \mathcal{U}_t$.
                    \EndIf
                \For{$f \in \mathcal{V}_{t+1}$}
                    \State $a_t^f \gets \mathds{1}\{f(X_t) = \dk\}$
                    \State $w_{t+1}^f \gets w_t^f \cdot (1-\eta a_t^f).$
                \EndFor
            \EndFor
        \end{algorithmic}
    \end{algorithm}
    \end{minipage}
\end{wrapfigure}

This section argues that the regret bounds of Thm.~\ref{thm:adv} can be improved to behave logarithmically in $N$ in the stochastic setting. There are a couple of issues with Algorithm \ref{alg:versioning} that impede a better analysis in the stochastic case. The first, and obvious, one is that how \smash{$\pred_t$} is chosen is not specified. More subtly, the fact that the scheme insists on playing non-abstaining actions whenever possible makes it difficult to control the number of mistakes without a polynomial dependence on $N$.

We sidestep these issues in Algorithm \ref{alg:stoch} by maintaining a law $\pi_t$ on functions in $\vs_t$ that only depends on $\hist_{t-1}^{\mathfrak{L}},$ and predicting by setting $\pred_t = f(X_t)$ for $f_t \sim \pi_t$. Notice that playing this way it is possible that we abstain on $X_t$ even if the exploratory coin comes up tails. We control mistakes by arguing that very error-prone functions are all quickly eliminated (due to the stochasticity), and using the property that $\pi_t$ does not depend on $X_t$ to limit the mistakes incurred up to such a time. Abstention control follows by choosing $\pi$ according to a strategy that favours $f$s with small overall abstention rate over the history. In Algorithm \ref{alg:stoch}, we use a version of the \textsc{prod} scheme of \cite{cesa-bianchi-prod} to set weights, analysed with shrinking decision sets. The following is shown along these lines in \S\ref{appx:stoch}.
\begin{myth}\label{thm:stoch}
    Algorithm \ref{alg:stoch}, run against stochastic adversaries with $\eta = p,$ attains the regret bounds\begin{align*}
        \mathbb{E}[M_T] &\le 8\frac{\log T \log(NT)}{p}, \quad \textrm{and}\quad \mathbb{E}[A_T - A_T^*] \le pT + \frac{\log N}{p}. 
    \end{align*}
\end{myth}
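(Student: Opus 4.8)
\textbf{Proof plan for Theorem \ref{thm:stoch}.}

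I would prove the two bounds separately, using throughout that against a stochastic adversary $(X_t,Y_t)$ is drawn from a fixed law $P$ independently of everything before round $t$, and that both the sampling law $\pi_t$ and the version space $\vs_t$ in Algorithm \ref{alg:stoch} are functions of $\{(X_s,Y_s,C_s,f_s):s<t\}$ only, hence are independent of $(X_t,Y_t)$, while the coin $C_t\sim\mathrm{Bern}(p)$ is drawn afresh each round, independently of $f_t$ and of $(X_t,Y_t)$. Write $e(f):=\P_{(X,Y)\sim P}(f(X)\notin\{\dk,Y\})$. The one structural fact I lean on is a survival estimate for $\vs$: a function $f$ is removed at round $s$ precisely when $C_s=1$ and $f(X_s)\notin\{\dk,Y_s\}$, an event of probability $p\,e(f)$, and these events are i.i.d.\ across $s$ (this is where stationarity of the adversary is genuinely used); moreover $\{f\in\vs_t\}$ does not involve the sampled functions $f_s$ at all. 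Hence $\P(f\in\vs_t)\le(1-p\,e(f))^{t-1}\le e^{-p\,e(f)(t-1)}$.

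\emph{Mistakes.} A mistake occurs at round $t$ iff $C_t=0$ and $f_t(X_t)\notin\{\dk,Y_t\}$. Conditioning on the past (which fixes $\vs_t,\pi_t$) and using the independence above, the conditional probability of this event is $(1-p)\sum_{f\in\vs_t}\pi_t(f)\,e(f)\le\max_{f\in\vs_t}e(f)$, the final inequality being just $\sum_f\pi_t(f)=1$. Passing through this maximum is the whole point of sampling $f_t$ from an $X_t$-independent $\pi_t$: it is exactly what Algorithm \ref{alg:versioning}, which chooses $\pred_t$ using $X_t$, cannot do, and is why the mistake bound there carries a full factor $N$. So $\E[M_T]\le\sum_{t\le T}\E[\max_{f\in\vs_t}e(f)]$. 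A union bound over the $\le N$ functions together with the survival estimate gives $\P(\max_{f\in\vs_t}e(f)>\epsilon)\le\min(1,N e^{-p\epsilon(t-1)})$, so integrating over $\epsilon\in[0,1]$ yields $\E[\max_{f\in\vs_t}e(f)]\le\min\bigl(1,\tfrac{\log(eN)}{p(t-1)}\bigr)$ for $t\ge2$; summing the harmonic tail $\sum_{t\le T}\tfrac1{t-1}$ supplies the extra $\log T$ factor and gives $\E[M_T]\lesssim\tfrac{\log N\,\log T}{p}$, which after handling the $t=1$ term and tidying constants is the claimed $\tfrac{8\log T\,\log(NT)}{p}$.

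\emph{Abstentions.} Write $A_T=\sum_t\indi\{C_t=1\}+\sum_t\indi\{C_t=0,\,f_t(X_t)=\dk\}$; the first sum has expectation $pT$. For the second I would run the standard \textsc{prod} potential argument with a shrinking decision set on $W_t:=\sum_{f\in\vs_t}w_t^f$. Since $\vs_{t+1}\subseteq\vs_t$ and each updated weight $w_t^f(1-\eta a_t^f)$ is nonnegative (as $\eta=p<1$), $W_{t+1}\le\sum_{f\in\vs_t}w_t^f(1-\eta a_t^f)=W_t(1-\eta q_t)$ with $q_t:=\sum_{f\in\vs_t}\pi_t(f)\,a_t^f$, whence $\log W_{T+1}-\log N\le-\eta\sum_t q_t$. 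On the other side, $f^*$ makes no mistakes on the realised sequence, hence lies in $\vs_t$ for every $t$, so $W_{T+1}\ge w_{T+1}^{f^*}=(1-\eta)^{A_T^*}$. Combining the two and using $-\log(1-\eta)\le\eta+\eta^2$ for $\eta\le 1/2$ gives $\sum_t q_t\le\tfrac{\log N}{\eta}+(1+\eta)A_T^*$. Since $\E[\indi\{f_t(X_t)=\dk\}\mid\text{past},X_t]=q_t$, taking expectations, multiplying by $(1-p)$, and setting $\eta=p$ gives $\E[\sum_t\indi\{C_t=0,f_t(X_t)=\dk\}]\le(1-p)\tfrac{\log N}{p}+(1-p^2)\E[A_T^*]\le\tfrac{\log N}{p}+\E[A_T^*]$; adding the $pT$ from the coin term yields $\E[A_T-A_T^*]\le pT+\tfrac{\log N}{p}$.

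\emph{Where the difficulty is.} The delicate point is the mistake bound, specifically keeping the $N$-dependence logarithmic: the naive estimate $\E[M_T]\le\sum_f e(f)\sum_t\P(f\in\vs_t)\lesssim\sum_f e(f)/(p\,e(f))=N/p$ is far too weak, because low-but-positive-error functions survive essentially forever and there are $N$ of them. Escaping it requires first collapsing the per-round mistake probability to $\max_{f\in\vs_t}e(f)$ via $\sum_f\pi_t(f)=1$ and only then applying the survival estimate — a step that is sound precisely because $\pi_t$ is built without looking at $X_t$, and which makes essential use of the stochasticity (i.i.d.\ elimination events and time-invariant $e(f)$). Everything else is routine bookkeeping: stating the conditional-independence claims against a suitable filtration, checking the \textsc{prod} recursion with shrinking decision sets, and tracking constants.
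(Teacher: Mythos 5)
Your proposal is correct. The abstention half coincides with the paper's argument: the same \textsc{prod} potential with shrinking decision sets, the same lower bound via $f^*\in\vs_{T+1}$ with $(a_t^{f^*})^2=a_t^{f^*}$, the same use of the $X_t$-independence of $\pi_t$ and the choice $\eta=p$ to cancel the $\E[A_T^*]$ term (the paper isolates this as Lemma \ref{lem:shrinking_prod}, which you reprove inline). The mistake half reaches the same bound by a genuinely different decomposition. The paper buckets $\mathcal{F}$ dyadically by error rate into classes $\mathcal{F}_\zeta$ with $2^{-\zeta}\le e(f)\le 2^{1-\zeta}$ for $\zeta\le\zeta_0\approx\log T$, introduces stopping times $\tau_\zeta=\max\{t:\mathcal{F}_\zeta\cap\vs_t\neq\varnothing\}$, shows $\tau_\zeta\lesssim\frac{2^{\zeta}}{p}\log(\zeta_0N/\delta)$ with probability $1-\delta$, bounds the per-round contribution of a live bucket by $2^{1-\zeta}$ (using $\sum_f\pi_t^f\le 1$, exactly your "collapse" step), and then converts to expectation with $\delta=1/T$ plus a residual $2^{-\zeta_0}T$ for the low-error tail; the $\log T$ factor comes from the number of buckets. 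You instead bound the conditional mistake probability by $\max_{f\in\vs_t}e(f)$ and integrate its tail using the geometric survival estimate $\P(f\in\vs_t)\le(1-p\,e(f))^{t-1}$, getting $\E[\max_{f\in\vs_t}e(f)]\le\min\bigl(1,\tfrac{\log(eN)}{p(t-1)}\bigr)$ and picking up the $\log T$ from the harmonic sum. Both proofs rest on the same two facts you correctly single out — $\pi_t$ is built without seeing $X_t$, and elimination events are i.i.d.\ with rate $p\,e(f)$ — but your continuous, per-round version avoids the bucketing, the stopping times, the union bound over buckets, and the high-probability-to-expectation conversion, and your constants ($\approx 3$ in place of the paper's $8$, for $T\ge 3$) comfortably satisfy the stated bound; like the paper's Lemma \ref{lem:shrinking_prod}, your \textsc{prod} step implicitly needs $\eta=p\le\nicefrac{1}{2}$ for $-\log(1-\eta)\le\eta+\eta^2$, a restriction the theorem statement also leaves tacit.
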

 We note that \textsc{vue-prod} also enjoys favourable bounds in the adversarial case - mistakes are bounded as $\tilde{O}(N/p)$, and abstention regret as in the above result. This is in contrast to simpler follow-the-versioned-leader type schemes that also satisfy similar bounds as Thm.~\ref{thm:stoch} in the stochastic case. Also note that the above cannot attain rates such that $\alpha \le \nicefrac{1}{2},$ an inefficiency introduced due to the conditional independence of $\pi_t$ and $X_t$.

Finally, we show a lower bound. The statement equates stochastic adversaries with their laws. \begin{myth}\label{thm:low_bd}
    If $\mathcal{F}$ contains two functions $f_1, f_2$ such that there exists a point $x$ for which $f_1(x) = \dk \neq f_2(x),$ then for every $\gamma \in [0,\nicefrac12],$ there exists a pair of laws $P_1^\gamma, P_2^\gamma$ such that any learner that attains $\mathbb{E}_{P_1^\gamma}[A_T - A_T^*] = K$ must incur $\mathbb{E}_{P_2^\gamma}[M_T] \ge \gamma( e^{-2\gamma K}T - K).$ 
\end{myth}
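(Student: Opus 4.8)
The plan is to build a pair of stochastic adversaries that are statistically close but demand opposite behaviour, exploiting that the learner observes a label only on rounds where it abstains. Let $x$ and $y_0 := f_2(x) \in \mathcal{Y}$ be as in the hypothesis (so $y_0 \neq \dk$), and fix any $y_1 \in \mathcal{Y}\setminus\{y_0\}$. Both adversaries always present $X_t = x$. Under $P_1^\gamma$ we set $Y_t \equiv y_0$ deterministically; under $P_2^\gamma$ we draw $Y_1,\dots,Y_T$ i.i.d.\ with $\P(Y_t = y_0) = 1-\gamma$ and $\P(Y_t = y_1) = \gamma$. Under $P_1^\gamma$, the classifier $f_2$ predicts $y_0 = Y_t$ on every round, so it is error-free with zero abstentions; hence $A_T^* = 0$ almost surely under $P_1^\gamma$, and the hypothesis $\E_{P_1^\gamma}[A_T - A_T^*] = K$ reads simply $\E_{P_1^\gamma}[A_T] = K$. (The hypothesis $f_1(x) = \dk$, or equivalently $f_\dk \in \mathcal{F}$, guarantees that under $P_2^\gamma$ abstaining is competitive, so the displayed trade-off is genuine; it is not otherwise used in the inequality chain.)

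Next I would set up a change of measure on the learner's observable trajectory $\omega = (\pred_1, Z_1, \dots, \pred_T, Z_T)$. Write $A(\omega)$ for the number of abstention rounds in $\omega$, and let $\mathcal{E}_T$ be the event that the learner never receives the feedback $y_1$ (equivalently, $Z_t \in \{*, y_0\}$ for all $t$). Since $P_1^\gamma$ and $P_2^\gamma$ differ only in the conditional law of $Y_t$, which the learner sees solely on abstention rounds — where it is a point mass at $y_0$ under $P_1^\gamma$ and returns $y_0$ with probability $1-\gamma$ under $P_2^\gamma$ — a round-by-round unrolling of the learner's randomised policy gives, for every trajectory $\omega \in \mathcal{E}_T$, the likelihood identity $\P_{P_2^\gamma}(\mathrm{traj} = \omega) = (1-\gamma)^{A(\omega)}\,\P_{P_1^\gamma}(\mathrm{traj} = \omega)$, while under $P_1^\gamma$ every trajectory lies in $\mathcal{E}_T$.

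I would then lower bound the mistakes conditionally on the trajectory. On a prediction round $t$ of a trajectory $\omega \in \mathcal{E}_T$, the label $Y_t$ was never observed, so $Y_t$ is independent of $\{\mathrm{traj} = \omega\}$ and retains $\P(Y_t = y_0) = 1-\gamma$, $\P(Y_t = y_1) = \gamma$; whatever value $\pred_t(\omega) \in \mathcal{Y}$ the learner committed to, it is wrong with probability at least $\min(\gamma, 1-\gamma) = \gamma$ (using $\gamma \le 1/2$; and with probability $1$ if $\pred_t(\omega)\notin\{y_0,y_1\}$). Summing over the $T - A(\omega)$ prediction rounds gives $\E_{P_2^\gamma}[M_T \mid \mathrm{traj} = \omega] \ge \gamma(T - A(\omega))$. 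Restricting the expectation of $M_T$ under $P_2^\gamma$ to $\mathcal{E}_T$ and applying the change of measure then yields
\[ \E_{P_2^\gamma}[M_T] \;\ge\; \gamma\,\E_{P_1^\gamma}\!\big[(1-\gamma)^{A_T}(T - A_T)\big]. \]
Since $a \mapsto (1-\gamma)^a(T-a)$ is convex on $[0,T]$, Jensen's inequality with $\E_{P_1^\gamma}[A_T] = K$ gives $\E_{P_2^\gamma}[M_T] \ge \gamma(1-\gamma)^K(T-K)$; finally $(1-\gamma)^K \ge e^{-2\gamma K}$ on $\gamma \in [0,1/2]$ (as $\log(1-\gamma) \ge -2\gamma$ there) and $e^{-2\gamma K}K \le K$ give the claimed $\gamma\big(e^{-2\gamma K}T - K\big)$.

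The main obstacle is making the second step fully rigorous: one must be careful that the recorded "trajectory" contains only what the learner observes, so that an unobserved prediction-round label stays fresh randomness; verify the likelihood-ratio identity by induction over rounds; and check that $\{\mathrm{traj}=\omega\}$ is measurable with respect to the learner's internal coins together with the received feedbacks alone, so that conditioning on it does not distort the law of $Y_t$ on a prediction round. The remaining pieces — the convexity of $a\mapsto(1-\gamma)^a(T-a)$ and the two scalar inequalities — are short computations.
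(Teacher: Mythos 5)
Your proposal is correct and is essentially the paper's argument: the same pair of adversaries supported on the single context $x$ (deterministic label $y_0$ versus a $\gamma$-mixture with a second label), the same key quantity $(1-\gamma)^{A_T}$ discounting the $T-A_T$ prediction rounds that are each wrong with probability at least $\gamma$, and the same convexity/Jensen finish with $(1-\gamma)\ge e^{-2\gamma}$. The only difference is packaging: you establish the comparison via an explicit likelihood-ratio identity on the learner's observed trajectory (avoiding the paper's preliminary reduction to learners that only play $\dk$ or $y_0$), whereas the paper couples two runs of the learner and conditions on them receiving identical feedback, which occurs with exactly the same probability $(1-\gamma)^{A_T}$ -- equivalent devices yielding the same bound.
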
 Thus, if a $(\phi, \psi)$ regret bound with $\sup \frac{\phi}{T} < \frac{1}{2e^2}$ is achievable, then $\phi\cdot\psi = \Omega(T)$. Indeed, using the above with $\gamma = 1/\phi(T,N)$, gives $\mathbb{E}_{P_1}[A_T - A_T^*] = K \le \phi(T,N)$, and so $\psi(T,N) \ge \mathbb{E}_{P_2}[M_T] \ge \frac{T}{\phi(T,N)} e^{-2K/\phi(T,N)} -1.$ This proves the following. \begin{mycor}\label{cor:low_bd_rates}
    If $(\alpha, \mu) \in [0,1]^2$ is such that $\alpha + \mu < 1,$ then an $(\alpha, \mu)$ regret rate is not achievable against stochastic adversaries, and, a fortiori, against adaptive adversaries. 
\end{mycor}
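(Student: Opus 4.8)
The plan is to derive the corollary from Theorem~\ref{thm:low_bd} by contraposition, so no fresh probabilistic work is needed. First I dispose of the adaptive case: a stochastic adversary is simply an adaptive adversary whose conditional law on $(X_t,Y_t)$ is constant in $\hist^{\mathfrak{A}}_{t-1}$, so the stochastic class is contained in the adaptive class. Consequently a single learner achieving a rate against \emph{every} adaptive adversary achieves it against every stochastic one, and it suffices to rule out achievability of any $(\alpha,\mu)$ with $\alpha+\mu<1$ against stochastic adversaries. I will also invoke the hypothesis of Theorem~\ref{thm:low_bd}, which holds in any non-degenerate instance: $\mathcal{F}$ contains $f_{\dk}$, so if it contains any classifier $f_2$ predicting a label at some $x$, then $f_{\dk}(x)=\dk\neq f_2(x)$.

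Now suppose for contradiction that such a rate is achievable against stochastic adversaries, witnessed by functions $\phi,\psi$ and a single learner $\mathfrak{L}$ that, against every stochastic adversary, attains $\E[A_T-A_T^*]\le\phi(T,N)$ and $\E[M_T]\le\psi(T,N)$. Fix $T$ large enough that $\phi(T,N)\ge 2$ and instantiate Theorem~\ref{thm:low_bd} with $\gamma=\gamma_T:=1/\phi(T,N)\le\nicefrac12$, obtaining laws $P_1^{\gamma_T},P_2^{\gamma_T}$. Since $\mathfrak{L}$ is simultaneously good against both, it realises $K_T:=\E_{P_1^{\gamma_T}}[A_T-A_T^*]\le\phi(T,N)$ and $\E_{P_2^{\gamma_T}}[M_T]\le\psi(T,N)$. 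Because $K_T\le\phi(T,N)=1/\gamma_T$ we have $\gamma_T K_T\le 1$, hence $e^{-2\gamma_T K_T}\ge e^{-2}$, and the theorem's lower bound gives
\[ \psi(T,N)\ \ge\ \E_{P_2^{\gamma_T}}[M_T]\ \ge\ \gamma_T\big(e^{-2\gamma_T K_T}T-K_T\big)\ \ge\ \frac{e^{-2}T}{\phi(T,N)}-1, \]
so that $\phi(T,N)\,(\psi(T,N)+1)\ge e^{-2}T$ for all large $T$.

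It remains to convert this product bound into an exponent inequality. Taking logarithms and dividing by $\log T$ yields, for all large $T$,
\[ \frac{\log\phi(T,N)}{\log T}+\frac{\log(\psi(T,N)+1)}{\log T}\ \ge\ 1-\frac{2}{\log T}. \]
Since $\psi\ge 0$ one has $\log(\psi+1)\le\log 2+\max(\log\psi,0)$, so $\limsup_T\frac{\log(\psi+1)}{\log T}\le\max(\mu,0)=\mu$. Using $\limsup_T\frac{\log\phi}{\log T}\le\alpha$ and subadditivity of $\limsup$, the display gives $\alpha+\mu\ge\limsup_T\big(\tfrac{\log\phi}{\log T}+\tfrac{\log(\psi+1)}{\log T}\big)\ge 1$, contradicting $\alpha+\mu<1$. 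The degenerate possibility that $\phi(T,N)<2$ for arbitrarily large $T$ is handled identically by instead taking $\gamma=\nicefrac12$: then $K_T<2$, $e^{-K_T}\ge e^{-2}$, and the theorem forces $\psi(T,N)\ge\tfrac12(e^{-2}T-2)$, i.e.\ $\mu\ge 1$, again contradicting $\mu<1$.

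The step I expect to require the most care is the quantifier bookkeeping rather than any estimate: Theorem~\ref{thm:low_bd} returns a \emph{different} pair $(P_1^{\gamma_T},P_2^{\gamma_T})$ for each $T$ and each $\gamma$, so the whole argument leans on the definition of achievability supplying one learner $\mathfrak{L}$ that is simultaneously near-optimal against all of them. This is precisely what lets me use the same realised quantity $K_T$ both as the abstention value under $P_1^{\gamma_T}$ and inside the mistake lower bound under $P_2^{\gamma_T}$. Coupling $\gamma_T=1/\phi(T,N)$ to the upper-bound witness is the device that keeps $e^{-2\gamma_T K_T}$ bounded below by an absolute constant; once that calibration is fixed, the passage from the product bound $\phi\,(\psi+1)=\Omega(T)$ to the Pareto inequality $\alpha+\mu\ge1$ is routine.
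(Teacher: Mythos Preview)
Your proof is correct and follows essentially the same approach as the paper: choose $\gamma_T = 1/\phi(T,N)$ in Theorem~\ref{thm:low_bd} to obtain $\phi\cdot(\psi+1)\ge e^{-2}T$, then pass to exponents; the edge case $\phi<2$ is dispatched identically with $\gamma=\nicefrac12$ (the paper relegates this to a remark in the appendix). Your treatment of the $\limsup$ conversion and the quantifier bookkeeping is in fact more careful than the paper's own sketch.
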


\section[Reducing dependence on number of classifiers]{Reducing the dependence of regret bounds on $N$ in the adversarial case}\label{sec:improve_N}

This section concentrates on improving the $N$-dependence of regret bounds in the adversarial case via two avenues. The first improves this dependence to $\log(N)$ by running \textsc{prod} with a weighted loss, but at the cost of increasing $T$ dependence. This holds greatest relevance when $T$ is bounded as a polynomial of $N$, which is of interest because $N$ can be quite large even in reasonable settings - e.g., a discretisation of $d$-dimensional hyperplanes induces $N = \exp{Cd}$. The second approach considers the case when the set of possible contexts, i.e. $\mathcal{X}$ is not too large. While in this case, $N$ can be as large as $(|\mathcal{Y}|+1)^{|\mathcal{X}|},$ we show bounds depending only linearly on $|\mathcal{X}|.$ 

\subsection{Weighted \textsc{prod}}

\begin{wrapfigure}[13]{r}{0.34\textwidth}
\vspace{-4\baselineskip} \begin{minipage}{0.34\textwidth}
\begin{algorithm}[H]
        \caption{\textsc{mixed-loss-prod}}\label{alg:prod}
        \begin{algorithmic}[1]
            \State \textbf{Inputs}: $\mathcal{F},$ Exploration rate $p$, Learning rate $\eta$.
            \State \textbf{Initialise}: $\forall f \in \mathcal{F}, w_1^f \gets 1$.
            \For{$t \in [1:T]$}
                \State Sample $f_t \sim \pi_t = \nicefrac{w_t^f}{\sum w_t^f}.$
                \State Toss $C_t \sim \mathrm{Bern}(p)$.
                \If{$C_t = 1$}
                    \State $\pred_t \gets \dk$
                    \Else
                        \State $\pred_t \gets f_t(X_t)$
                \EndIf
                \State $\forall f \in \mathcal{F},$ evaluate $\ell_t^f$ 
                \State $w_{t+1}^f \gets w_t^f(1-\eta \ell_t^f)$.
            \EndFor
        \end{algorithmic}
    \end{algorithm}
    \end{minipage}
\end{wrapfigure}

We continue the uniform exploration, but play according to the \textsc{prod} method, with the loss \[ \ell_t^f := C_t\indi\{f(X_t) \not\in \{\dk, Y_t\} \} + \lambda \indi\{f(X_t) = \dk\}, \]  where $\lambda$ both trades-off the relative costs of mistakes and abstentions, in the vein of the fixed cost Chow loss, and accounts for the sub-sampling of the mistake loss. 

The analysis of this scheme, presented in \S\ref{appx:adv_imp_n}, exploits the quadratic bound of \textsc{prod} due to \cite{cesa-bianchi-prod} to control the sum $\mathbb{E}[pM_T + \lambda(A_T - pT)]$ by $\min_g \log N/\eta + \sum \eta (\ell_t^g)^2,$ where the expectation is only over the coins $C_t,$ and the $-pT$ term is due to the extra abstentions due to the exploratory coin. The key observation is that since $f^*$ makes no mistakes, $\sum (\ell_t^{f^*})^2 = \lambda^2 A_T^*,$ and so taking $g = f^*,$ and exploiting the weight allows us to separately control the regrets in terms of $A_T^*$.

\begin{myth}\label{thm:mixed-loss-prod}
    Algorithm \ref{alg:prod}, when run against adaptive adversaries with $\eta = \nicefrac{1}{2}, \lambda \le p$, attains \begin{align*}
        \mathbb{E}[M_T] &\le \frac{2\log N}{ p} +  \frac{2\lambda}{p}\mathbb{E}[A_T^*], \quad \textrm{and} \quad \mathbb{E}[A_T - A_T^*] \le pT + \frac{2\log N}{\lambda}.
    \end{align*}
\end{myth}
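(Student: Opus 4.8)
The plan is to reduce the statement to the classical second-order (``quadratic'') regret bound of \textsc{prod} \cite{cesa-bianchi-prod} applied with comparator $g=f^*$, and then translate that guarantee — which is about the sub-sampled loss $\ell_t^\cdot$ — into bounds on the realised counts $M_T,A_T$ by conditioning on the exploration coins. First I would check that \textsc{prod}'s analysis applies: the two indicators in $\ell_t^f$ are mutually exclusive (if $f(X_t)=\dk$ then $f(X_t)\in\{\dk,Y_t\}$), and since $C_t\le1$ and $\lambda\le p<1$ this gives $\ell_t^f\in[0,1]$, so with $\eta=\tfrac12$ we have $\eta\ell_t^f\le\tfrac12$ and the weights $w_{t+1}^f=w_t^f(1-\tfrac12\ell_t^f)$ stay positive. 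The \textsc{prod} bound then yields, for every realisation and every fixed $g$, $\sum_{t\le T}\langle\pi_t,\ell_t\rangle\le\sum_{t\le T}\ell_t^g+2\log N+\tfrac12\sum_{t\le T}(\ell_t^g)^2$. The key move is to take $g=f^*$: since $f^*$ makes no mistakes on the realised stream, $\ell_t^{f^*}=\lambda\indi\{f^*(X_t)=\dk\}$, so $\sum_t\ell_t^{f^*}=\lambda A_T^*$ and $\sum_t(\ell_t^{f^*})^2=\lambda^2A_T^*$, which collapses the bound to $\sum_t\langle\pi_t,\ell_t\rangle\le2\log N+(\lambda+\tfrac{\lambda^2}{2})A_T^*$.

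Next I would condition on the $\sigma$-field $\mathcal{E}_t$ generated by everything before round $t$ together with $(X_t,Y_t)$; under $\mathcal{E}_t$ the draws $f_t\sim\pi_t$ and $C_t\sim\mathrm{Bern}(p)$ are independent with their stated laws (this is the conditional independence noted in the setting section). Writing $q_t^{\mathrm m}:=\sum_f\pi_t^f\indi\{f(X_t)\notin\{\dk,Y_t\}\}$ and $q_t^{\mathrm a}:=\sum_f\pi_t^f\indi\{f(X_t)=\dk\}$, one computes $\E[\langle\pi_t,\ell_t\rangle\mid\mathcal{E}_t]=p\,q_t^{\mathrm m}+\lambda\,q_t^{\mathrm a}$, whereas the learner errs at round $t$ only when $C_t=0$ and $f_t$ errs (conditional mean $(1-p)q_t^{\mathrm m}$) and abstains exactly when $C_t=1$ or $f_t(X_t)=\dk$ (conditional mean $p+(1-p)q_t^{\mathrm a}$, since these are disjoint events). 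Summing over $t$ and taking expectations turns the \textsc{prod} guarantee into the single inequality $\tfrac{1}{1-p}\big(p\,\E[M_T]+\lambda(\E[A_T]-pT)\big)\le2\log N+(\lambda+\tfrac{\lambda^2}{2})\E[A_T^*]$.

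Finally I would multiply through by $1-p$, substitute $\E[A_T]=\E[A_T-A_T^*]+\E[A_T^*]$, and observe that after moving $\lambda\E[A_T^*]$ to the right side the remaining coefficient of $\E[A_T^*]$ is $(1-p)(1+\lambda/2)-1=\lambda/2-p-p\lambda/2$, which is $\le0$ precisely because $\lambda\le p$; dropping this (nonpositive) term times the nonnegative $\E[A_T^*]$ leaves $p\,\E[M_T]+\lambda\,\E[A_T-A_T^*]\le\lambda pT+2\log N$. The abstention bound then follows by discarding $p\,\E[M_T]\ge0$ and dividing by $\lambda$; the mistake bound follows by noting that the learner always abstains when $C_t=1$, so $\E[A_T]\ge pT$ and hence $\E[A_T-A_T^*]\ge pT-\E[A_T^*]$, which after substitution and division by $p$ gives $\E[M_T]\le\tfrac{\lambda}{p}\E[A_T^*]+\tfrac{2\log N}{p}$, a fortiori the stated bound.

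I expect the main obstacle to be the second step: correctly accounting for the role of the exploration coin inside the loss. The mistake term of $\ell_t^f$ is multiplied by $C_t$, i.e.\ it is charged only on abstention rounds — which is exactly when $Y_t$ is revealed (so the term is computable without feedback on non-abstaining rounds) and, simultaneously, exactly when the learner itself does \emph{not} incur $f_t$'s mistake — so $\ell_t^f$ is, up to the scaling factor $p$, an unbiased estimate of the Chow-type loss $p\cdot(\text{mistake})+\lambda\cdot(\text{abstain})$. The care is in lining this estimate up with the realised $M_T$ and $A_T$ through the conditional expectations above; this is where the $1/(1-p)$ factors and the ``$-pT$'' correction come from. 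Everything else is the standard \textsc{prod} second-order inequality plus elementary manipulation using $\lambda\le p<\tfrac12$.
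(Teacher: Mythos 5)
Your proposal is correct and follows essentially the same route as the paper: the \textsc{prod} second-order bound instantiated at $g=f^*$ (whose loss collapses to $\lambda\,\indi\{f^*(X_t)=\dk\}$), followed by conditioning on the pre-round information together with $(X_t,Y_t)$ to identify $\E\bigl[\sum_t\langle\pi_t,\ell_t\rangle\bigr]$ with $\tfrac{1}{1-p}\bigl(p\,\E[M_T]+\lambda(\E[A_T]-pT)\bigr)$, and then elementary rearrangement using $\lambda\le p$ and $\E[A_T]\ge pT$. The only difference is cosmetic bookkeeping in the final split into the two bounds (your route even gives the slightly sharper coefficient $\lambda/p$ in place of $2\lambda/p$), so no gap.
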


\subsubsection{Rates}

Theorems \ref{thm:stoch} and \ref{thm:mixed-loss-prod} show regret bounds with logarithmic dependence in $N$. The following concept separates rates attainable with this advantageous property from those with worse $N$-dependence.
\begin{defi}\emph{(Logarithmically Achievable Rates)} We say that rates $(\alpha, \mu)$ are logarithmically achievable against adversaries from a class $\mathscr{C}$ if there exists a learner that attains a $(\psi, \phi)$-regret against such adversaries for $\psi,\phi$ that witness the rate $(\alpha, \mu)$, and satisfy that for every fixed $T,$ $\max(\phi(T,N), \psi(T,N)) = O(\mathrm{polylog}(N)).$
\end{defi}
Since $A_T^* \le T,$ choosing $p = T^{-u}, \lambda = T^{-(u+v)}$ in \textsc{mixed-loss-prod} for any $(u,v) \in [0,1]^2, u+v\le 1$ allows us to attain rates of the form $(\alpha, \mu ) = (\max(1-u, u+v), 1-v).$ Notice that for any fixed $v$, the smallest $\alpha$ so attainable is $\nicefrac{1+v}{2}$. This shows \begin{mycor}\label{cor:mixed_loss_prod_rates}
    Any rate $(\alpha, \mu)$ such that $\alpha + \mu/2 > 1$ is logarithmically achievable against adaptive adversaries. 
\end{mycor}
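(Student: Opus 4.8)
The plan is to simply instantiate Theorem \ref{thm:mixed-loss-prod} with a well-chosen family of parameters $(p, \lambda)$ indexed by exponents $(u,v)$, exploit the trivial bound $A_T^* \le T$, and then read off which rates $(\alpha,\mu)$ appear as the exponents of $T$ in the resulting witnesses $\phi,\psi$. First I would take $p = T^{-u}$ and $\lambda = T^{-(u+v)}$ for a pair $(u,v) \in [0,1]^2$ with $u + v \le 1$; this choice respects the hypothesis $\lambda \le p$ of Theorem \ref{thm:mixed-loss-prod} since $T^{-(u+v)} \le T^{-u}$ whenever $v \ge 0$ and $T \ge 1$, and $\eta = \nicefrac12$ is fixed as required.

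Next I would substitute into the two bounds of Theorem \ref{thm:mixed-loss-prod}. Using $\mathbb{E}[A_T^*] \le T$, the mistakes bound becomes
\[
\mathbb{E}[M_T] \le \frac{2\log N}{p} + \frac{2\lambda}{p}\,\mathbb{E}[A_T^*] \le 2 T^u \log N + 2 T^{-v}\cdot T = 2T^u\log N + 2T^{1-v},
\]
so as a function of $T$ this is $O\big((\log N)\,T^{\max(u,\,1-v)}\big)$; since $u + v \le 1$ gives $u \le 1 - v$, the exponent is $1-v$, and hence $\limsup_T \log \psi(T,N)/\log T \le 1-v =: \mu$. Similarly the abstention bound is
\[
\mathbb{E}[A_T - A_T^*] \le pT + \frac{2\log N}{\lambda} = T^{1-u} + 2T^{u+v}\log N,
\]
which is $O\big((\log N)\,T^{\max(1-u,\,u+v)}\big)$, giving $\limsup_T \log\phi(T,N)/\log T \le \max(1-u, u+v) =: \alpha$. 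Both witnesses are of the form $\mathrm{polylog}(N)\cdot T^{(\cdot)}$, so for each fixed $T$ they are $O(\mathrm{polylog}(N))$, which is exactly the condition in the definition of logarithmic achievability; thus $(\alpha,\mu) = (\max(1-u, u+v),\, 1-v)$ is logarithmically achievable against adaptive adversaries for every admissible $(u,v)$.

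It remains to check that the set of pairs $\{(\max(1-u,u+v),\,1-v) : u,v \in [0,1],\, u+v\le 1\}$, once closed upward (recall that if $(\alpha,\mu)$ is achievable so is any $(\alpha',\mu')$ with $\alpha'\ge\alpha,\mu'\ge\mu$), contains every $(\alpha,\mu)$ with $\alpha + \mu/2 > 1$. Fix such a target $(\alpha,\mu)$; set $v = 1-\mu$, so $\mu = 1-v$ is matched exactly. To minimize the first coordinate over the choice of $u \in [0, 1-v]$ one balances the two terms $1-u$ and $u+v$, i.e. takes $u = (1-v)/2$, yielding $\max(1-u,u+v) = (1+v)/2 = 1 - \mu/2$. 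Since $\alpha + \mu/2 > 1$ means $\alpha > 1-\mu/2$, the target $\alpha$ exceeds the achieved first coordinate, so $(\alpha,\mu)$ lies in the upward closure of the achievable set. The only routine points to verify are that $u = (1-v)/2 \in [0,1]$ and $u + v = (1+v)/2 \le 1$ (both immediate from $v\in[0,1]$) and the asymptotic reading of $\limsup$s above; I do not anticipate a genuine obstacle, as this is essentially a parameter-optimization bookkeeping argument on top of Theorem \ref{thm:mixed-loss-prod}. If anything, the one subtlety worth stating carefully is the distinction between "witnesses that are polylog in $N$ for each fixed $T$" and uniform-in-$T$ statements — the bounds above are products $\mathrm{polylog}(N)\cdot\mathrm{poly}(T)$, which satisfy the former but not the latter, and this is precisely why the definition of logarithmic achievability is phrased per fixed $T$.
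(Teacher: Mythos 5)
Your proposal is correct and follows essentially the same route as the paper: instantiate Theorem \ref{thm:mixed-loss-prod} with $p = T^{-u}$, $\lambda = T^{-(u+v)}$, bound $A_T^* \le T$ to read off the rates $(\max(1-u,u+v),\,1-v)$, and balance at $u=(1-v)/2$ to reach the frontier $\alpha = 1-\mu/2$. Your extra bookkeeping (checking $\lambda \le p$, the per-fixed-$T$ $\mathrm{polylog}(N)$ condition, and the upward closure) only makes explicit what the paper leaves implicit.
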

The following figure illustrates the worst case achievable rate regions in the three cases considered.

\begin{figure}[!hb]
    \centering
    \includegraphics[width = .8\textwidth]{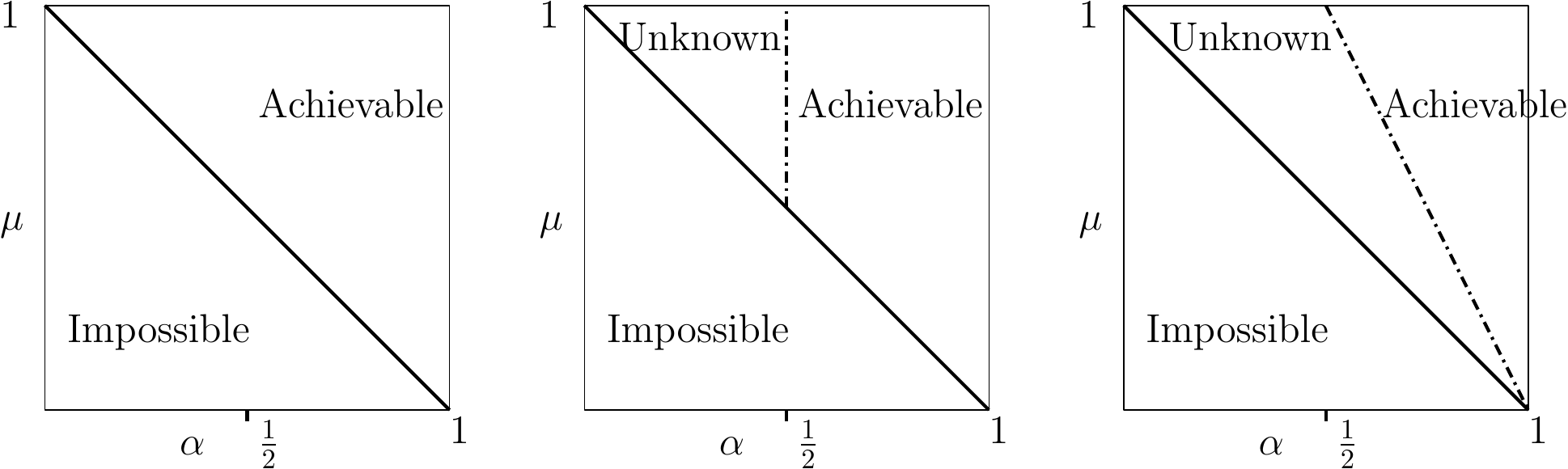}
    \caption{Left shows rates achievable against adaptive adversaries. Middle and right show logarithmically achievable rates against stochastic and adaptive adversaries respectively.}
    \label{fig:rates}
\end{figure}

\textbf{Adaptive Rates} Observe that if $A_T^* \asymp T^{\alpha^*}$ for some $\alpha^* < 1,$ then nominally, the achievable rates can be improved. Indeed, with the parametrisation $p = T^{-u}, \lambda = T^{-u+v}$, we may attain rates of the form $(\alpha, \mu) = (\max(1-u, u+v), \max(u, \alpha^* - v))$. Further, a given mistake rate $\mu$ can be attained by setting $u = \mu, $ and $\alpha^* - v \le \mu.$ With these constraints, the smallest abstention rate attainable is \[ \widetilde{\alpha}(\mu; \alpha_*) = \max\left(1 - \mu,  (1 + (\alpha^* - \mu)_+)/2\right), \] achieved by setting $v = (\alpha_* - \mu)_+, u = \min(1 - (\alpha^* - \mu)_+, 2\mu)/2.$ Such rates can in fact be attained adaptively, without prior knowledge of $\alpha^*.$ The main bottleneck here is that the quantity $A_T^*$ is not observable. However, every function $g$ that is never \emph{observed} to make a mistake satisfies $\sum(\ell_t^g)^2 = \lambda^2 \sum \indi\{g(X_t) = \dk\},$ and such functions are identifiable given \smash{$\hist^{\mathfrak{L}}_t$}. Let \[ B_t^* := \min \sum_{s \le t} \indi\{g(X_s) = \dk\} \quad \textrm{s.t.}\quad \sum_{s\le t} C_s\indi\{g(X_s) \not\in \{\dk,Y_t\} = 0. \] Note that $B_t^*$ grows monotonically, and is always smaller than $A_t^* = \sum_{s \le t} \indi\{f^*(X_t) = \dk\}$. We show the following in \S\ref{appx:adapt} via a scheme that adaptively sets $p,\lambda$ according to $B_t^*$. \begin{myth}\label{thm:mixed_loss_adaptive_rate}
    For any $\alpha^*, \mu, \epsilon \in (0,1],$ Algorithm \ref{alg:adapt} attains, without prior knowledge of $\alpha^*,$ any rate of the form $(\widetilde{\alpha}(\mu, \alpha^*) + \varepsilon, \mu+\epsilon)$ against adaptive adversaries that induce $A_T^* \le T^{\alpha^*}$ almost surely. 
\end{myth}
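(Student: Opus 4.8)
The plan is to run \textsc{mixed-loss-prod} in epochs, with the exploration rate $p$ and mixing weight $\lambda$ held fixed inside an epoch but made progressively more aggressive across epochs according to the observable statistic $B_t^*$. Discretise the unknown exponent $c:=(\alpha^*-\mu)_+$ on the grid $\{0,\varepsilon,2\varepsilon,\dots\}$: in the epoch whose working guess is $j\varepsilon$, set $v^{(j)}:=j\varepsilon$, $u^{(j)}:=\tfrac12\min(1-j\varepsilon,2\mu)$, $p^{(j)}:=T^{-u^{(j)}}$, $\lambda^{(j)}:=T^{-(u^{(j)}+v^{(j)})}$ (so $\lambda^{(j)}\le p^{(j)}$; and since $B_t^*\le T$ one never reaches $j>(1-\mu)/\varepsilon$, giving $u^{(j)}\ge\mu/2>0$ and $p^{(j)}<1/2$ for $T$ large). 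Let the epoch index at time $t$ be $j_t:=\max(\{0\}\cup\{j:B_t^*\ge T^{\mu+j\varepsilon}\})$; since $B_t^*$ is nondecreasing so is $j_t$, hence the run passes through at most $\lceil1/\varepsilon\rceil+1$ epochs, and each time $j_t$ jumps we restart the \textsc{prod} weights at $1$ with the new parameters. Everything else — the uniform exploration coin $C_t$, the loss $\ell_t^f$, and the bookkeeping of which functions have been observed to err (exactly what $B_t^*$ needs) — is as in \textsc{mixed-loss-prod}.

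First I would record a cost-free refinement of Theorem~\ref{thm:mixed-loss-prod}: since $\sum_t(\ell_t^g)^2=\lambda^2\sum_t\indi\{g(X_t)=\dk\}$ for any $g$ never observed to err, one may feed the \textsc{prod} regret bound the comparator that minimises $\sum_t\indi\{g(X_t)=\dk\}$ over such $g$ (it exists, e.g.\ $f^*$), turning the mistake bound into $\mathbb{E}[M_T]\lesssim\frac{\log N}{p}+\frac{\lambda}{p}\mathbb{E}[B_T^*]$ while the abstention bound $\mathbb{E}[A_T-A_T^*]\le pT+\frac{2\log N}{\lambda}$ is unchanged. Apply this inside epoch $j$, viewed as its own adaptive-adversary game with constant parameters: with $A^{*,(j)}$ the smallest epoch-$j$ abstention count of a function making no mistakes in epoch $j$, and $\widetilde B^{(j)}$ the smallest epoch-$j$ abstention count of a function not observed to err during epoch $j$, we get $\mathbb{E}[M^{(j)}]\lesssim\frac{\log N}{p^{(j)}}+\frac{\lambda^{(j)}}{p^{(j)}}\mathbb{E}[\widetilde B^{(j)}]$ and $\mathbb{E}[A^{(j)}_{\mathrm{alg}}-A^{*,(j)}]\le p^{(j)}T_j+\frac{2\log N}{\lambda^{(j)}}$, $T_j$ the epoch length. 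Two monotonicity facts close the argument: (i) the global minimiser realising $B^*_{\tau_j}$ (with $\tau_j$ the last step of epoch $j$) was never observed to err on $[1{:}\tau_j]\supseteq$ epoch $j$, so $\widetilde B^{(j)}\le B^*_{\tau_j}<T^{\mu+(j+1)\varepsilon}$ by the definition of $j_{\tau_j}=j$; and (ii) $f^*$ makes no mistakes anywhere, so $\sum_j A^{*,(j)}\le A_T^*$ and hence $A_{T,\mathrm{alg}}-A_T^*\le\sum_j(A^{(j)}_{\mathrm{alg}}-A^{*,(j)})$.

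Substituting (i) together with $\lambda^{(j)}/p^{(j)}=T^{-j\varepsilon}$ gives $\mathbb{E}[M^{(j)}]\lesssim T^{\mu+\varepsilon}+T^{u^{(j)}}\log N\lesssim T^{\mu+\varepsilon}\log N$ (using $u^{(j)}\le\mu$), and summing over the $O(1/\varepsilon)$ epochs yields mistake rate $\mu+\varepsilon$. For abstentions, $\sum_j p^{(j)}T_j\le p^{(J_{\max})}T$ ($p^{(j)}$ increases in $j$) and $\sum_j\frac{2\log N}{\lambda^{(j)}}\le(J_{\max}+1)\frac{2\log N}{\lambda^{(J_{\max})}}$ ($1/\lambda^{(j)}$ increases), where $J_{\max}$ is the final epoch. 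Here the hypothesis $A_T^*\le T^{\alpha^*}$ a.s.\ bites: $B_T^*\le A_T^*\le T^{\alpha^*}$ forces $w:=J_{\max}\varepsilon\le(\alpha^*-\mu)_+=c$ a.s., so the two relevant exponents $1-u^{(J_{\max})}=\max(\tfrac{1+w}{2},1-\mu)$ and $u^{(J_{\max})}+v^{(J_{\max})}\le\tfrac{1+w}{2}$ are both at most $\widetilde\alpha(\mu,\alpha^*)=\max(\tfrac{1+c}{2},1-\mu)$. Hence both abstention terms are $O(T^{\widetilde\alpha(\mu,\alpha^*)}\log N)$, giving abstention rate $\widetilde\alpha(\mu,\alpha^*)$, and in particular the claimed rate $(\widetilde\alpha(\mu,\alpha^*)+\varepsilon,\mu+\varepsilon)$.

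The main obstacle is not a single estimate but making the epoch decomposition legitimate. Restarting the \textsc{prod} weights means the epoch-$j$ guarantee can only reference \emph{epoch-local} comparators, which is precisely why the mistake bound has to be re-expressed through the observable $\widetilde B^{(j)}$ (dominated by $B^*_{\tau_j}$, hence by the threshold schedule) rather than the horizon quantity $A_T^*$, and why the excess abstention has to be summed through the $A^{*,(j)}$ and reassembled via fact~(ii). Second, the epoch boundaries $\tau_j$ are random ($\hist^{\mathfrak{L}}$-measurable stopping times), so turning the deterministic \textsc{prod} regret inequality on an epoch into an expectation bound needs the stepwise conditional independence $C_t\perp f_t\mid\hist^{\mathfrak{L}}_{t-1}$ to relate $\mathbb{E}\sum_{t\in\text{ep }j}\langle\pi_t,\ell_t\rangle$ to $\mathbb{E}[M^{(j)}]$ and $\mathbb{E}[A^{(j)}_{\mathrm{alg}}]$. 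A smaller point to be careful about: $B_t^*$ can trail far behind $A_t^*$ at intermediate times, so the transition rule perpetually \emph{under}estimates $c$ — but this is harmless, since underestimation only shrinks $p^{(j)}$ and enlarges $\lambda^{(j)}$ versus the oracle, which can only help the abstention bound, while the mistake bound is insulated because $\widetilde B^{(j)}$ is capped directly by the threshold schedule.
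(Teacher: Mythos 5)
Your proposal is correct and follows essentially the same route as the paper: a doubling-trick over phases driven by the observable statistic $B_t^*$ crossing thresholds of the form $T^{\mu + j\cdot(\mathrm{grid})}$, restarting \textsc{mixed-loss-prod} with phase-indexed $(p,\lambda)$, per-phase \textsc{prod} bounds against phase-local never-observed-to-err comparators, and a phase count bounded via $B_T^* \le A_T^* \le T^{\alpha^*}$ almost surely. The only differences are cosmetic --- you use a fixed $\varepsilon$-grid and the global $B_t^*$ where the paper uses granularity $\theta \asymp 1/\log T$ and a per-phase re-initialised $B^*$, and you split the per-epoch analysis between a $B$-comparator for mistakes and an $A^*$-comparator for abstentions where the paper uses the $B$-comparator for both --- and the possible overshoot of $B^*$ past the threshold at the transition round is handled at the same (glossed, easily patched) level of rigour as in the paper's own argument.
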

The rates $\widetilde{\alpha}$ essentially interpolate between the second and third panels of Fig.~\ref{fig:rates}. Concretely the region achieved consists of the intersection of the regions $\{\alpha > 1/2\}, \{\alpha + \mu > 1\}$ and $\{ 2\alpha + \mu > 1 + \alpha^*\},$ with the last set being active only when $\alpha^* \ge \nicefrac{1}{2}.$

\subsection[An instance space dependent bound]{A $|\mathcal{X}|$-dependent analysis of \textsc{vue}}\label{sec:improve_N_vue}

We give an alternate mistake analyse for \textsc{vue} over finite domains. The analysis is slightly stronger: let $\mathbf{y} \in ([1:K]\cup\{\dk\})^{|\mathcal{F}|}$ be indexed by elements of $\mathcal{F}$, with the `$f$th' entry $\mathbf{y}_f$ reprsents a value that $f$ might take. Consider the resulting partition of $\{ \mathcal{X}_{\mathbf{y}} \}_{\mathbf{y} \in ([1:K] \cup \{\dk\})^\mathcal{F}},$ where each part $\mathcal{X}_{\mathbf{y}}\subset \mathcal{X}$ contains points that have the same pattern of function values, that is $\mathcal{X}_{\mathbf{y}} = \{x: \forall f \in \mathcal{F}, f(x) = \mathbf{y}_f\}.$ The following argument can be run unchanged by replacing single $x$s in the following by all $x$s in one $\mathcal{X}_{\mathbf{y}}$. That is, we may replace $|\mathcal{X}|$ in the following Theorem~\ref{thm:adv_x} with $|\{\mathcal{X}_\mathbf{y}\}|$. For simplicity, we present the argument for $|\mathcal{X}|$ only.

Denote $\widehat{\mathcal{Y}}^x_t := \{f(x) : f \in \vs_t\}.$ Notice that after the first time $t$ such that $X_t = x, \pred_t = \dk,$ we will remove from the version space all classifiers that did not abstain or output the correct classification at time $t$. Thus if we define $y^x\in[1:K]$ to be $Y_t$, then for all subsequent times, $\widehat{\mathcal{Y}}^x_t \subset \{\dk, y^x\}$. As a result, if we observe two mistakes at any given $x$, then we cannot make any more mistakes at a subsequent time $t'$ with $X_{t'} = x,$ because the only remaining decision in $\widehat{\mathcal{Y}}^x_{t'}$ must be $\dk$. 

We may now proceed in much the same way as \S\ref{sec:adv} - instantiate $U_t^x = \indi\{X_t = x, \pred_t \not\in\{\dk, Y_t\}\},$ $B_t = C_t,$ and union bound over the $x$s. Then $|\widehat{\mathcal{Y}}_t^x| \ge 2$ if and only if $\widetilde{W}_t^x \le 1,$ and, invoking Lemma \ref{lem:alln}, up to such a time at most $W_t^x = O(\nicefrac{\log|\mathcal{X}|}{p})$ mistakes may be made on instances such that $X_t = x$. But then totting up, we make at most $O(|\mathcal{X}|\log|\mathcal{X}|/p)$ mistakes, as encapsulated below 
\begin{myth}\label{thm:adv_x}
Algorithm \ref{alg:versioning} instantiated with $p\le \nicefrac{1}{2}$ and run against an adaptive adversary, attains the following with probability at least $1-\delta$ over the randomness of the learner and the adversary: \begin{align*} 
    M_T &\le \frac{9|\mathcal{X}| \log(2|\mathcal{X}|/\delta)}{p}\\
    A_T - A_T^* &\le pT + \sqrt{ 2p(1-p) T \log(2/\delta)} + 2\log(2/\delta).
\end{align*} 
\end{myth}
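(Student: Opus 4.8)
The abstention bound coincides with the one in Theorem~\ref{thm:adv}, so I would only reprove the mistake bound and then combine the two by a union bound over failure events of probability $\delta/2$ each. For the abstention part: \textsc{vue} abstains at a round $t$ either because $C_t=1$ (in which case $|\widehat{\mathcal Y}_t|\ge 2$), or because $\widehat{\mathcal Y}_t=\{\dk\}$ is forced; in the latter case $f^*(X_t)\in\widehat{\mathcal Y}_t=\{\dk\}$, so a forced abstention is also an abstention of $f^*$. Hence, drawing $C_t$ at every round (which does not change the algorithm), $A_T-A_T^*\le\sum_{t\le T}C_t\sim\mathrm{Bin}(T,p)$, and Bernstein's inequality gives the claimed bound with probability $1-\delta/2$.

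\textbf{Structural fact for the mistake bound.} Fix $x\in\mathcal X$ and write $\widehat{\mathcal Y}^x_t:=\{f(x):f\in\vs_t\}$. A coin-triggered abstention at a round $s$ with $X_s=x$ deletes from $\vs_s$ every $f$ with $f(x)\notin\{\dk,Y_s\}$; call this an \emph{observed mistake at $x$} precisely when it strictly shrinks $\widehat{\mathcal Y}^x$. After the first observed mistake $\widehat{\mathcal Y}^x\subseteq\{\dk,y_1\}$ for some fixed $y_1\in[1:K]$, and a second observed mistake requires a surviving $f$ with $f(x)=y_1$ together with a freshly revealed label $\neq y_1$, hence leaves $\widehat{\mathcal Y}^x\subseteq\{\dk\}$; from then on \textsc{vue} is forced to abstain at every round with context $x$ and makes no further mistakes there. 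Also, whenever $X_t=x$ and $|\widehat{\mathcal Y}^x_t|=1$ the unique value is $f^*(x)\in\{\dk,Y_t\}$, so \textsc{vue} can err at $x$ only at rounds where $|\widehat{\mathcal Y}^x_t|\ge 2$, i.e.\ before the second observed mistake at $x$.

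\textbf{Invoking Lemma~\ref{lem:alln} and finishing.} I would take $\mathscr F_t:=\sigma(\hist^{\mathfrak A}_t,C_1,\dots,C_t)$, $B_t:=C_t$, and $U^x_t:=\indi\{X_t=x,\ \exists f\in\vs_t: f(x)\notin\{\dk,Y_t\}\}$. Since $\vs_t$ is $\hist^{\mathfrak A}_{t-1}$-measurable and the adversary does not see $C_t$, the coin $C_t$ is independent of $(\mathscr F_{t-1},U^x_t)$, so the hypotheses hold. Then $\widetilde W^x_t=\sum_{s\le t}U^x_sC_s$ counts observed mistakes at $x$ up to $t$, so by the structural fact $|\widehat{\mathcal Y}^x_t|\ge 2\Rightarrow\widetilde W^x_{t-1}\le 1$, while $W^x_t=\sum_{s\le t}U^x_s$ upper bounds the number of \textsc{vue} mistakes at $x$ up to $t$. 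Let $\tau_x$ be the first round at which $\widetilde W^x$ reaches $2$ (or $T$ otherwise); then $\widetilde W^x_{\tau_x-1}\le 1$, every \textsc{vue} mistake at $x$ falls at a round $\le\tau_x$, and $W^x_{\tau_x}\le W^x_{\tau_x-1}+1$. Applying Lemma~\ref{lem:alln} with failure probability $\delta/(2|\mathcal X|)$ and union-bounding over $x\in\mathcal X$ gives, with probability $1-\delta/2$, $W^x_{\tau_x-1}\le 8\log(2|\mathcal X|/\delta)/p$ for every $x$, hence at most $8\log(2|\mathcal X|/\delta)/p+1$ \textsc{vue} mistakes at each $x$; summing over $x$ and using $p\le\tfrac12$ (so $1\le\log(2|\mathcal X|/\delta)/p$) yields $M_T\le 9|\mathcal X|\log(2|\mathcal X|/\delta)/p$. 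A final union bound with the abstention event completes the proof, and the partition refinement $|\{\mathcal X_{\mathbf y}\}|$ in place of $|\mathcal X|$ goes through verbatim by running the same argument with $U^{\mathbf y}_t=\indi\{X_t\in\mathcal X_{\mathbf y},\ \exists f\in\vs_t:f(X_t)\notin\{\dk,Y_t\}\}$.

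\textbf{Main obstacle.} The delicate points are: defining $U^x_t$ through the version space rather than through \textsc{vue}'s realised action, so that $U^x_t\perp C_t$ and Lemma~\ref{lem:alln} applies; and recognising that a context $x$ stays ``live'' — capable of causing mistakes — until the \emph{second} observed label-change there, not the first, which is exactly why the ``$\widetilde W_t\le 1$'' strength of Lemma~\ref{lem:alln} is needed here rather than the ``$\widetilde W_t=0$'' version that sufficed in \S\ref{sec:adv} (cf.\ the footnote after Lemma~\ref{lem:alln}). Everything else — the collapse of $\widehat{\mathcal Y}^x$ after two label-changes, the stopping-time bookkeeping, the union bound over $\mathcal X$, and the Bernstein bound for $\sum_t C_t$ — is routine.
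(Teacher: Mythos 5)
Your proof is correct and follows essentially the same route as the paper's: a per-context application of Lemma~\ref{lem:alln} at level $\delta/(2|\mathcal{X}|)$ with the ``$\widetilde{W}_t\le 1$'' strength, the two-strikes collapse of $\widehat{\mathcal{Y}}^x_t$ to $\{\dk\}$, a union bound over $x$, and Bernstein's inequality for $\sum_t C_t$. If anything, your choice to define $U^x_t$ through the version space (existence of an erring $f\in\vs_t$ at $x$) rather than through the realised action $\pred_t$, and to include the coins in the filtration, is a cleaner instantiation of the independence hypothesis of Lemma~\ref{lem:alln} than the paper's literal bookkeeping, but the argument is the same.
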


Along with the bound itself, the above result makes a couple of points regarding the characterisation of $N$-dependence of the regrets in online selective classification. Firstly, it suggests that efficient analyses, and possibly schemes, must incorporate the structure of $\mathcal{X};$ and secondly it shows that constructions that attempt to show superlogarithmic in $N$ lower bounds must have both $N$ and $|\mathcal{X}|$ large, and thus typical strategies placing a very rich class on a small domain will not be effective.

\section{Experiments}\label{sec:exp}

We evaluate the performance of Algorithm \ref{alg:stoch} on two tasks - CIFAR 10 \cite{CIFAR}, and GAS \cite{GAS} - see \S\ref{appx:exp} for details of implementation, and \href{https://github.com/anilkagak2/Online-Selective-Classification}{here} for the relevant code. The former represents a setting where an expert can be adaptively invoked, which we treat by providing the true labels of the classes upon abstention. The second case is more explicitly an adaptive feature selection task - the GAS dataset has features from 16 sensors, and we train one model, $g$, on all of this data, while the selective classification task operates on data from the first 8 sensors only, and receives the output of $g$ when abstaining. The standard accuracies of the model classes we implement are $\sim 90\%$ on CIFAR-10, and $\sim 77\%$ on GAS. In both cases, a training set is used to learn a parameterized family of selective classifiers, $f_{\mu,t}$. The hyperparameters $(\mu,t)$ provide control over various levels of accuracy and abstention. For training, we leverage a recent method \cite{gangrade2021selective} that yields such a parameterisation, which is discretised to get $N=600$ of these functions to form our class $\mathcal{F}$. We then sequentially classify the test datasets of each of the tasks. 


One subtlety with the setting is that none of the selective classifiers in $\mathcal{F}$ actually make no mistakes. To avoid the trivialities emerging from this, we relax the versioning condition to only drop classifiers that are seen to make mistakes on at least  $\varepsilon N_t + \sqrt{2\varepsilon N_t}$ mistakes at time $t$, where $N_t$ is the number of times feedback was received up to time $t$, and the second term handles noise. Additionally, if it turns out that all functions in $\vs_t$ are wrong on a particular observed instance, we ignore this feedback (since such an error is unavoidable). Such variations of ‘relaxed versioning’ are natural ideas when extending the present problem to the one where the competitor may be allowed to make non-zero mistakes, although its analysis is beyond the scope of this paper. The scheme's viablility in this extended setting with only simple modifications indicates the practicality of such strategies.

{Below, we take the competitor to be the function that makes the fewest mistakes, denoted as $M_T^*$. If there is more than one such function, we take the one that makes the fewest abstention to get $A_T^*$. We measure \emph{excess mistakes} $M_T - M_T^*$ and excess abstentions $A_T - A_T^*$ with respect to this competitor.}\\

\textbf{Behaviour of regrets with the length of the game} Fig.~\ref{fig:vary_T} presents the excess mistakes as a fraction of $T$ for the two datasets, i.e.~$\nicefrac{M_T - M_T^*}{T},$ as $T$, is varied. The learners are all instantiated with the exploration rate $p = \nicefrac{1}{\sqrt{T}}$. We observe that the excess abstentions are negative (or near-zero) over this range (see Fig.~\ref{fig:abs_vary_T} in \S\ref{appx:exp}). Therefore we do not plot these below (the orange line is MMEA, see below). We note that the relative mistakes stay below $\sqrt{\nicefrac{2\log N}{T}},$ bearing out the theory.\\ 

\textbf{Achievable Operating Points of Mistakes and Abstentions} Fig.~\ref{fig:vary_p} shows the mistake and abstention rates attainable by varying $p$ and $\epsilon$, while holding $T$ fixed at $500$ (which is large enough to show long-run structure, but small enough allow fast experimentation). Concretely, we vary these linearly for $20$ values of $p \in [0.015, 0.285],$ and $10$ values of $\epsilon \in [0.001,0.046].$ The resulting values represent operating points that can  be attained by a choice of $p,\epsilon$. The same plot includes lines that represent the operating points when the scheme is run with $\epsilon = 0.001,$ the smallest value we take. Note that in practice, the best choices of $\epsilon,p$ may be data dependent, and choosing them in an online way is an interesting open problem (also see \S\ref{appx_sensitivity_wrt_eps}). \\

\textbf{The Price of Being Online} We characterise this in two ways beyond the excess mistakes. \begin{itemize}[nosep, topsep = 0pt]
    \item In Fig.~\ref{fig:vary_T}, we also plot the `mistake-matched excess abstention' (MMEA). This is defined as follows - if the scheme concludes with having made $M_T$ mistakes, we find, in hindsight, the classifier that minimises the number of abstentions, subject to making at most $M_T$ mistakes. The MMEA is the excess abstention of the learner over those of this relaxed competitor, and represents how many fewer abstentions a batch learner would make if allowed to make as many mistakes as the online learner. Notice that this MMEA remains well controlled in Fig.~\ref{fig:vary_T}, and appears to scale as \smash{$O(\sqrt{T})$.}
    \item In Fig.~\ref{fig:vary_p},  we also plot the post-hoc operating points of the classifiers in $\mathcal{F}$ as black triangles. This amounts to plotting the optimal abstentions amongst classifiers that make at most $m$ mistakes, varying $m$.\footnote{Observe that the MMEA corresponds to the horizontal distance between a red-point with $m$ mistakes, and the left-most black point with $y$-coordinate under $m$.} We note that the red operating points of the scheme get close to the black frontier, illustrating that the inefficiency due to being online is limited. As the time-behaviour of MMEA in Fig.\ref{fig:vary_T} illustrates, the inefficiency is expected to grow sublinearly with $T$, and to thus vanish under amortisation. 
\end{itemize} 
\begin{figure}[ht]
    \centering
    \includegraphics[width = 0.45\textwidth]{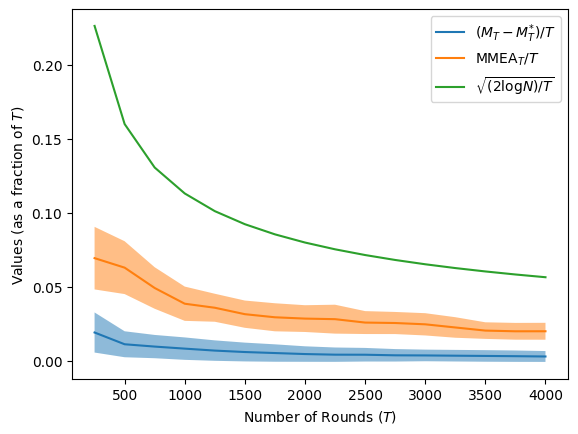}~\includegraphics[width=0.45\textwidth]{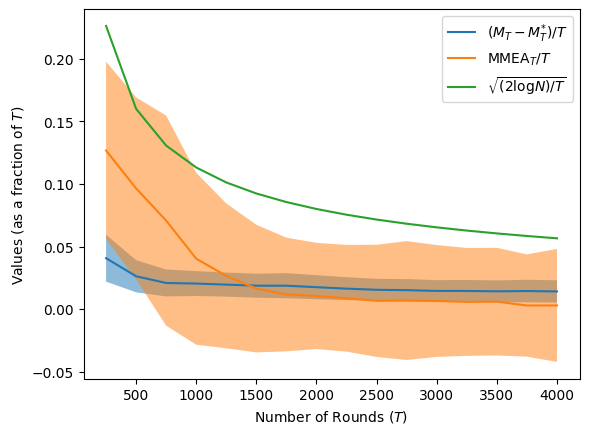}
    \caption{\small{$M_T-M_T^*$, and MMEA as fractions of $T$, as the number of rounds $T$ is varied for CIFAR-10 (left) and GAS (right). The plots are averaged over $100$ runs, and one-standard-deviation error regions are drawn. }}
    \label{fig:vary_T}
\end{figure}
\begin{figure}[ht]
    \centering
    \includegraphics[width = 0.45\textwidth]{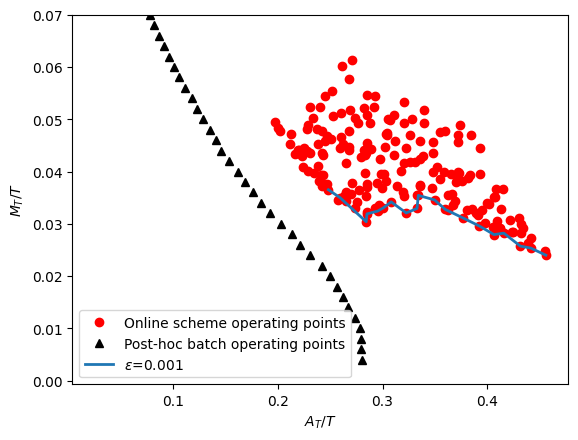}~\includegraphics[width=0.45\textwidth]{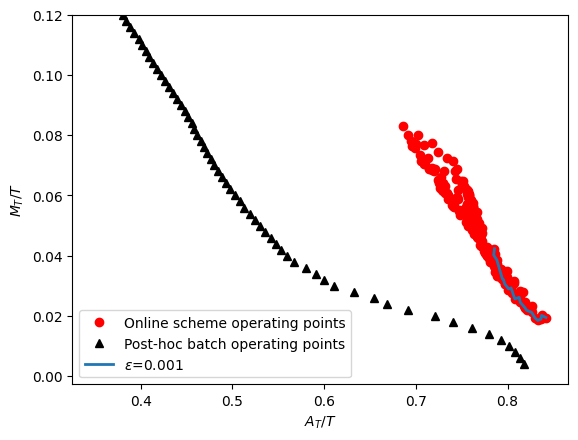}
    \caption{\small{Operating points for our scheme as $\epsilon$ and $p$ are varied are represented as red dots (for CIFAR-10 in the left, and GAS in the right). The black triangles represent operating points obtained by batch learning with the benefit of full feedback. The blue lines interpolate points obtained by varying $p$ for $\epsilon = 0.001$ Points are averaged over 200 runs. Note that the values are raw mistakes and abstentions, and not regrets.}}
    \label{fig:vary_p}
\end{figure}

\section{Discussion}

Online selective classification offers a primitive that has relevance to both safety-critical and resource-limited settings. In the paper, we highlighted the role of long-term abstentions in such situations, and studied this problem under the feedback limitation that labels are only provided when the system abstains, which is the only time high-complexity evaluation would be invoked in a selective classification system. When working with a finite class of model, we identified a simple scheme that provides a tight (in terms of $T$) trade-off between mistakes and excess abstentions against adaptive adversaries. We further discussed two schemes that improve upon the dependence of the same on the size of the model class - tightly against stochastic adversaries, and at the cost of some rate performance against adaptive adversaries. Together, these schemes and analyses provide some basic foundations for the problem when competing against no-mistake models. Additionally, we carried out empirical studies that validate the scheme in the stochastic case, and demonstrate that with minor modifications, the scheme is resilient to the situation where no selective classifier in the model class is mistake-free. A number of interesting questions remain open, and we discuss a few of these below. 

Perhaps the most basic question left open by the above study is how the minimax regrets against adaptive adversaries depend on $N$. Along with being a basic scientific question, this issue has implications for whether the results can be extended to infinite classes. Indeed, under assumptions of bounded combinatorial dimensions, the \textsc{vue-prod} and \textsc{mixed-loss-prod} schemes can be extended to infinite model classes, but the basic technique to do so yields trivial bounds for \textsc{vue} due to the linear dependence on $N$. If this dependence could be improved to logarithmic, the extension to model classes with finite (multiclass versions of) Littlestone dimension would be immediate.

A practically relevant and theoretically interesting direction is online SC but where the competitor can make non-zero mistakes. This can be set up in at least two ways - either an error parameter $\epsilon$ is given to the learner, which must ensure that both notions of regret are small against competitors that make at most $\epsilon T$ mistakes; or, no explicit error parameter is specified, and the learner is required to compete against the least mistake-prone model in a given set (similarly to \S\ref{sec:exp}). Both settings raise new challenges, since one must relax the notion of versioning used in the above work for related schema to be viable. The latter setting raises a further issue of how one can adapt to the mistake rate of the competitor. Also of practical relevance is the case where abstentions are not equally penalised, but have some variable cost. Here too, one can study variants of signalling regarding whether the cost of abstention is available before or only after an abstaining decision is made.

Finally, we observe that while tight, the random exploration technique is somewhat unsatisfying, and practically a context-adapted abstention strategy is likely to offer meaningful advantages over it. In analogy with the exploration in label-efficient prediction, one direction towards exploring context-aware methods is to study more concrete structured situations, such as linear models with noisy feedback that are popular in the investigation of online selective sampling.

\paragraph*{Acknowledgements} Our thanks to Tianrui Chen for helpful discussions.

\paragraph*{Funding Disclosure} This research was supported by the Army Research Office Grant W911NF2110246, the National Science Foundation grants CCF-2007350 and CCF-1955981, ARM Research Inc, and the Hariri Data Science Faculty Fellowship Grants. Additional revenues related to this work: AC was a visiting researcher at Google when this work was completed.

\printbibliography

\clearpage

\begin{appendix}

\section{An Adversarial Anytime Uniform Law of Large Numbers For Probing Binary Sequences}\label{appx:alln}

\subsection{Proofs of Lemma\ref{lem:alln}}

We begin with a simple lemma that underlies the remaining argument. Below, $\kappa$ is chosen so that $\kappa''(0) = 1$. \begin{mylem}\label{lem:self_norm}
    Let $\mathscr{F}_t, U_t,B_t, W_t, \widetilde{W}_t$ be as in Lemma \ref{lem:alln}. Let $\bp = 1-p$. Then for any $\eta \in \mathbb{R},$  the process \[\xi_t^\eta := \exp{\eta (W_t - \widetilde{W}_t/p) - \kappa(\eta) V_t} \] is a non-negative, $\mathscr{F}_t$-adapted martingale, where \begin{align*}
        V_t &= \frac{\bp}{p} W_t, \\
        \kappa(\eta) &= \frac{p}{\bp} \log\left(p e^{-\eta \bp/p} + \bp e^\eta \right).
    \end{align*}
        
\begin{proof}
    The nonnegativity of $\xi_t^\eta$ is trivial, and it is $\mathscr{F}_t$-adapted since it is a deterministic function of the adapted processes $W_t, \widetilde{W}_t$. We need to argue that $\xi$ is a martingale. To this end, observe that since $W_t = \sum_{s < t} U_s, \widetilde{W}_t = \sum_{s < t} U_s B_s,$ \[ \xi_t^{\eta} = \xi_{t-1}^\eta \cdot \exp{\eta U_t (1-B_t/p - \bp\kappa(\eta)/p) }.\]
    Due to the independence of $B_t$ from $\sigma(U_t, \mathscr{F}_{t-1}),$ we have \begin{align*}
        &\phantom{=} \mathbb{E}[ \exp{\eta U_t (1-B_t/p)}|\mathscr{F}_{t-1},U_t]\\ &= \left( p e^{ -\eta U_t \bp/p} + \bp e^{\eta U_t} \right)\\
        &\overset{*}= \left( p e^{ -\eta \bp/p)} + \bp e^{\eta} \right)^{U_t} = \exp{ \frac{\bp}{p} U_t \kappa(\eta)},
    \end{align*} 
    where the equality marked $*$ exploits the fact that $U_t$ is $\{0,1\}$-valued. Rearranging, we have \[\mathbb{E}\left[\exp{\eta U_t (1-B_t/p) - \frac{\bp}{p} U_t\kappa(\eta)}\middle|\mathscr{F}_{t-1},U_t\right] = 1, \] and exploiting the tower rule, we conclude that \[ \mathbb{E}[\xi_t^{\eta}|\mathscr{F}_{t-1}] = \xi_{t-1}^{\eta} \mathbb{E}\left[ \mathbb{E}\left[ \exp{\eta U_t (1-B_t/p) - \frac{\bp}{p} U_t\kappa(\eta)}\middle|\mathscr{F}_{t-1},U_t\right] \middle| \mathscr{F}_{t-1}\right] = \xi_{t-1}^\eta. \qedhere \] 
\end{proof}
\end{mylem}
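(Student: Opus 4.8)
The plan is to verify the three defining properties of a non-negative martingale in turn, with essentially all of the work concentrated in a single conditional-expectation identity. Non-negativity is immediate since $\xi_t^\eta$ is an exponential, and $\{\mathscr{F}_t\}$-adaptedness follows because $W_t$, $\widetilde{W}_t$, and hence $V_t$, are sums of the adapted increments $U_s$ and $U_s B_s$, so $\xi_t^\eta$ is a measurable function of $\mathscr{F}_t$-measurable quantities. The substance is therefore the one-step identity $\mathbb{E}[\xi_t^\eta \mid \mathscr{F}_{t-1}] = \xi_{t-1}^\eta$.

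First I would isolate the multiplicative increment. Using $W_t - W_{t-1} = U_t$, $\widetilde{W}_t - \widetilde{W}_{t-1} = U_t B_t$, and $V_t - V_{t-1} = (\bp/p)U_t$, the exponent increments by
\[ \eta\!\left(U_t - \tfrac{1}{p}U_t B_t\right) - \tfrac{\bp}{p}\kappa(\eta) U_t = \eta U_t\!\left(1 - \tfrac{B_t}{p}\right) - \tfrac{\bp}{p}\kappa(\eta)\,U_t, \]
so that $\xi_t^\eta = \xi_{t-1}^\eta \exp{\eta U_t(1 - B_t/p) - (\bp/p)\kappa(\eta) U_t}$. Since $\xi_{t-1}^\eta$ is $\mathscr{F}_{t-1}$-measurable, it factors out of the conditional expectation, and it suffices to show that the conditional expectation of the increment equals $1$.

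To evaluate that conditional expectation I would condition first on the finer $\sigma$-field $\sigma(\mathscr{F}_{t-1}, U_t)$ and then invoke the tower rule. The crucial structural input is the hypothesis carried over from Lemma \ref{lem:alln} that $B_t \sim \mathrm{Bern}(p)$ is jointly independent of $\sigma(\mathscr{F}_{t-1}, U_t)$; this lets me treat $U_t$ as frozen and integrate only over the two outcomes of $B_t$, yielding
\[ \mathbb{E}\!\left[e^{\eta U_t(1 - B_t/p)} \,\middle|\, \mathscr{F}_{t-1}, U_t\right] = p\,e^{-\eta U_t \bp/p} + \bp\, e^{\eta U_t}. \]
The key trick is then to exploit that $U_t$ is $\{0,1\}$-valued: both sides agree at $U_t = 0$ (value $1$) and at $U_t = 1$ (value $p e^{-\eta\bp/p} + \bp e^{\eta}$), so the right-hand side equals $(p e^{-\eta\bp/p} + \bp e^{\eta})^{U_t}$. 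By the definition of $\kappa$, this is exactly $\exp{(\bp/p)U_t\kappa(\eta)}$, which cancels the $-(\bp/p)\kappa(\eta)U_t$ term in the increment; hence the conditional expectation of the increment is $1$, and the tower rule gives $\mathbb{E}[\xi_t^\eta \mid \mathscr{F}_{t-1}] = \xi_{t-1}^\eta$.

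The main obstacle — really the only delicate point — is the correct bookkeeping of the conditioning. Because $U_t$ is only $\{\mathscr{F}_t\}$-adapted and need not be $\mathscr{F}_{t-1}$-measurable, one cannot condition on $\mathscr{F}_{t-1}$ alone and pull $U_t$ out; conditioning on $\sigma(\mathscr{F}_{t-1}, U_t)$ is what isolates the independent Bernoulli randomness of $B_t$. The binary nature of $U_t$ is precisely what \emph{linearises} the cumulant in $U_t$, which is the reason the compensator $V_t$ can be taken proportional to $W_t$ (with constant $\bp/p$) rather than growing quadratically. Once this setup is in place, the remaining algebra is routine.
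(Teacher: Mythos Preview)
Your proposal is correct and follows essentially the same approach as the paper's proof: factor out the multiplicative increment, condition on $\sigma(\mathscr{F}_{t-1}, U_t)$ to integrate over the independent Bernoulli $B_t$, use the binary nature of $U_t$ to rewrite the result as $(p e^{-\eta\bp/p} + \bp e^{\eta})^{U_t} = \exp{(\bp/p)U_t\kappa(\eta)}$, and finish with the tower rule. Your bookkeeping of the increment is in fact cleaner than the paper's displayed factorisation (which carries a stray $\eta$ in front of the $\kappa$ term that is silently corrected in the next line).
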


The following argument heavily exploits the techniques of Howard et al. \cite{howard2020time_uniform_chernoff}, and assumes familiarity with the same. It also exploits the property that only the upper tail of $\Delta_t$ is being controlled, although this is extended in the following section.

\begin{proof}[Proof of Lemma \ref{lem:alln}]
    
We define the deviation of $W_t$ from $\widetilde{W}_t$ as \[ \Delta_t := W_t - \frac{\widetilde{W}_t}{p}.\] Notice that $\Delta_0 = 1.$ As a result of the above lemma, $\Delta_t$ is a 1-sub-$\kappa$ process with the associated variance process $V_t$, in the sense of Definition $1$ of Howard et al.~\cite{howard2020time_uniform_chernoff}. In particular, since $\kappa$ is the (normalised) cumulant generating function of a centred Bernoulli random variable taking values $\{-\bp/p, 1\}$, the process is sub-binary. Further, since $p < \nicefrac{1}{2}, \bp/p > 1,$ and thus the process is sub-gamma, with the scale parameter $c = 0.$ \cite[\S3.1, and Prop.2]{howard2020time_uniform_chernoff}. 

We can thus invoke the line-crossing inequality of Corollary 1, part c) of Howard et al., instantiated with $c = 0$ to find that for any $x, m > 0$ \[ \mathbb{P}\left(\exists t: \Delta_t \ge x + \mathfrak{s}(x/m)(V_t -m)\right)\le \exp{- \frac{x^2}{2m}},\] where \cite[Table 2]{howard2020time_uniform_chernoff} \[ \mathfrak{s}(x/m) = \frac{x}{2m}.\] 

Plugging these in, we observe that \[ \mathbb{P}\left(\exists t: \Delta_t \ge \frac{x}{2}  + \frac{x}{2m}V_t\right) \le \exp{ -\frac{x^2}{2m}}.\]

Now notice that if $V_t \ge m,$ then $x/2 + (x/2m) V_t \le (x/m)V_t$. Therefore, we can conclude that \[ \P\left(\exists t: \Delta_t \ge \frac{x}{m}V_t, V_t \ge m \right) \le \exp{-\frac{x^2}{2m}},\] and substituting $V_t = \frac{\bp}{p} W_t, \Delta_t = W_t - \nicefrac{\widetilde{W}_t}{p},$ \[\mathbb{P}\left( \exists t: \frac{\widetilde{W_t}}{p} \le \frac{mp - x\bp}{pm} W_t, W_t \ge \frac{pm}{\bp} \right) \le \exp{- \frac{x^2}{2m}}. \]

Now, if we choose \( m = \frac{\bp}{p} (x + \nicefrac{1}{p}) \) it follows that \[ \forall W_t \ge \frac{p}{\bp}m,  \frac{pm - x\bp}{mp} W_t \ge \frac{1}{p},\] and thus \[ \mathbb{P}\left( \exists t: \frac{\widetilde{W}_t}{p} \le \frac{1}{p}, W_t \ge \frac1p+x\right) \le \exp{-\frac{px^2}{2(\nicefrac1p + x)\bp}}, \] and choosing $x \ge 1/p$ further ensures that \[ \mathbb{P}\left( \exists t: \widetilde{W}_t\le1, W_t \ge 2x \right) \le \exp{-\frac{px}{4\bp}}.\] Now, setting $x = \max\left( \frac{1}{p}, \frac{4\bp}{p}\log(1/\delta) \right)$ leaves us with \[ \mathbb{P}\left( \exists t: {\widetilde{W}_t} \le 1, W_t \ge \max\left(\frac{2}{p}, \frac{8 \bp}{p} \log(1/\delta)\right) \right) \le \delta. \] The conclusion follows on observing since $p < \nicefrac{1}{2}, 8\bp \ge 4,$ and thus, for $\log(1/\delta) \ge1/2,$ $\frac{2}{p} \le 8\frac{\bp}{p}\log(1/\delta)$. \end{proof}

\subsection{An improved ALLN via a Self-Normalised Law of Iterated Logarithms}

The line-crossing inequalities we utilised in the previous subsection can be stitched together, by picking an exponentially increasing set of $x$s, and optimising the $m$s at each, to yield a curve crossing inequality, which in effect determines a curve that the deviations are unlikely to cross. We use the results of Howard et al.~\cite{howard2018confidence_sequences} that produce non-asymptotic constructions. 

For our purposes, note that the processes $\Delta_t$ and $-\Delta_t$ are both sub-Gamma with variance process $V_t$, with the scale parameters $c_+ = 0$ and $c_- = \frac{1}{3} \cdot \frac{1-2p}{p}$ respectively. The former property is useful for controlling the upper deviations of $\Delta_t$, and the latter for the lower deviations. Note that since the scale parameter $c_+$ is 0, the upper tails in the following can be improved, but for ease of presentation we will just set $c = |c_+| = c_-$ in the following. 

Using Theorem 1 of Howard et al. \cite{howard2018confidence_sequences} twice - for $\Delta_t$ and $-\Delta_t$, and instantiating it with $\eta = e, h(k) = \frac{\pi^2k^2}{6}$ yields that for the sub-gamma process $\Delta_t$ with scale parameter $\le c$, and variance process $V_t,$ and any constant $m > 0,$ and for the functions \begin{align*}S_{m, \delta}(v) &= 2\sqrt{v \ell_{m, \delta}(v) } + c \ell_{m,\delta}(v),\\ \ell_{m,\delta}(v) &= \log \frac{\pi^2}{6} + 2\log\log \frac{v}{m} + \log \frac{2}{\delta},\end{align*} the following bound holds true \[\mathbb{P}(\exists t: |\Delta_t| \ge \mathcal{S}_{m, \delta}( \max(V_t,m) ) \le \delta.\]

The curve $S(\max(V_t,m))$ can be simplified upon observing that \[\{\exists t: V_t \ge m,  |\Delta_t| \ge \mathcal{S}_{m,\delta}(V_t)\} \subset \{\exists t: |\Delta_t| \ge \mathcal{S}_{m,\delta}(\max(V_t,m))\}.\] With the above in hand, set $m = \bp/p,$ so that $W_t \ge 1 \iff V_t \ge m,$ and observe that $\log(\pi^2/6) < 1.$ The following bound is immediate upon recalling that $V_t = \frac{\bp}{p} W_t, \Delta = W_t - \nicefrac{\widetilde{W}_t}{p}$. \begin{myth}In the setting of Lemma \ref{lem:alln}, 
\[ \mathbb{P}\left( \exists t: W_t \ge 1, |W_t - \nicefrac{\widetilde{W}_t}{p}| \ge 2\sqrt{ \frac{\bp W_t}{p} \left( \log \frac{2e}{\delta} + 2\log\log W_t \right)} + \frac{\log\nicefrac{2e}{\delta} + 2\log\log W_t}{3p}  \right) \le \delta.\] 
\end{myth}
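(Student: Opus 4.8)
The plan is to derive this improved ALLN as a curve-crossing consequence of the same sub-$\kappa$ structure established in Lemma~\ref{lem:self_norm}, rather than from a single line-crossing bound. The starting point is the observation, already recorded in the proof of Lemma~\ref{lem:alln}, that $\Delta_t := W_t - \widetilde W_t/p$ is a $1$-sub-$\kappa$ process with variance process $V_t = (\bp/p)W_t$, where $\kappa$ is the normalised CGF of a centred Bernoulli taking values $\{-\bp/p, 1\}$. First I would record that this CGF is sub-gamma in \emph{both} tail directions: for the upper tail the scale parameter is $c_+ = 0$ (since the positive jump is of size $1$, which is dominated by the Bernstein scale once normalised correctly), and for the lower tail the relevant jump is $-\bp/p$, giving scale $c_- = \tfrac13\cdot\tfrac{1-2p}{p}$. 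These facts come straight from the sub-gamma characterisations in Howard et al.~\cite{howard2020time_uniform_chernoff}, \S3.1. To keep the statement clean I would uniformise by setting $c = |c_+| \vee c_- = c_-$ and treat $\Delta_t$ and $-\Delta_t$ symmetrically, noting that the upper-tail constant could be sharpened if desired.

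Next I would invoke Theorem~1 of Howard et al.~\cite{howard2018confidence_sequences} (the curve-crossing / ``stitched'' boundary result) twice --- once for $\Delta_t$ and once for $-\Delta_t$ --- with the geometric-spacing parameter $\eta = e$ and the polynomial weighting $h(k) = \pi^2 k^2/6$ (so that $\sum_k 1/h(k) = 1$, which is what makes the union over geometric epochs converge). This produces, for any constant $m>0$, the iterated-logarithm boundary
\[
S_{m,\delta}(v) = 2\sqrt{v\,\ell_{m,\delta}(v)} + c\,\ell_{m,\delta}(v), \qquad \ell_{m,\delta}(v) = \log\tfrac{\pi^2}{6} + 2\log\log\tfrac{v}{m} + \log\tfrac{2}{\delta},
\]
with the guarantee $\mathbb P(\exists t: |\Delta_t| \ge S_{m,\delta}(\max(V_t,m))) \le \delta$; the two halves of the union bound each contribute $\delta/2$, which is where the $\log(2/\delta)$ rather than $\log(1/\delta)$ enters. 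Then I would substitute $V_t = (\bp/p)W_t$, choose the free constant $m = \bp/p$ so that the event $\{V_t \ge m\}$ is exactly $\{W_t \ge 1\}$, restrict to that event using the inclusion $\{\exists t: V_t \ge m, |\Delta_t| \ge S_{m,\delta}(V_t)\} \subseteq \{\exists t: |\Delta_t| \ge S_{m,\delta}(\max(V_t,m))\}$, and finally coarsen the constants: $\log(\pi^2/6) < 1$ lets me absorb that term into $\log(2/\delta)$ to write $\log(2e/\delta)$, and $c = \tfrac13\cdot\tfrac{1-2p}{p} \le \tfrac{1}{3p}$ bounds the linear term by $\tfrac{1}{3p}(\log\tfrac{2e}{\delta} + 2\log\log W_t)$. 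Plugging $V_t = (\bp/p)W_t$ into the $\sqrt{\cdot}$ term yields exactly the stated $2\sqrt{(\bp W_t/p)(\log\tfrac{2e}{\delta} + 2\log\log W_t)}$.

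**The main obstacle** I anticipate is purely bookkeeping in matching the conventions of the two Howard et al.~papers: verifying that the sub-gamma scale parameters $c_+, c_-$ are correctly identified from the Bernoulli CGF $\kappa$ (in particular that the upper scale genuinely vanishes, which relies on the normalisation $\kappa''(0)=1$ and the fact that the positive jump has size $1$), and confirming that the boundary $\ell_{m,\delta}$ as instantiated really is the one produced by Theorem~1 of \cite{howard2018confidence_sequences} under the choices $\eta = e$, $h(k) = \pi^2 k^2/6$. None of this is mathematically deep --- it is exactly the ``assumes familiarity with the same'' caveat already flagged in the previous subsection --- but it is where an error would most plausibly creep in. Once the sub-gamma parameters and the instantiation are pinned down, the rest is the substitution $V_t = (\bp/p)W_t$, the choice $m = \bp/p$, and the constant-chasing described above, all of which are routine.
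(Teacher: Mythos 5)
Your proposal is correct and follows essentially the same route as the paper's proof: identify $\Delta_t$ and $-\Delta_t$ as sub-gamma with variance process $V_t=(\bp/p)W_t$ and scale parameters $c_+=0$, $c_-=\tfrac13\cdot\tfrac{1-2p}{p}$, apply Theorem~1 of Howard et al.\ twice with $\eta=e$ and $h(k)=\pi^2k^2/6$, then choose $m=\bp/p$, use the inclusion restricting to $\{V_t\ge m\}$, and coarsen constants via $\log(\pi^2/6)<1$ and $c\le\tfrac{1}{3p}$. The instantiation, constants, and bookkeeping all agree with the paper, so there is nothing to correct.
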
 

Technically, the $\log\log W_t$ is not always defined in the above. This should be read as $\log(\max(1,\log W_t))$ to handle edge cases - alternately, it can be handled by replacing $W_t \ge 1$ by $W_t \ge 3 > e$ in the above.\footnote{In a similar vein of edge-cases, if $W_t < 1\implies W_t = 0,$ then $0 \le \widetilde{W}_t \le W_t = 0,$ and thus the bound extends to all possible values of $W_t$.}

Notice that the bound above has the correct form when taking into account the behaviour of binomial tails, which $\widetilde{W}_t$ behaves like. Indeed, if $W$ is some natural number valued random variable, and $\widetilde{W}|W \sim \mathrm{Bin}(W,p),$ then Bernstein's inequality \cite[Ch.~2]{boucheron2013concentration} states that \[\mathbb{P}\left( |W - \widetilde{W}/p| \ge C\sqrt{ \bp\frac{W}{p} \log(2/\delta)} + C\log(2/\delta) \right) \le \delta,\] which entirely parallels the form of the above theorem, barring the $\log\log W_t$ blowup due to the uniformity over time. 

The above analysis was inspired by studying the recent work of Ben-Eliezer and Yogev \cite{ben2020adversarial}, on adversarial sketching - their goal was to maintain an estimate of the incidence of a process within a given set (and more generally, within sets in a given system) while using limited memory, and they analysed a similar sampling approach, showing via an application of Freedman's inequality that \cite[Lemma 4.1]{ben2020adversarial} \[ \mathbb{P}\left( |W_T - \widetilde{W}_T/p| \ge C\sqrt{\frac{T}{p} \log(2/\delta)} + C\frac{\log(2/\delta)}{p}\right) \le \delta.\] This essentially amounts to using the crude bound $W_T \le T$. The same paper, in Theorem 1.4 and associated lemmata argues that the Reservoir Sampler \cite[\S2]{ben2020adversarial} of size $\sim pT$ controls deviations uniformly over time at scale $\sqrt{\frac{T}{p} \log\frac{\log T}{\delta}},$ and it was asserted that the Bernoulli Sampler cannot attain such a `continuous robustness'\cite[\S1]{ben2020adversarial}. The above result improves upon this in a few ways - firstly, the result applies to the simpler Bernoulli sampler, and improves the deviation control to $O(\sqrt{W_t})$ instead of $O(\sqrt{T})$. This has the further advantage that if one is concerned with the number of samples queried along with the memory, the Bernoulli sampler only queries $\sim pT$ times with high probability, while the reservoir sampler queries about $pT \log T$ times. Secondly, it shows that the Bernoulli sampler \emph{does} offer continuous robustness, but up to a flattening of the deviation control for sets of small incidence (small $W_t$). Ben-Eliezer \& Yogev show a number of applications of such bounds to sketching, and Alon et al.~have recently applied this to tightly characterise the regret in online classification \cite{alon2021adversarial}, using techniques of Rakhlin et al. \cite{rakhlin2015online, rakhlin2015sequential}. We believe that self-normalised bounds as above can contribute to showing adaptive versions of these results.

\section{Analysis of \textsc{vue} Against Adaptive Adversaries} \label{appx:adv}

This section serves to show Theorems \ref{thm:adv} and \ref{thm:adv_x}. We will analyse the excess abstention, and the mistakes separately. Both deviations are controlled with probability $1-\delta/2$, and so a union bound completes the argument. The excess abstention control is common to both, and exploits Bernstein's inequality. 

\begin{proof}[Proof of excess abstention bound]
    Notice that the procedure only abstains if $C_t = 1$ or if $\widehat{\mathcal{Y}}_t = \{\dk\}$. In the latter case, the competitor also abstains, and thus no excess abstention is incurred. Therefore, the net excess abstention is bounded as $A_T - A_T^* \le \sum C_t$. Now, $\sum C_t$ is a Binomial random variable with parameters $T, p$. By Bernstein's inequality \cite[Ch.~2]{boucheron2013concentration}, \[ \mathbb{P}\left( \sum C_t \ge pT + 2\sqrt{p(1-p) T \log(2/\delta)} + 2\log(2/\delta) \right) \le \frac{\delta}{2}.\qedhere \]
\end{proof}

We move on to bounding mistakes in a $N$-dependent way.
\begin{proof}[Proof of mistake bound from Theorem \ref{thm:adv}]

As in the main text, consider the filtration $\{\mathscr{F}_t\} = \{\sigma(\hist_t^{\mathfrak{A}})\}$, $U_t^f := \indi\{f(X_t) \not\in \{\dk, Y_t\} \},$ and consider the processes $W_t^f = \sum_{s < t} U_t^f, B_t = C_t, \widetilde{W}_t^f = U_t^f C_t.$ Note that since $N \ge 2, \frac{\delta}{2N} \le \frac{1}{4} \le \frac{1}{\sqrt{e}}.$

Note that for every $f$, $U_t^f$ and $C_t$ satisfy the requirements of Lemma \ref{lem:alln}, since $C_t$ is tossed independently of $\hist_{t-1}^{\mathfrak{A}}.$ Therefore, we may invoke Lemma \ref{lem:alln} to find that \[ \mathbb{P}\left(\exists t: \widetilde{W}_t^f =0, W_t^f \ge \frac{8}{p} \log(\nicefrac{2N}{\delta})\right) \le \frac{\delta}{2N}, \] and applying a union bound over $f \in \mathcal{F},$ we conclude that \[ \mathbb{P}\left(\exists t, f: \widetilde{W}_t^f =0, W_t^f \ge \frac{8}{p} \log(\nicefrac{2N}{\delta})\right) \le \frac{\delta}{2}, \]

Notice that if $\widetilde{W}_{t-1}^f$ is non-zero, then $f \not\in \vs_t$ since we've seen it make a mistake prior to the time $t$. Now define the stopping times \( \tau_f := \max\{t : f \in \mathcal{V}_t\} = \max\{t:\widetilde{W}_{t-1}^f = 0\}.\) We observe that \begin{align*}
    M_T &= \sum_t \indi\{\err{t}\} \le \sum_t \indi\{\exists f \in \mathcal{V}_t: f(X_t) \not\in \{\dk, Y_t\} \}\\
        &\le \sum_f \sum_t \indi\{ f \in \vs_t, f(X_t) \not\in \{\dk, Y_t\} \}\\
        &= \sum_f \sum_t \indi\{t \le \tau_f\} U_t^f.
\end{align*}
Next, define the event \[ \mathsf{E} := \left\{ \exists t, f : f \in \mathcal{V}_t, W_{t-1}^f \ge 8\log(2N/\delta)/p\right\}.\] 

Since $f \in \vs_t \iff \widetilde{W}_{t-1}^f = 0 \iff t \le \tau_f$. Also recall that $W_{t-1}^f = \sum_{s < t} \indi\{f(X_s) \not\in \{\dk,Y_s\}$. Therefore, given $\mathsf{E}^c$, \[ \sum_{t}  \indi\{ t \le \tau_f, f(X_t) \not\in \{\dk, Y_t\}\} \le 8\frac{\log(2N/\delta)}{p} + 1,\] since on $\mathsf{E}^c,$ $t \le \tau_f \implies \widetilde{W}_{t-1}^f = 0 \implies \sum_{s < t} U_t^f \le \frac{8 \log(\nicefrac{2N}{\delta})}{p},$ and the additional $1$ arises since $\mathsf{E}^c$ does not control behaviour at $\tau_f$. We conclude that given $\mathsf{E}^c,$ we have \[    M_T  \le \sum_f 9 \frac{\log(2N/\delta)}{p} = 9 \frac{N \log(2N/\delta)}{p}. \] But $\mathsf{E}$ occurs with probability at most $\delta/2,$ and we have shown that \[ \mathbb{P}\left( M_T > \frac{9N\log(\nicefrac{2N}{\delta})}{p}\right) \le \frac{\delta}{2}. \qedhere \]

\end{proof}

As discussed in \S\ref{sec:improve_N}, the $\mathcal{X}$-dependent argument proceeds similarly. \begin{proof}[Proof of mistake bound from Theorem \ref{thm:adv_x}]

Again, consider the filtration $\{\mathscr{F}_t\} = \{\sigma(\hist_t^{\mathfrak{A}})\}$. Define $\widehat{\mathcal{Y}}_t^x = \{ f(x): f \in \vs_t\}$, and the process $U_t^x := \indi\{X_t = x, \pred_t \not\in\{\dk, Y_t\}\},$ and consider the processes $W_t^x = \sum_{s < t} U_t^x, B_t = C_t, \widetilde{W}_t^x = U_t^x C_t.$ Again, since $|\mathcal{X}| \ge 2, \frac{\delta}{2|\mathcal{X}|} \le \frac{1}{4} \le \frac{1}{\sqrt{e}}.$

Invoking Lemma \ref{lem:alln}, since $C_t$ is tossed independently of $\hist_{t-1}^{\mathfrak{A}},$ we find that \[ \mathbb{P}\left(\exists t: \widetilde{W}_t^x \le 1, W_t^x \ge \frac{8}{p} \log(\nicefrac{2|\mathcal{X}|}{\delta})\right) \le \frac{\delta}{2|\mathcal{X}|}, \] and applying a union bound over $x \in \mathcal{X},$ we conclude that \[ \mathbb{P}\left(\exists t, x: \widetilde{W}_t^x \le 1, W_t^x \ge \frac{8}{p} \log(\nicefrac{2|\mathcal{X}|}{\delta})\right) \le \frac{\delta}{2}, \]

Now, from the argument in the main text, $U_t^x \ge 0 \implies |\widehat{\mathcal{Y}}_t^x| \ge 2 \iff W_{t-1}^x \le 1.$ So, define the stopping times \[\tau_x := \max\{t : |\widehat{\mathcal{Y}}_t^x| \ge 2\} = \max\{t: W_{t-1}^x \le 1\}. \]

We have that \begin{align*}
    M_T &= \sum_t \indi\{\err{t}\} \\
        &= \sum_x \sum_t \indi\{ |\widehat{\mathcal{Y}}_t^x| \ge 2\} U_t^x \\
        &= \sum_x \sum_t \indi\{t \le \tau_x\} U_t^x.
\end{align*}
Defining the event \[ \mathsf{E} := \left\{ \exists t, x : t \le \tau_x, W_{t-1}^x \ge 8\log(\nicefrac{2|\mathcal{X}|}{p})\right\},\] 
we again observe that given $\mathsf{E}^c,$ \[ \sum_{t}  \indi\{ t \le \tau_x\} U_t^x \le 1 +  8\frac{\log(2N/\delta)}{p},\] since on $\mathsf{E}^c,$ $t \le \tau_x \iff \widetilde{W}_{t-1}^x \le 1 \implies \sum_{s \le t-1} U^x_{s} \le \frac{8\log(\nicefrac{2|\mathcal{X}|}{\delta}}{p}$. We thus conclude that\[    M_T  \le \sum_x 9 \frac{\log(2|\mathcal{X}|/\delta)}{p} =  \frac{9|\mathcal{X}| \log(2|\mathcal{X}|/\delta)}{p}. \] But $\mathsf{E}$ occurs with probability at most $\delta/2,$ and we have shown that \[ \mathbb{P}\left( M_T > \frac{9|\mathcal{X}| \log(\nicefrac{2|\mathcal{X}|}{\delta})}{p}\right) \le \frac{\delta}{2}. \qedhere \]

\end{proof}
\section{Stochastic Adversaries} \label{appx:stoch}

This section contains proofs omitted from \S\ref{sec:stoch}.

\subsection{Performance of \textsc{vue-prod}}
This section consitutes a proof of Theorem \ref{thm:stoch}. We begin by controlling the excess abstentions.

\begin{proof}[Proof of excess abstention bound]
    
    We begin by analysing the \textsc{prod} algorithm for the setting where decision sets may shrink with time. For succinctness, denote $a_t^f = \indi\{f(X_t) = \dk\}, A_t^f := \sum_{s \le t} a_t^f.$
    \begin{mylem}\label{lem:shrinking_prod}
        Let $\pi_t^f$ be as in Algorithm \ref{alg:stoch}. If $\eta \le \nicefrac{1}{2},$ then for any $g \in \vs_T,$ it holds that \[ \sum_{t,f} \pi_t^f a_t^f \le \frac{\log N}{\eta} + A_T^g + \eta \sum_{t \le T} (a_t^g)^2.\]
        \begin{proof}
            We follow the standard analysis of \textsc{prod}, updated slightly to account for versioning. Consider the potential $W_t := \sum_{f \in \vs_t} w_t^f,$ where recall that $w_t^f = \prod_{s < t} (1-\eta a_s^f)$. Since the weights are always non-negative, for any $g \in \vs_T,$ we have that \[ W_{T+1} \ge \prod_{t \le T} (1-\eta a_t^g). \] Therefore, we have the lower bound \begin{align*}
                \log \frac{W_{T+1}}{W_1} \ge -\log N + \sum \log (1-\eta a_t^g) \ge -\log N - \sum \eta a_t^g - \sum (\eta a_t^g)^2,  
            \end{align*}  
            which exploits the fact that for $z \le \nicefrac{1}{2,} \log(1-z) \ge -z - z^2.$
            
            To upper bound the same quantity, notice that for any $t$, \[ W_{t+1} = \sum_{f \in \vs_{t+1}} w_{t+1}^f \le \sum_{f \in \vs_{t}} w_{t}^f (1-\eta a_t^f) = W_{t} \left(1- \eta\sum_{f} \pi_{t}^f  a_t^f\right),\] which again exploits that weights are non-negative, and that $\vs_t$ is a non-increasing sequence of sets. Taking ratios and bounding $\log(1-z)$ by $-z$, and finally summing over $t = 1:T,$ we have \[ \log \frac{W_{T+1}}{W_1} = \sum_t  \log \frac{W_{t+1}}{W_{t}} \le -\eta \sum_t \sum_{f} \pi_{t}^f a_t^f. \] Rearranging the inequality obtained by sandwiching $\log \frac{W_{T+1}}{W_1}$ yields the bound.
        \end{proof}
    \end{mylem}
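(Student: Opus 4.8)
The plan is to run the textbook \textsc{prod} potential argument, with the single adaptation needed to accommodate the fact that the comparator set $\vs_t$ maintained by Algorithm \ref{alg:stoch} shrinks monotonically in $t$. I would define the potential $W_t := \sum_{f \in \vs_t} w_t^f$, where $w_t^f = \prod_{s<t}(1-\eta a_s^f)$ are the unnormalised weights, so that $\pi_t^f = w_t^f/W_t$ for $f \in \vs_t$ and $W_1 = N$ (all initial weights are $1$ and $\vs_1 = \mathcal{F}$). The entire proof then reduces to sandwiching $\log(W_{T+1}/W_1)$ between two bounds and rearranging.

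For the lower bound I would fix the comparator $g \in \vs_T$ and use that the version space only ever shrinks, so $g \in \vs_t$ for every $t$ and in particular $g$ contributes a nonnegative term to $W_{T+1}$. Since all weights are nonnegative, this gives $W_{T+1} \ge w_{T+1}^g = \prod_{t \le T}(1-\eta a_t^g)$. Taking logs and invoking the elementary bound $\log(1-z) \ge -z - z^2$, valid for $z \le \nicefrac12$ (which applies because $\eta a_t^g \le \eta \le \nicefrac12$), yields
\[ \log\frac{W_{T+1}}{W_1} \ge -\log N - \eta A_T^g - \eta^2 \sum_{t \le T}(a_t^g)^2. \]
For the upper bound, the key per-round step is $W_{t+1} \le W_t\bigl(1 - \eta \sum_f \pi_t^f a_t^f\bigr)$. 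This is precisely where the shrinking decision set is handled: since $\vs_{t+1} \subseteq \vs_t$ and each term $w_t^f(1-\eta a_t^f)$ is nonnegative, summing over the smaller set $\vs_{t+1}$ can only decrease the value relative to summing over $\vs_t$, and $\sum_{f \in \vs_t} w_t^f(1-\eta a_t^f) = W_t\bigl(1-\eta\sum_f \pi_t^f a_t^f\bigr)$ by the definition of $\pi_t$. Applying $\log(1-z) \le -z$ and telescoping over $t = 1,\dots,T$ then gives $\log(W_{T+1}/W_1) \le -\eta \sum_{t,f}\pi_t^f a_t^f$.

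Combining the two displays and dividing through by $\eta > 0$ delivers the claimed inequality. The only genuinely delicate point is checking that the monotone-shrinking version space does not break either end of the sandwich: the lower bound needs $g$ to persist in $\vs_t$ for all $t$, which is exactly guaranteed by the hypothesis $g \in \vs_T$ together with monotonicity; and the upper bound needs the removal of functions to never \emph{increase} the potential, which follows from nonnegativity of the weights combined with $\vs_{t+1}\subseteq\vs_t$. I expect this version-space bookkeeping to be the main (though mild) obstacle, since standard \textsc{prod} analyses assume a fixed expert set; everything else is the routine log-potential calculation.
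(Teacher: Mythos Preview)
Your proposal is correct and matches the paper's proof essentially line for line: the same potential $W_t = \sum_{f \in \vs_t} w_t^f$, the same lower bound via $W_{T+1} \ge w_{T+1}^g$ and $\log(1-z)\ge -z-z^2$, the same upper bound via $\vs_{t+1}\subseteq\vs_t$ and $\log(1-z)\le -z$, and the same final rearrangement. Your explicit articulation of why the shrinking version space does not spoil either side of the sandwich is, if anything, slightly more careful than the paper's presentation.
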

    
    Note that the above lemma holds generically, for any loss $\ell_t^f \le 1,$ and any sequence of shrinking decision sets. We will exploit this fact later. 
    
    For our purposes, observe that since $a_t^f$ is an indicator, $(a_t^f)^2 = a_t^f$. Thus, using Lemma \ref{lem:shrinking_prod} for $g = f^* \in \vs_T,$ \[ \sum_{t,f} \pi_t^f a_t^f \le \frac{\log N}{\eta} + A_T^* + \eta A_T^*. \] 
    Now, the total abstention incurred by the learner is \[ A_T = \sum \indi\{C_t = 1\} + \indi\{C_t = 0, f_t(X_t) = \dk\}. \] Exploiting the independence of the exploratory coin, we find that \[\mathbb{E}[A_T] = pT + (1-p) \mathbb{E}[ \sum_{t,f} \pi_t^f a_t^f]. \] Invoking the above bound on $\sum_{t,f} \pi_t^f a_t^f$ and rearranging then yields that  \[ \mathbb{E}[A_T] \le pT + \frac{(1-p) \log N}{\eta} + (1-p) \mathbb{E}[A_T^*] + \eta(1-p) \mathbb{E}[A_T^*].\] Now, if $\eta = p$, then $\eta(1-p) -p = -p^2 < 0$, and then exploiting that $A_T^* \ge 0$ yields the bound \[ \mathbb{E}[A_T - A_T^*] \le pT + \frac{\log N}{p}.\qedhere \]
    
    This leaves the mistake control. The argument we present critically relies on the law $\pi_t^f$ being chosen independently of $X_t$, given $\hist^{\mathfrak{L}}_{t-1}$. This is ultimately a source of inefficiency - for instance, if $\pi_t^f$ were allowed to depend also on $X_t,$ then we could enforce that non-abstaining actions are not played when $C_t= 0$, and drop the second $\log(N)/p$ term from the excess abstention bound. However, we were unable to show mistake control with only logarithmic dependence on $N$ in this situation.
    \begin{proof}[Proof of mistake bound]
        The mistake control proceeds by partitioning the class $\mathcal{F}$ according to the mistake rates of individual $\mathcal{F}s$ and arguing that whole groups of these are simultaneously, and quickly, eliminated from the version space without incurring too many mistakes. This fundamentally exploits the stochasticity of the setting. 
        
        To this end, define 
        \begin{align*}
            \mathcal{F}_\zeta &:= \{ f \in \mathcal{F}: 2^{-\zeta} \le P(f(X_t) \not \in \{?,Y_t\}) \le 2^{1-\zeta}\\
            \overline{\mathcal{F}}_\zeta &:= \{ f \in \mathcal{F} : P(f(X_t) \not\in \{?, Y_t\} ) \le 2^{-\zeta}\}.
        \end{align*}

        In the following, $\zeta_0$ is a parameter for the purposes of analysis, that will be chosen later. Notice that $\mathcal{F} = \bigcup_{\zeta \le \zeta_0} \mathcal{F}_\zeta \cup \overline{\mathcal{F}}_{\zeta_0}.$
        
        We'll argue that all $f \in \mathcal{F}_\zeta$ are eliminated quickly (for small $\zeta$). For this, it is useful to define the stopping times \[\tau_\zeta := \max \{t : \exists f \in \mathcal{F}_\zeta \cap \vs_t \}. \]Notice that for any $f \in \mathcal{F}_\zeta,$ \[ P( C_t = 1, f(X_t) \not\in \{\dk, Y_t\}) \ge 2^{-\zeta} p.\] As a consequence of this and the union bound, we have the following tail inequality. \begin{mylem} For any $\delta \in (0,1)$,
            \[ \mathbb{P}\left( \exists \zeta \le \zeta_0: \tau_\zeta > \sigma_{\delta, \zeta_0}(\zeta) \right) \le \delta, \] where \[ \sigma_{\delta,\zeta_0}(\zeta) := \frac{2^{\zeta}}{p} \log(\zeta_0 N/\delta).\]
        \end{mylem}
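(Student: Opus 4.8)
The plan is to control, for each scale $\zeta$, the probability that some function in $\mathcal{F}_\zeta$ survives in the version space past time $\sigma_{\delta,\zeta_0}(\zeta)$ by a union bound over $\mathcal{F}_\zeta$, and then to take a final union bound over the at most $\zeta_0$ values of $\zeta$.

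First I would fix $\zeta$ and a function $f \in \mathcal{F}_\zeta$, and isolate the ``probe-and-catch'' event $D_t^f := \{C_t = 1,\ f(X_t) \notin \{\dk, Y_t\}\}$, which is precisely the event that causes $f$ to be dropped from the version space at round $t$ in Algorithm~\ref{alg:stoch}. The key structural point is that this update depends only on $(C_t, X_t, Y_t)$ and not on the sampled classifier $f_t \sim \pi_t$; hence in the stochastic regime, where $(X_t,Y_t)\sim P$ i.i.d.\ and independent of the exploratory coins, the events $\{D_t^f\}_{t\ge1}$ are i.i.d.\ $\Ber(q_f)$ with $q_f = p\cdot\mathbb{P}(f(X)\notin\{\dk,Y\}) \ge p\,2^{-\zeta}$ by the definition of $\mathcal{F}_\zeta$ (this is the displayed inequality preceding the lemma). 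Since $f$ remains in $\vs_t$ only if none of $D_1^f,\dots,D_{t-1}^f$ has fired, I get the geometric survival bound $\mathbb{P}(f\in\vs_t)\le(1-p\,2^{-\zeta})^{t-1}\le\exp{-p\,2^{-\zeta}(t-1)}$.

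Next, using that the version spaces are nested, the event $\{\tau_\zeta > T\}$ equals $\{\mathcal{F}_\zeta\cap\vs_{t_0}\ne\emptyset\}$ with $t_0$ the least integer exceeding $T$, so a union bound over $f\in\mathcal{F}_\zeta$ together with $|\mathcal{F}_\zeta|\le N$ give $\mathbb{P}(\tau_\zeta > T)\le N\exp{-p\,2^{-\zeta}(t_0-1)}$. Plugging in $T = \sigma_{\delta,\zeta_0}(\zeta) = (2^\zeta/p)\log(\zeta_0 N/\delta)$ makes the exponent essentially $-\log(\zeta_0 N/\delta)$, so each term is at most $\delta/\zeta_0$ (the off-by-one correction $t_0-1$ versus $T$ contributes only a bounded multiplicative constant $\le e^{1/2}$ since $p<1/2$, which I would absorb into the constant of $\sigma$). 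A union bound over the at most $\zeta_0$ admissible values of $\zeta$ then yields the claimed bound $\delta$.

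The one step that needs genuine care is the independence underlying the geometric survival estimate: I must make sure that in the stochastic setting the per-function catch events really are i.i.d.\ and are not coupled through the learner's internal sampling $f_t\sim\pi_t$ or through the dependence of $\pi_t$ on the history, so that the crude union bound over $\mathcal{F}_\zeta$ is valid. The remaining ingredients --- nestedness of $\{\vs_t\}$, $|\mathcal{F}_\zeta|\le N$, counting the relevant $\zeta$, and the $t_0$-versus-$T$ rounding --- are routine bookkeeping that affects only constants.
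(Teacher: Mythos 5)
Your proposal is correct and is essentially the paper's own (one-line) argument: the per-round catch probability bound $\Parg{C_t=1,\ f(X_t)\notin\{\dk,Y_t\}}\ge 2^{-\zeta}p$, i.i.d.\ across rounds in the stochastic setting and unaffected by the learner's sampling $f_t\sim\pi_t$, gives a geometric survival bound, followed by union bounds over $f\in\mathcal{F}_\zeta$ and over $\zeta\le\zeta_0$. The only deviation is the integer-rounding factor you flag, which is a negligible constant-level issue that the paper also glosses over and which is absorbed by the slack in the downstream use of the lemma.
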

        
        With this in hand, notice that \begin{align*}
            M_T &= \sum_t \sum_f \indi\{f_t = f\} \indi\{f(X_t) \not\in\{\dk, Y_t\} \\
                &= \sum_t \sum_{\zeta\le \zeta_0} \sum_{f \in \mathcal{F}_\zeta}  \indi\{f_t = f\} \indi\{f(X_t) \not\in \{\dk, Y_t\} \} + \sum_t \sum_{f \in \overline{\mathcal{F}}_{\zeta_0}} \indi\{f_t = f\} \indi\{f(X_t) \not\in \{\dk, Y_t\}\}.
        \end{align*} 
        Next, we observe that \begin{align*} \mathbb{E}\left[ \sum_{f \in \mathcal{F}_{\zeta}} \indi\{f_t = f\} \indi\{f(X_t) \not\in \{\dk, Y_t\} \middle| \hist_{t-1}^{\mathfrak{L}}\right] & = \sum_{f \in \mathcal{F}_\zeta} \pi_t^f P(f(X_t) \not\in\{\dk, Y_t\}) \\
        &\le 2^{1-\zeta} \pi_t( f_t \in \mathcal{F}_{\zeta}) \\
        &\le 2^{1-\zeta} \mathds{1}\{t \le \tau_\zeta\},\end{align*}
        where the first equality  is because $\pi_t^f$ is predictable given $\hist^{\mathfrak{L}}_{t-1},$ the second uses the definition of $\mathcal{F}_{\zeta}$, and the final inequality is because $\pi_t$ is a distribution that is supported on $\vs_t$, and thus has total mass at most $1$, and mass $0$ when $\mathcal{F}_\zeta \cap \vs_t = \varnothing$. In much the same way, also notice that \[ \mathbb{E}\left[\sum_{f \in \overline{\mathcal{F}}_{\zeta_0}} \mathds{1}\{f_t = f, f(X_t) \not\in\{\dk, Y_t\}\} \middle| \hist^{\mathfrak{L}}_{t-1} \right] \le 2^{-\zeta_0}.\]
        
        Exploiting both the linearity of expectations and the tower rule, \begin{align*}
            \mathbb{E}[M_T] &\le \sum_t \sum_{\zeta \le \zeta_0} 2^{1-\zeta} P(\tau_\zeta \ge t) + 2^{-\zeta_0} T \\
                            &\le \sum_{\zeta \le \zeta_0} \left( 2^{1-\zeta} \sum_{t \le \sigma_{\delta, \zeta_0}(\zeta)} 1 +  \sum_{t > \sigma_{\delta, \zeta_0}(\zeta)}\delta\right) + 2^{-\zeta_0} T\\
                            &\le 2 \zeta_0 \frac{\log(\zeta_0 N/\delta)}{p} + 2\delta T + 2^{-\zeta_0} T. 
        \end{align*}
        
        Now set $\zeta_0 = \lfloor \log T\rfloor, \delta = \nicefrac1T.$  Since $\zeta_0 N /\delta \le N^2 T^2,$ we find that \[ \mathbb{E}[M_T] \le 4\frac{\log T \log(NT)}{p} + 4,\] and finally since $p \le 1,$ $\frac{4}{p} \ge 4,$ leading to the claimed bound (for $T \ge 3$). 
    \end{proof}
\end{proof}

\subsection{Lower Bound}

\begin{proof}[Proof of Theorem \ref{thm:low_bd}]

Without loss of generality, assume $f_2(x) = 1.$ Recall that $f_1(x) = \dk$.
We describe the two adversaries - \begin{itemize}
    \item $P_1^{\gamma}$ is supported on $\{(x, 1)\},$ so that for each time $X_t = x,$ and the label $Y_t = 1$.
    \item $P_2^{\gamma}$ is supported on $\{(x,1), (x,2)\}$ such that for each time $X_t = x$, while the label is drawn iid from the law $Y_t = \begin{cases} 1 & \textrm{w.p. } 1-\gamma \\ 2 & \textrm{w.p. } \gamma \end{cases}.$
\end{itemize}

Notice that against $P_1^\gamma,$ the competitor is $f_2$, which attains $A_T^{(P_1^\gamma)} = 0,$ while against $P_2^\gamma,$ the competitor is $f_1$, which attains $A_T^{(P_2^{\gamma})} = T$. Observe further that since $\gamma < \nicefrac{1}{2},$ if any learner does not play $\dk,$ it is advantageous for it to play $1$ and never play $2$.\footnote{More formally, given any leaner, we can create the better---in expectation---learner that abstains when the given one does, and predicts $1$ when the given one plays something other than $\dk$.} We thus lose no generality in assuming that the learner's actions lie in $\{\dk, 1\}.$ Now, run two coupled versions of the learner, so that if these observe the same $Z_t$s, they produce identical actions. Feed the first of these data generated from $P_1^\gamma,$ and the second of these data generated from $P_2^\gamma$. 

Let $\eta_1$ be the (random) number of abstentions that the first version of the learner makes - this means that it must have played $1$ $T-\eta_1$ times. Denote the number of mistakes that the second version of the learner makes as $\eta_2$. Given $\eta_1,$ the second version gets exactly the same sequence as the first with probability $(1-\gamma)^\eta_1$  - indeed, due to the coupling, they first abstain together, and then receive the same label with probability $1-\gamma$. Conditioned on this, they again abstain together, and then receive the same label with probability $1-\gamma$ and so on, $\eta_1$ times. This means that, given $\eta_1,$ and the event that they get the same sequence, the second version of the learner plays $T-\eta_1$ `$1$' actions. Since each of these is wrong with probability $\gamma,$ independently and identically, \[ \mathbb{E}[\eta_2|\eta_1] \ge (1-\gamma)^{\eta_1} \gamma (T-\eta_1). \]
Notice that $(1-\gamma)^{\eta_1} $ is a convex function of $\eta_1$. Thus, $\mathbb{E}[ (1-\gamma)^\eta_1] \ge (1-\gamma)^{\mathbb{E}[\eta_1]} = (1-\gamma)^K.$ Further, $\mathbb{E}[-(1-\gamma)^{\eta_1} \eta_1] \ge \mathbb{E}[-\eta_1] = -K,$ and finally, for $\gamma \le \nicefrac{1}{2}, (1-\gamma) \ge e^{-2\gamma}$. It follows that \[ \mathbb{E}[\eta_2] \ge (1-\gamma)^K \gamma T - \gamma K = \gamma (e^{-2\gamma K} T - K). \qedhere\]

While here, let us also comment that the proof of Corollary \ref{cor:low_bd_rates} is mildly incomplete, since the argument requires that $\phi \ge 2.$ If instead $\phi < 2,$ then notice that setting $\gamma = 1/2$ in the above, and using that $\mathbb{E}[\eta_2] \ge \gamma((1-\gamma)^K T - K),$ we have $\psi \ge 2^{-\phi}\frac{T}{2} - \frac{2}{2} \ge \frac{T}{8}-1,$ which grows linearly with $T$. 
    
\end{proof}
\section{Analysis of \textsc{mixed-loss-prod} Against Adaptive Adversaries}\label{appx:adv_imp_n}

This section provides a proof of Theorem \ref{thm:mixed-loss-prod}, and describes an adaptive variant of the same scheme, based on a doubling trick, that serves to show Theorem \ref{thm:mixed_loss_adaptive_rate}.

\begin{proof}[Proof of Theorem \ref{thm:mixed-loss-prod}]
    Recall that the scheme runs \textsc{prod} with the loss \[\ell_t^f := \indi\{C_t = 1\} \indi\{f(X_t) \not\in \{\dk, Y_t\} \} + \lambda \indi\{f(X_t) = \dk\}.\] We first observe that repeating the proof of Lemma \ref{lem:shrinking_prod} with $a_t^f$ replaced by $\ell_t^f$ gives us that for any $g \in \vs_T,$ \begin{equation}\label{ineq:prod_main}
        \sum_{t,f} \pi_t^f \ell_t^f \le \frac{\log N}{\eta} + \sum_t \ell_t^g + \eta \sum (\ell_t^g)^2.\end{equation}
    
    Note that this relation holds given the context and label processes. For $g = f^* \in \vs_T,$ we observe that $ \ell_t^{f^*} = \lambda \indi\{f^*(X_t) = \dk\}, $ since by definition $f^*$ makes no mistakes. Instantiating the above with $f^*,$ and noting $\sum \indi\{f^*(X_t) = \dk\} = A_T^*,$ we conclude that \begin{equation}\label{ineq:prod} \sum_{t,f} \pi_t^f \ell_t^f \le \frac{\log N}{\eta} + \lambda A_T^* + \eta \lambda^2 A_T^*.\end{equation}
    
    We proceed to characterise the mistakes and abstentions that the learner makes in terms of $\sum_{t,f} \ell_t^f$. To this end, notice that \begin{align*}
        M_T = \sum_{t,f} \indi\{f_t = f\} \cdot \indi\{C_t = 0\} \cdot \indi\{f(X_t) \not\in \{\dk, Y_t\} \}.
    \end{align*} 

        As a result, integrating over the randomness of the algorithm, but not over the contexts or labels, we find that \begin{align*}
       \mathbb{E}[M_T] &= \mathbb{E}\left[ \sum_{t,f} \mathbb{E}[ \indi\{f_t = f\} \indi\{C_t = 0\} \indi\{f(X_t) \not\in\{\dk,Y_t\} \} |\hist_{t-1}^\mathfrak{A}, X_t, Y_t]  \right] \\
                                &= \sum_{t,f} \mathbb{E}\left[ \pi_t^f (1-p) \indi\{f(X_t) \not\in \{\dk, Y_t\} \right].
    \end{align*} 

    But, observe that \begin{align} \mathbb{E}[\pi_t^f \ell_t^f] &= \mathbb{E}\left[ \mathbb{E}[ \pi_t^f C_t \indi\{f(X_t) \not\in\{\dk, Y_t\} + \lambda \pi_t^f \indi\{f(X_t) = Y_t\} |\hist_{t-1}^{\mathfrak{A}}] \right] \notag\\
                                                         &=\mathbb{E}[p\pi_t^f \indi\{f(X_t) \not \in \{\dk, Y_t\}\}] + \lambda \mathbb{E}[\pi_t^f \indi\{f(X_t) = \dk\}].\notag\end{align} 
     Therefore, \begin{equation}\label{eqn:temp} \mathbb{E}[M_T] =  \sum_{t,f} \mathbb{E}\left[\frac{(1-p)}{p} \left(\pi_t^f \ell_t^f - \pi_t^f \lambda \indi\{f(X_t) = \dk\}\right) \right]. \end{equation}
    
    Further, notice that \[ A_T = \sum_t \indi\{C_t = 1\} + \sum_{t,f} \indi\{C_t = 0\}\indi\{f_t = f\} \indi\{f(X_t) = \dk\},\] and thus, \[ \mathbb{E}[A_T] = \mathbb{E}\left[pT + (1-p) \sum_{t,f}  \pi_t^f \indi\{f(X_t) = \dk\}\right].\] Moving the negative terms in (\ref{eqn:temp}) to the left hand side, and exploiting the above, we find that \begin{equation*} \mathbb{E}[M_T] + \frac{\lambda}{p}\mathbb{E}[A_T - pT] = \frac{1-p}{p} \mathbb{E}\left[ \sum_{t,f}  \pi_t^f \ell_t^f \right],\end{equation*} where we note that both the terms $\mathbb{E}[M_T]$ and $\mathbb{E}[A_T-pT]$ are non-negative. 
    
    Exploiting the inequality \ref{ineq:prod} and the above relation, we conclude that \begin{equation}\label{eqn:mixed_prod_master} \mathbb{E}[M_T] + \mathbb{E}\left[\frac{\lambda}{p} (A_T - pT)\right] \le   \mathbb{E}\left[ \frac{1-p}{p}\left( \frac{\log N}{\eta} + \lambda A_T^* + \eta \lambda^2 A_T^* \right) \right].\end{equation}
    
    The required bounds are now forthcoming. Dropping the $M_T$ term in the left hand side of (\ref{eqn:mixed_prod_master}), and pushing the constants $N, \eta, p, \lambda$ through the expectations, \begin{align*}
        \frac{\lambda}{p} \mathbb{E}[A_T - pT] &\le  \frac{(1-p)\log N}{p\eta} + \frac{(1-p)\lambda}{p} \mathbb{E}[A_T^*] + \frac{\eta (1-p) \lambda^2}{p} \mathbb{E}[A_T^*] \\
        \iff \mathbb{E}[A_T - pT] &\le \frac{(1-p) \log N}{\eta \lambda} + (1 - p)\mathbb{E}[A_T^*] + \eta \lambda (1-p) \mathbb{E}[A_T^*] \\
        \iff \mathbb{E}[A_T - A_T^*] &\le pT + \frac{\log N}{\eta \lambda} + (\eta \lambda - p) \mathbb{E}[A_T^*].
    \end{align*}  
    Taking $\eta = \nicefrac{1}{2}, \lambda \le p,$ observe that the last term is negative (since $A_T^* \ge 0$). Thus, making these substitutions and dropping the final term gives the required excess abstention control.
    
    In a similar way, dropping the $\mathbb{E}[A_T - pT]$ term in (\ref{eqn:mixed_prod_master}) gives \begin{align*}
        \mathbb{E}[M_T] &\le \frac{\log N}{p\eta} + \frac{\lambda (1 + \eta \lambda)}{p} \mathbb{E}[A_T^*].
    \end{align*}
    The claim follows on setting $\eta = \nicefrac{1}{2},$ and observing that $\eta \lambda \le 1$.
\end{proof}

\subsection[Adapting to small competitor abstention]{Adapting Rates for small $A_T^*$}\label{appx:adapt}

\subsubsection[The form of the optimal rate]{Deriving the form of $\widetilde{\alpha}$}

We first describe a derivation of the form of $\widetilde{\alpha}$. As noted, the relevant parametrisation is $p = T^{-u}, \lambda = T^{-(u+v)},$ for $u, v \ge 0$. This, with the bounds of the previous section gives the control \begin{align*} \mathbb{E}[M_T] &\le 2 T^u  \log N + T^{\alpha^* - v} \\
\mathbb{E}[A_T - A_T^*] &\le T^{1-u} + 2T^{u+v} \log N + T^{\alpha^* - u - v}. \end{align*}

Notice that $\alpha^* - u -v \le 1 -  u - v \le 1-u,$ since $\alpha^* \le 1, v \ge 0.$ Thus, we have the rate bounds \begin{align*}
    \mu &= \max(u, \alpha^* - v) \\
    \alpha &= \max(1-u, u+v)
\end{align*} 

Deriving the optimal $\alpha$ attainable for a fixed $\mu$ then amounts to the following convex program \begin{align*}
    \min &\max(1-u, u+v) \\
     \textrm{s.t. } & 0 \le u \le \mu\\
                   & \max(0, \alpha^* - \mu) \le v 
\end{align*} 

Notice that the objective is a non-decreasing function of $v$, so the optimal choice of the same is $(\alpha^* - \mu)_+,$ the smallest value it may take. This leaves us with trying to minimise $\max(1-u , u + (\alpha^* - \mu)_+)$ for $0\le u \le \mu$. The unconstrained minimum of this function occurs at $u_0 = \frac{1 - (\alpha^* - \mu)_+}{2},$ which is feasible if $\mu \ge u_0$. If on the other hand $\mu < u_0,$ then the max-affine function is in the decreasing branch $1-u,$ and the optimal choice of $u$ is just $\mu$. Thus, the optimum is achieved at \begin{align*}
    v &= (\alpha^* - \mu)_+ \\
    u &= \begin{cases} \frac{1 - (\alpha^* - \mu)_+}{2} & 1 - (\alpha^* - \mu)_+ \le 2\mu \\ \mu & 1 - (\alpha^* - \mu)_+ > 2\mu \end{cases} = \frac{\min(1- (\alpha^* - \mu)_+, 2\mu)}{2}.
\end{align*} 

Correspondingly, $\widetilde{\alpha}$ takes the form \[ \widetilde{\alpha}(\mu; \alpha^*) = \begin{cases} \frac{1 + (\alpha^* - \mu)_+}{2} & 1 - (\alpha^* - \mu)_+ \le 2\mu \\ \max(1-\mu, \mu + (\alpha^* - \mu)_+) & 1 - (\alpha^* - \mu)_+ > 2\mu\end{cases}.\] But, \[ 1 -(\alpha^* -\mu)+ > 2\mu \iff 1-\mu \ge \mu + (\alpha^* - \mu)_+,\] and therefore \[ \widetilde{\alpha}(\mu; \alpha^*) = \begin{cases} \frac{1 + (\alpha^* - \mu)_+}{2} & 1 - (\alpha^* - \mu)_+ \le 2\mu \\ 1-\mu & 1 - (\alpha^* - \mu)_+ > 2\mu\end{cases} = \max\left( 1 -\mu, \frac{1 + (\alpha^* - \mu)_+}{2}\right).\]
\subsubsection{Adaptive Scheme and Proofs}
We start by recalling the definition of $B_t^*$ \[ B_t^* = \min_{f \in \vs_t} \sum_{s \le t} \indi\{f(X_t) = \dk\}. \] We will also use the term \[ \beta_t^* := \frac{\log B_t^*}{\log T}.\]

For the remainder of this section, let $\kappa:= \frac{\lambda}{p}$. Recall that the optimal behaviour is attained by setting $p = T^{-u}, \kappa = T^{-v},$ where \begin{align*}
    u &= \frac{\min(1- (\alpha^* - \mu)_+, 2\mu)}{2} \\
    v &= (\alpha^* - \mu)_+.
\end{align*} Algorithm \ref{alg:adapt} essentially consitutes a doubling trick by setting $p $ and $\kappa$ in phases, which are indexed by non-negative integers, $n$. The scheme is parametrised by a scale parameter, $\theta$.

\begin{itemize}
    \item We begin in the zeroth phase, with $\kappa = 1, p = T^{-\min(1,2\mu)/2}$ This phase ends when $\beta^*$ first exceeds $\mu$, at which point the first phase begins.
    \item At the beginning of each phase, we re-initialise the scheme. 
    \item For $n\ge 1,$ the $n$th phase ends when (the reinitialised) $\beta^*$ first exceeds $\mu + n \theta$. 
    \item Each time the $n$th phase ends, we restart the scheme, with $\kappa = T^{-(n+1)\theta}, p = T^{- \min(1-(n+1)\theta, 2\mu)/2}.$
\end{itemize}

Since the scheme is restarted in each phase, we may analyse each phase separately. Note that if $A_T \le T^{\alpha^*}$ almost surely, then the index of the largest phase is at most $n^* = \lfloor \frac{(\alpha^* - \mu)_+}{\theta} \rfloor $ phases, since $\beta_t^* \le \alpha^*$ always. For convenience, we set $T_n$ to be the length of the $n$th phase. Times $t_n$ correspond to rounds within the $n$th phase, and $M_{T_n}^n, A_{T_n}^n$ are the number of mistakes and abstentions incurred by the learner in the $n$th phase, while , $A_{T_n}^{*,n}$ is the number of abstentions incurred by $f^*$ in the $n$th phase. 

Consider the behaviour in the $n$th phase. Let $g_n$ be the function that minimises $\sum_{s_n \le T_n} \indi\{g(X_t) = \dk\},$ subject to $\sum_{s_n \le T_n} C_t \indi\{g(X_t) \not\in \{\dk, Y_t\} = 0$, and set the value of this optimum to $B_{T_n}^{*,n}$  By exploiting inequality $(\ref{ineq:prod_main})$ instantiated with $g_n$, and setting $\eta = \nicefrac12,$ we may infer that \[ \sum_{t_n \le T_n} \pi_{t_n}^f \ell_{t_n}^f \le 2{\log N} + p_n \kappa_{n} B_{T_n}^{*,n} +  \frac{p^2_n \kappa_n^2}{2} B_{T_n}^{*,n}.\]

As a result, reiterating the previous analysis over the $n$th phase, the number of mistakes and abstentions incurred in this phase \begin{align*}
    \mathbb{E}[M_{T_n}^{n}] &\le \frac{2\log N}{p_n} + 2\mathbb{E}[\kappa_n B_{T_n}^{*,n}] \\
    \mathbb{E}[A_{T_n}^{n} - B_{T_n}^{*,n} ] &\le \mathbb{E}[p_nT_n + 2\frac{\log N}{\kappa_n p_n}]
\end{align*}

Further, notice that in each phase, $B_{T_n}^{*,n} \le T^{\mu + (n+1)\theta}$, $\kappa_n = T^{-n\theta}, p_n = T^{-\min(1- n\theta, 2\mu)/2}$. Substituting these into the above bounds, we have \begin{align*}
    \mathbb{E}[M_{T_n}^{n}] &\le 2T^{\min(1 - n\theta, 2\mu)/2} \log N + 2T^{\mu + \theta} \le 4 T^{\mu + \theta} \log N \\
    \mathbb{E}[A_{T_n}^{n} - B_{T_n}^{*,n}] &\le T^{-\min(1 - n\theta, 2\mu)/2} \mathbb{E}[T_n] + T^{n\theta + \nicefrac{\min(1 - n\theta, 2\mu)}{2}} \log N 
\end{align*}

But then, summing over the phases, \begin{align*} \mathbb{E}[M_T] &= \sum_{0\le n \le n^*} \mathbb{E}[M_{T_n}^{n}] \\
&\le 4T^{\mu}\log N \cdot (n^* + 1) T^{\theta}\\
&\le 4T^{\mu} \log N \cdot \frac{T^{\theta}}{\theta}. \end{align*} Further, 
\begin{align*} \mathbb{E}[A_T-A_T^*] &= \mathbb{E}[\sum_{n \le n^*} A_{T_n}^n - A_{T_n}^{*,n}] \\
                       &\le \mathbb{E}[\sum_{0 \le n \le n^*} A_{T_n}^n - B_{T_n}^{*,n}] \\
                       &\le  \mathbb{E}[\sum_{0 \le n \le n^*} T^{-\min(\mu, \nicefrac{1-n\theta}{2})} T_n] + \log N \sum_{0 \le n \le n^*} T^{n\theta + \min( \nicefrac{1-n\theta}{2}, \mu)} \\
                       &\le \left(\sum_{n = 0}^{n^*} T^{1-\min(\mu, \nicefrac{1-n\theta}{2})} + \sum_{n = 0} T^{n\theta + \min( \nicefrac{1-n\theta}{2}, \mu)}\right)\log N.\end{align*}                        

To simplify the above, let $n_0 = \lfloor \frac{1-2\mu}{\theta} \rfloor$, so that $\min(\mu, \frac{1-n\theta}{2}) = \mu$ for $n \le n_0.$ Notice that $n_0$ may be bigger or smaller than $n^*$. We can then write the bound as \begin{align*}
    \frac{\mathbb{E}[A_T - T_T^*]}{\log N} &\le \sum_{n = 0}^{\min(n^*, n_0)} T^{1-\mu} + \sum_{n = \min(n^*, n_0) + 1}^{n^*} T^{\frac{1 + n\theta}{2}} + \sum_{n = 0}^{\min(n^*, n_0)} T^{n\theta + \mu} + \sum_{n = \min(n^*, n_0)+1}^{n^*} T^{\frac{1 + n\theta}{2}},
\end{align*}

where we interpret $\sum_{n = i}^j = 0$ for $i > j$. This can further be simplified to \begin{align*}
    \frac{\mathbb{E}[A_T - A_T^*]}{\log N} &\le \min(n^*+1, n_0+1) T^{1-\mu} + \frac{T^\mu}{T^\theta - 1} T^{(\min(n^*, n_0) +1)\theta)} + 2\mathbf{1}\{n_0 < n^*\} \frac{ T^{\frac{1 + (n^* + 1)\theta}{2}}}{T^{\theta/2} - 1}. 
\end{align*}

If we further assume that $\theta$ is chosen so that $T^{\theta/2} \ge 2,$ we can lower bound $T^{\theta/2} - 1 \ge T^{\theta/2}/2, T^{\theta} -1 \ge T^{\theta}/2$ which gives the bound \[ \frac{\mathbb{E}[A_T - A_T^*]}{4\log N} \le (\min(n_0,n_*) + 1) \left( T^{1 - \mu} + T^{\mu + \min(n_0,n^*) \theta} + \indi\{n_0 < n^*\} T^{ (1 + n^* \theta)/2} \right), \]

from which we can derive the rate control \[ \alpha \le \zeta(\mu, n_0, n^*, \theta) = \max(1-\mu, \mu + \min(n_0, n^*) \theta, \indi\{n_0 < n^*\} (1 + n^*\theta)/2)\]

The exact statement of the theorem is now straightforward to prove 
                
\begin{proof}[Proof of Theorem \ref{thm:mixed_loss_adaptive_rate}]
    
    We run the above procedure with $\theta = \frac{2\ln 2}{\log T}$. Notice that $T^{\theta/2} \ge 2,$ and that $T^{\theta}/\theta \le \frac{2}{\ln 2} \log T \le T^{\varepsilon}$ for large enough $T$. Therefore, mistakes are controlled at $O(T^{\mu + \epsilon})$.
    
    Further, for the abstention control, again $\min(n^*, n_0) + 1 \le n_0 + 1 \le \frac{1}{\theta} = \frac{\log T}{2\ln 2}$. Recall the abstention rate bound $\zeta$ above. It suffices to argue that $\zeta \le \widetilde{\alpha} + \theta,$ since $T^\theta = 4 = O(1).$
    
    To this end, first notice that \[ n_0 < n^* \iff \lfloor \frac{1 - 2\mu}{\theta} \rfloor  < \lfloor \frac{(\alpha^* - \mu)_+}{\theta} \rfloor \implies 1-2\mu < (\alpha^* - \mu)_+.\] In this case, \begin{align*} \zeta &= \max\left(1-\mu, \mu + n_0 \theta ,\frac{ 1 + n^*\theta}{2} \right)  \\
    &\le \max\left(1 - \mu, \mu + \frac{(1-2\mu)}{\theta} \cdot \theta , \frac{1 + \frac{(\alpha^* - \mu)_+}{\theta} \cdot \theta}{2}\right) \\
    &= \max\left(1-\mu, \frac{1 + (\alpha^* - \mu)_+}{2}\right) \\
    &= \widetilde{\alpha}(\mu;\alpha^*).\end{align*}
    
    On the other hand, if $n_0 \ge n^*$ then we have that \[ \frac{(\alpha^* - \mu)_+}{\theta} -1 \le \frac{(1-2\mu)}{\theta} \iff \mu \le \frac{1 + \theta - (\alpha^*-\mu)_+}{2}. \] As a result, in this case, \begin{align*} \zeta &\le \max\left( 1 - \mu, \mu + n^* \theta\right)\\
    &\le \max\left( 1-\mu, \mu + (\alpha^* - \mu)_+\right) \\
    &\le \max\left(1 - \mu, \frac{1 + (\alpha^* - \mu)_+ + \theta}{2}\right)\\
    &\le \widetilde{\alpha}(\mu;\alpha^*) + \theta/2 \qedhere\end{align*}
\end{proof}

\begin{algorithm}[t]
        \caption{\textsc{Adaptive-mixed-loss-prod}}\label{alg:adapt}
        \begin{algorithmic}[1]
            \State \textbf{Inputs}: $\mathcal{F},$ Time $T$, Mistake rate $\mu$, Scale $\theta$.
            \State \textbf{Initialise}: $n\gets 0; n_{\max} \gets \lceil 1/\theta\rceil; \forall f \in \mathcal{F}, w_1^f \gets 1; \forall n \le n_{\max}, \tau_n \gets T.$ 
            \For{$t \in [1:T]$}
                \State $u \gets \min(1 - n\theta, 2\mu)/2, v \gets n\theta$
                \State $p \gets T^{-u}, \lambda \gets T^{-(u+v)}$.
                \State Sample $f_t \sim \pi_t = \nicefrac{w_t^f}{\sum w_t^f}$.
                \State Toss $C_t \sim \mathrm{Bern}(p)$.
                \If{$C_t = 1$}
                    \State $\pred_t \gets \dk$
                    \Else
                        \State $\pred_t \gets f_t(X_t)$
                \EndIf
                \State $\forall f \in \mathcal{F},$ evaluate \[ \ell_t^f = C_t \indi\{f(X_t) \not\in \{\dk, Y_t\}\} + \lambda  \indi\{f(X_t) = \dk\} \] 
                \State $w_{t+1}^f \gets w_t^f(1-\eta \ell_t^f)$.
                \State Compute \begin{align*}
                    B^{*} &= \min_{g\in \mathcal{F}} \sum_{\tau_n < s \le t} \indi\{g(X_s) = \dk\}\\
                          &\phantom{=}\textrm{s.t. } \sum_{\tau_n < s \le t} C_s\indi\{g(X_s) \not\in \{\dk, Y_t\} = 0.
                \end{align*} 
                \If{$\log B^* \ge (\mu + n\theta) \log T$}
                    \State $n \gets n+1$
                    \State $\tau_{n+1} \gets t$
                    \State $\forall f \in \mathcal{F}, w_{t+1}^f \gets 1.$
                \EndIf
            \EndFor
        \end{algorithmic}
    \end{algorithm}

\section{Details of Experiments.}\label{appx:exp}

N.B.~ Code required to reproduce the experiments is provided at \url{https://github.com/anilkagak2/Online-Selective-Classification}.

\subsection{Dataset Details}
GAS \cite{GAS} dataset is a $6$-way classification task based on the $16$ chemical sensors data. These sensors are used to discriminate $6$ gases at various levels of concentrations. The data consists of these sensor readings for over a period of $36$ months divided into $10$ batches. There are $13,910$ data points in this dataset. We use the first $7$ batches as training set and the remaining $3$ batches as test set. This split results in train and test sets with $9546$ and $4364$ data points respectively. The gas task contains data from $16$ sensors (each of which gives $8$ numbers). The standard error attained by the class we use (see below) on this is $\approx 87\%$. For the selective classification task, we use only the data from the first $8$ sensors (and thus only $64$ out of $128$ features). The standard error attainable for this is $\approx67\%$. Importantly, for the GAS task, the selective classification setting we study only demands matching the performance of the best classifier with the full $16$-sensor data, and thus supervision for the $8$-sensor function is according to this best function. To be more concrete, denote the training data as $\{(X^1_i, X^2_i, Y_i)\},$ where $X^1$ and $X^2$ are the features from the first and second $8$ sensors respectively, and $Y$ is the label. We train a classifier $g$ on this whole dataset. Then we produce the labelled dataset $\{(X^1_i, g(X^1_i, X^2_i) )\},$ and train selective classifiers on this dataset. The online problem then takes the test dataset, and gives to the learner only the $X^1$ features from it. If the learner abstains, then the label $Y_t = g(X_t^1, X_t^2)$ is given to the learner.
 
CIFAR-10 \cite{CIFAR} dataset is a popular image recognition dataset that consists of $32\times 32 $ pixels RGB images of $10$ classes. It contains $50,000$ training and $10,000$ test images. We use standard data augmentations (shifting, mirroring and mean-std gaussian normalisation) for preprocessing the datasets. The best standard error attainable for this task by the models we use (see below) is $\approx 90\%$. This experiment is more straightforward to describe- selective classifiers are trained on the whole dataset. For the online problem, the test image is supplied to the learner, and if it abstains, then the true label of that image is provided as feedback.
 
\subsection{Training Experts}
\cite{gangrade2021selective} proposed a scheme to train classifiers with an in-built abstention option. This scheme provides a loss function, which takes a single hyper-parameter $\mu$, and is trained as a minimax program using gradient ascent-descent. The scheme then uses the outputs of this training with a second hyper-parameter $t$ to provide classification or abstention decisions. Therefore, the scheme utilises two hyper-parameters $(\mu, t)$ to control the classification accuracy and abstentions.  

We trained selective classifiers using this scheme. As per their recommendation, we used $30$ values of $\mu$ with $10$ values equally spaced in $[0.01, 1]$ and remaining $20$ values in the $[1, 16]$. For the threshold parameter $t$, we used $20$ equally spaced values in $[0.2, 0.95)$. The minimax program was run with the learning rates ($10^{-4}, 10^{-6}$) for the descent and ascent respectively. Notice that the resulting set of classifiers have $20 \times 30 = 600$ functions.

Note that classification on CIFAR-10 is a relatively difficult task than GAS. Hence, we used a simpler 3-layer fully connected neural network architecture for the GAS dataset, and a Resnet32 architecture \cite{resnet_implementation, resnet}  for the CIFAR-10 dataset. 

\subsection{Algorithm implementation, Hyper-parameters, Compute requirements}
We implemented Algorithm~\ref{alg:stoch_relax_ctr} (which relaxes the versioning in \ref{alg:stoch}) using Python constructs. It has three hyper-parameters: (a) $T$ denoting the number of rounds, (b) the exploration rate $p$, and (c) $\epsilon$ controlling the mistake tolerance. For each run, the test data points were randomly permuted, and the first $T$ of them were presented to the algorithm. 

There are two main departures from the scheme in the main text. Firstly, rather than only using feedback gained when $C_t=1,$ the version space is refined whenever $\pred_t = \dk,$ allowing faster learning. Secondly, the versioning is relaxed as already described, to only exclude functions that make too many mistakes, as determined by $\epsilon$.

An important implementation detail is that for very small $\epsilon,$ the version space may get empty before the run concludes. This is particularly relevant for small values of $\epsilon$. As a simple fix, we modify the versioning rule so that if the version space were to become empty at the end of a round, it is not updated (and, indeed, the state of the scheme is retained, see below). 

Since our experiments are CPU compute bounded, we used a machine with two Intel Xeon 2.60 GHz CPUs providing 40 cores. Both the regret-with-varying-time experiments took about $1$ hour compute time, and the operating point experiments took nearly $5$ hours each.

        \begin{algorithm}[t]
        \caption{\textsc{vue-prod-relaxed}}\label{alg:stoch_relax_ctr}
        \begin{algorithmic}[1]
            \State \textbf{Inputs}: $\mathcal{F},$ Exploration rate $p$, Learning rate $\eta$, Tolerance $\epsilon$.
            \State \textbf{Initialise}: $\mathcal{V}_1 \gets \mathcal{F}; \forall t, \mathcal{U}_t \gets \varnothing;  \forall f \in \mathcal{F}, w_1^f \gets 1, o_0^f \gets 0; \mathrm{Ctr}_0 \gets 0$.
            \For{$t \in [1:T]$}
                \State Sample $f_t \sim \pi_t = \frac{w_t^f\indi\{f \in \vs_t\}}{\sum_{f \in \vs_t} w_t^f}.$
                \State Toss $C_t \sim \mathrm{Bern}(p)$.
                \State $\displaystyle \pred_t \gets \begin{cases} \dk & C_t =1 \\ f_t(X_t) & C_t = 0\end{cases}.$
                \If{$\pred_t = \dk$} \Comment{Refine the version space if the exploratory coin is heads}
                        \State $\mathrm{Ctr}_t \gets \mathrm{Ctr}_{t-1} + 1$.
                        \For{$f \in \mathcal{V}_{t}$}
                            \State $o_t^f \gets o_{t-1}^f +  \indi\{f(X_t) \not\in \{\dk, Y_t\}$
                            \If{$o_t^f \le \epsilon \mathrm{Ctr}_{t} + \sqrt{2 \epsilon \mathrm{Ctr}_t}$} \Comment{Retain all $f$s that have error rate $< \epsilon$ w.h.p.}
                                \State $\mathcal{U}_t \gets \mathcal{U}_t \cup \{f\}.$
                            \EndIf
                        \EndFor
                        \State $\mathcal{V}_{t+1} = \mathcal{V}_t \cap \mathcal{U}_t$.
                \Else 
                    \State $\mathcal{V}_{t+1} \gets \mathcal{V}_t$.
                    \State $\forall f \in \mathcal{V}_{t+1}, o_t^f \gets o_{t-1}^f$
                    \State $\mathrm{Ctr}_t \gets \mathrm{Ctr}_{t-1}$.
                \EndIf
                \If{$\vs_{t+1}\neq \varnothing$} \Comment{Penalise Abstentions if the version space is non-empty}
                    \For{$f \in \mathcal{V}_{t+1}$}
                         \State $a_t^f \gets \mathds{1}\{f(X_t) = \dk\}$
                         \State $w_{t+1}^f \gets w_t^f \cdot (1-\eta a_t^f).$
                     \EndFor
                \Else \Comment{$\vs_{t+1} = \varnothing$, and so revert the state}
                    \State $\vs_{t+1} \gets \vs_t.$
                    \State $\mathrm{Ctr}_t \gets \mathrm{Ctr}_{t-1}$.
                    \For{$f \in \vs_{t+1}$} 
                        \State $o_t^f \gets o_{t-1}^f.$
                        \State $w_{t+1}^f \gets w_{t}^f.$
                    \EndFor
                \EndIf
            \EndFor
        \end{algorithmic}
    \end{algorithm}

        \begin{algorithm}[t]
        \caption{\textsc{vue-prod-relaxed-time-adapted}}\label{alg:stoch_relax_adapt_to_T_ctr}
        \begin{algorithmic}[1]
            \State \textbf{Inputs}: $\mathcal{F},$ Tolerance $\epsilon$.
            \State \textbf{Initialise}: $\mathcal{V}_1 \gets \mathcal{F}; \forall t, \mathcal{U}_t \gets \varnothing;  \forall f \in \mathcal{F}, w_1^f \gets 1, o_0^f \gets 0; \mathrm{Ctr}_0 \gets 0$.
            \For{$t \in [1:T]$}
                \State $p_t \gets \min(0.1, 1/\sqrt{t}).$
                \State $\eta_t \gets p_t$.
                \State Sample $f_t \sim \pi_t = \frac{w_t^f\indi\{f \in \vs_t\}}{\sum_{f \in \vs_t} w_t^f}.$
                \State Toss $C_t \sim \mathrm{Bern}(p_t)$.
                \State $\displaystyle \pred_t \gets \begin{cases} \dk & C_t =1 \\ f_t(X_t) & C_t = 0\end{cases}.$
                \If{$\pred_t = \dk$} \Comment{Refine the version space if the exploratory coin is heads}
                        \State $\mathrm{Ctr}_t \gets \mathrm{Ctr}_{t-1} + 1$.
                        \For{$f \in \mathcal{V}_{t}$}
                            \State $o_t^f \gets o_{t-1}^f +  \indi\{f(X_t) \not\in \{\dk, Y_t\}$
                            \If{$o_t^f \le \epsilon \mathrm{Ctr}_{t} + \sqrt{2 \epsilon \mathrm{Ctr}_t}$} \Comment{Retain all $f$s that have error rate $< \epsilon$ w.h.p.}
                                \State $\mathcal{U}_t \gets \mathcal{U}_t \cup \{f\}.$
                            \EndIf
                        \EndFor
                        \State $\mathcal{V}_{t+1} = \mathcal{V}_t \cap \mathcal{U}_t$.
                \Else 
                    \State $\mathcal{V}_{t+1} \gets \mathcal{V}_t$.
                    \State $\forall f \in \mathcal{V}_{t+1}, o_t^f \gets o_{t-1}^f$
                    \State $\mathrm{Ctr}_t \gets \mathrm{Ctr}_{t-1}$.
                \EndIf
                \If{$\vs_{t+1}\neq \varnothing$} \Comment{Penalise Abstentions if the version space is non-empty}
                    \For{$f \in \mathcal{V}_{t+1}$}
                         \State $a_t^f \gets \mathds{1}\{f(X_t) = \dk\}$
                         \State $w_{t+1}^f \gets w_t^f \cdot (1-\eta_t a_t^f).$
                     \EndFor
                \Else \Comment{$\vs_{t+1} = \varnothing$, and so revert the state}
                    \State $\vs_{t+1} \gets \vs_t.$
                    \State $\mathrm{Ctr}_t \gets \mathrm{Ctr}_{t-1}$.
                    \For{$f \in \vs_{t+1}$} 
                        \State $o_t^f \gets o_{t-1}^f.$
                        \State $w_{t+1}^f \gets w_{t}^f.$
                    \EndFor
                \EndIf
            \EndFor
        \end{algorithmic}
    \end{algorithm}

\subsection{Regret Behaviour as Time-horizon in Varied.}

We use the hyperparameter $\epsilon = 0.01.$ For the sake of efficiency, we use the adaptive scheme Algorithm~\ref{alg:stoch_relax_adapt_to_T_ctr} that adapts to the time horizon, that instead varies $p$ with the number of rounds as $p_t = \min(0.1, \frac{1}{\sqrt{t}}), \eta_t = p_t$. This adaptation strategy is a standard way to handle varying horizons, and the observations obtained via this represent (and slightly overestimate) the regrets for when Algorithm \ref{alg:stoch_relax_ctr} is run with $p = \eta = \frac{1}{\sqrt{T}}$. A major advantage is that this significantly increases the efficiency of the procedure, since instead of re-starting the experiment for each time horizon, we can now run for one single time horizon, and obtain representative values of regret at smaller horizons by recording the values at checkpoints corresponding to these. In the plots, we ran for $T = 4000,$ and checkpointed every $250$ rounds.

\subsubsection{Excess Abstention Behaviour}

As noted in the main text, the excess abstention regret for both datasets is negative. This remains consistent with the theory, and likely arises since these datasets are, of course, not the worst case distributions. The excess abstentions regret are plotted below.

\begin{figure}[htb]
    \centering    
    \includegraphics[width = 0.45\textwidth]{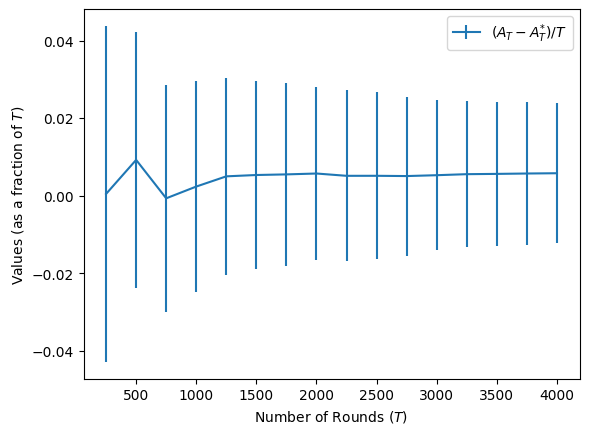}~\includegraphics[width=0.45\textwidth]{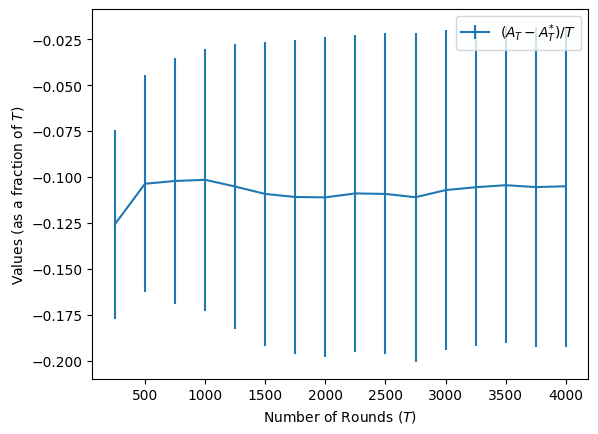}
    
    \caption{ Excess abstention regret, normalised by $T$, in the setting of Figure \ref{fig:vary_T} for CIFAR-10 (left) and GAS (right). The plots are averaged over $100$ runs, and one-standard-deviation error bars are drawn. Notice that the values are negative for GAS, and strongly dominated by the MMEA for CIFAR.}
    \label{fig:abs_vary_T}
\end{figure}

\subsection{Achievable Operating Points of Mistakes and Abstentions}

We use Algorithm \ref{alg:stoch_relax_ctr}, instantiated with $T=500$, and always choosing $\eta = p$. The particular values of $p, \epsilon $ that are scanned are, as listed in the main text, 20 equally spaced values of $p$ in the range $[0.015, 0.285]$, and 10 equally spaced values of $\epsilon$ in the range $[0.001, 0.046]$, giving in total 200 values of $(p, \varepsilon)$ pairs that are scanned over. 

The post-hoc batch operating points are obtained as follows: We first find the largest value of the number of mistakes that are attained by the online learner for some choice of $(p,\varepsilon)$. Call this $M$. The values attained were $M_{\textrm{CIFAR}} = 50$ and $M_{\textrm{GAS}} = 120$. Then, we then instantiated the set $\mathcal{M}_{\textrm{CIFAR}} = \{ 2, 3, \dots, 50\}$, and for $\mathcal{M}_{\textrm{GAS}} = \{2, 7, \dots, 117\}$. The density was chosen lower for GAS for visual pleasantness. 
Finally, for each $m \in \mathcal{M}_*,$ we run the post-hoc optimisation \[ a(m) := \min_{f \in \mathcal{F}} \sum_{t} \indi\{f(X_t) = \dk\} \quad \textrm{s.t. } \quad \sum_{t} \indi\{f(X_t) \not\in \{\dk,Y_t\} \} \le m.\] The resulting points $(a(m), m)$ are plotted as black triangles.

\textbf{Definition of MMEA} As stated in the main text, the mistake matched competitor is defined as follows: suppose that the scheme makes $M$ mistakes and $A$ abstentions over a stream. If the following program is feasible, then we define \begin{align*} A^*(m) &= \min_{f \in \mathcal{F}} \sum \indi\{f(X_t) = \dk\} \textrm{ s.t. } \sum \indi\{f(X_t) \not\in\{\dk, Y_t\} \} \le M.
\end{align*}
If not, then we take $A^*(M)$ to be the abstentions made by the least mistake $f$, which is the competitor in the rest of the section. Then we define \[ \mathrm{MMEA} = A - A^*(M).\]

\subsection{Sensitivity of the scheme to hyperparameters}\label{appx_sensitivity_wrt_eps}

Working in the setting of Figure \ref{fig:vary_T}, we show how the excess mistake and abstention regrets vary at $T = 4000$ (the final point) as $\epsilon$ is varied in Figure \ref{fig:vary_T_with_eps}. As expected, the excess mistakes increase roughly linearly with large $\epsilon$, but the data reflects subtle non-monotonicities in the same. The variation in abstentions is, as expected, essentially opposite to that of the mistakes.

Similarly, in Figure \ref{fig:vary_p_with_eps}, we show the operating points that can be achieved by varying $\epsilon$ for a fixed $p$, and by varying $p$ for a fixed $\epsilon$. We observe first that the variation with $\epsilon$ for a fixed $p$ is relatively regular, with larger $\epsilon$ increasing mistakes but decreasing abstentions at roughly the same rate, up to small variations. On the other, the behaviour with increasing $p$ for a fixed $\epsilon$ is much more subtle, and indicates that a sweet-spot of the coin-based exploration rate exists for each tolerance level.

Together, these plots indicate that the optimal tuning of $\epsilon$ and $p$ together can be subtle, and exploring how one can execute the same in an online way is an interesting open problem.

\begin{figure}
    \centering
    \includegraphics[width = 0.48\textwidth]{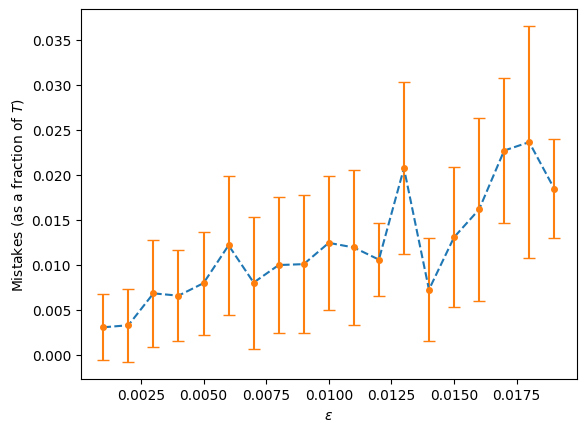}~\includegraphics[width = 0.48\textwidth]{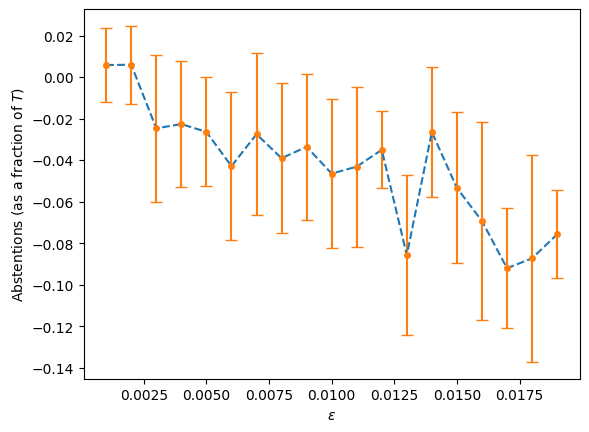}\\
    \includegraphics[width = 0.48\textwidth]{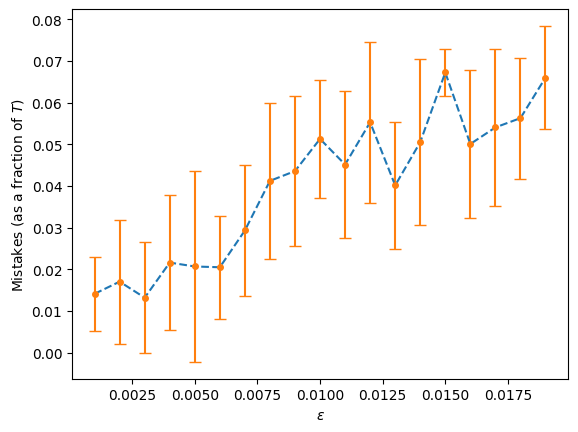}~\includegraphics[width = 0.48\textwidth]{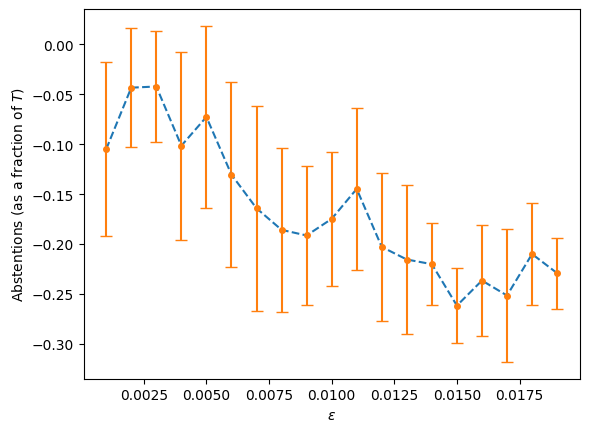}
    \caption{Senstivity with $\epsilon$ of the excess mistakes (left) and excess abstention (right) regrets at $T = 4000$ for CIFAR (top) and GAS (bottom) datasets. Points are averaged over 100 runs, and  one-standard-deviation error bars are included. }
    \label{fig:vary_T_with_eps}
\end{figure}

\begin{figure}
    \centering
    \includegraphics[width = 0.48\textwidth]{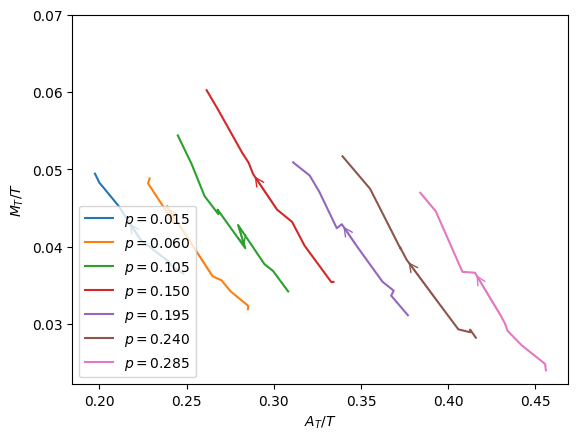}~\includegraphics[width = 0.48\textwidth]{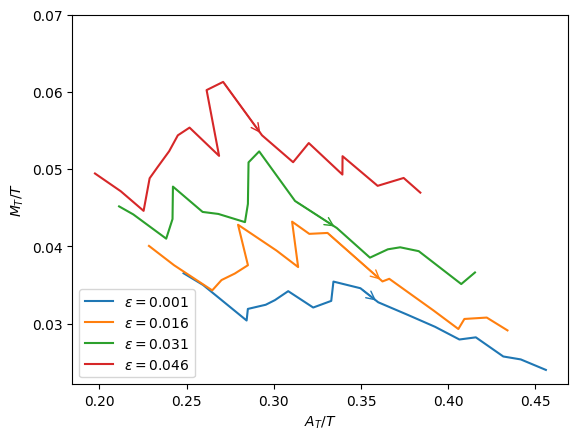}\\
    \includegraphics[width = 0.48\textwidth]{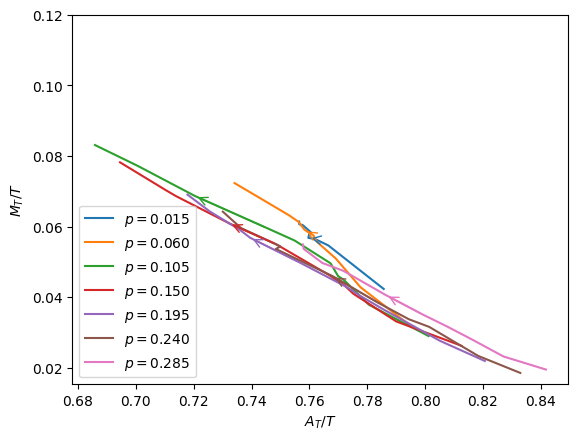}~\includegraphics[width = 0.48\textwidth]{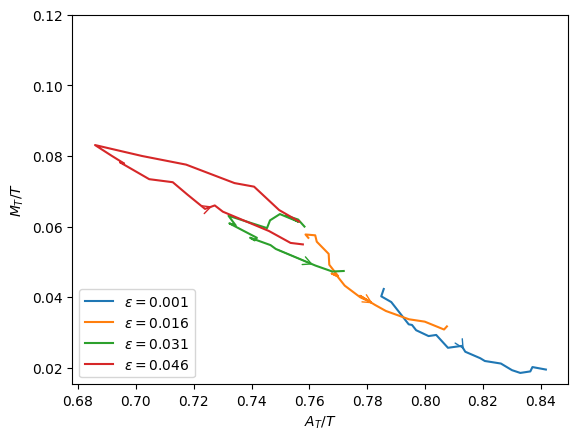}
    \caption{Illustration of how operating points achieved by the scheme vary as $p$ is changed for fixed values of $\epsilon$ (left) and as $\epsilon$ is changed for fixed values of $p$ (right), in the CIFAR (top) and GAS (bottom) datasets. The sets of $\epsilon$s and $p$s marking the traces is reduced with respect to Figure \ref{fig:vary_p} for the sake of legibility. The arrow denotes the direction of increasing the varied parameter.}
    \label{fig:vary_p_with_eps}
\end{figure}

\end{appendix}

\end{document}